\def\BibTeX{{\rm B\kern-.05em{\sc i\kern-.025em b}\kern-.08em
		T\kern-.1667em\lower.7ex\hbox{E}\kern-.125emX}}
\newtheorem{definition}{Definition}
\newtheorem{theorem}{Theorem}
\newtheorem{lemma}{Lemma}
\newcounter{example}
\newenvironment{example}[1][]{\refstepcounter{example}\par\medskip
	\noindent \textbf{Example~\theexample. #1} \rmfamily}{\medskip}
\definecolor{atomictangerine}{rgb}{1.0, 0.6, 0.4}
\definecolor{apricot}{rgb}{0.98, 0.81, 0.69}
\definecolor{antiquewhite}{rgb}{0.98, 0.92, 0.84}
\definecolor{Gray}{gray}{0.40}
\definecolor{LightCyan}{rgb}{0.88,1,1}
\definecolor{ashgrey}{rgb}{0.7, 0.75, 0.71}
\definecolor{darkgray}{gray}{0.80}
\newcolumntype{e}{>{\columncolor{Gray}}r}
	\title[Provably Tightest Linear Approximation for Robustness Verification of Sigmoid-like Neural Networks]{Provably Tightest Linear Approximation for\\ Robustness Verification of Sigmoid-like Neural Networks }
 	\author{Zhaodi Zhang}
 	 	\email{zdzhang@stu.ecnu.edu.cn}
 	\affiliation{%
      \institution{East China Normal University}
      \city{Shanghai}
      \country{China}
    }
 	\author{Yiting Wu}
 	 	\email{51205902026@stu.ecnu.edu.cn}
 	\affiliation{%
      \institution{East China Normal University}
      \city{Shanghai}
      \country{China}
    }
 	\author{Si Liu}
 	 	\email{si.liu@inf.ethz.ch}
 	\affiliation{%
      \institution{ETH Z{\"u}rich}
      \city{Z{\"u}rich}
      \country{Switzerland}
    }
 	\author{Jing Liu}
 	 	\email{jliu@sei.ecnu.edu.cn}
 	\affiliation{%
      \institution{Shanghai Key Laboratory of Trustworthy Computing,\\East China Normal University}
      \city{Shanghai}
      \country{China}
    }
 	\author{Min Zhang}
 	\email{zhangmin@sei.ecnu.edu.cn}
 	\affiliation{%
      \institution{East China Normal University, Shanghai Institute of Intelligent Science and Technology}
      \city{Shanghai}
      \country{China}
    }
\begin{abstract}
The robustness of deep neural networks is  crucial to modern AI-enabled systems and should be formally verified. Sigmoid-like neural networks have been adopted in a wide range of applications. Due to their non-linearity, Sigmoid-like activation functions are usually \emph{over-approximated} for efficient verification, which inevitably introduces imprecision. Considerable efforts have been devoted to finding the so-called \emph{tighter} approximations to obtain more precise verification results. However, existing tightness definitions are heuristic and lack theoretical foundations. We conduct a thorough empirical analysis of existing \emph{neuron-wise} characterizations of tightness and reveal that they are superior only on specific neural networks. We then introduce  the notion of \emph{network-wise tightness} as a unified tightness definition and show that computing network-wise tightness is  a complex non-convex  optimization problem. We bypass the complexity from different perspectives via two efficient, provably tightest approximations. The results demonstrate the promising  performance achievement of our approaches over state of the art: (i) achieving up to 251.28\% improvement to certified lower robustness bounds; and (ii) exhibiting notably more precise verification results on convolutional networks. 
\vspace{-1mm}
\end{abstract}
\begin{document}	
	
	\maketitle
	
	\section{Introduction}
The reliability concerns about deep neural networks (DNNs) are increasing more drastically than ever, especially as such networks  
are being embedded into software systems to make them intelligent. 
 Considerable efforts from both AI and software engineering communities have been devoted
to achieving \emph{robust} DNNs by leveraging testing and verification techniques  \cite{wang2021beta,tjeng2017evaluating,weng2018towards,DunnPKM21,PaulsenWWW20,BalutaCMS21,YanCZTWW21,singh2019abstract}. Among these attempts, formal methods have been demonstrated effective in offering certified robustness guarantees,  giving birth to an emerging research field called \textit{Trustworthy AI} \cite{wing2021trustworthy}. One distinguishing feature of formal methods is that they could provide rigorous proofs of correctness  automatically when the properties are satisfied or disprove them by counterexamples (i.e., witnesses to the violations) \cite{clarke1997model,baier2008principles}. Robustness is an important correctness property in DNN verification: Minor modifications to the neural network's inputs must \emph{not} alter its outputs~\cite{carlini2017towards}. Guaranteeing robustness is indispensable to prevent AI-enabled systems from environmental perturbations and adversarial attacks.

Formal robustness verification of DNNs has been well studied in recent years \cite{pulina2010abstraction,katz2017reluplex,weng2018towards,tjeng2017evaluating,gehr2018ai2,wang2018efficient,ehlers2017formal,PaulsenWWW20,WengZCSHDBD18,wang2021beta}. Most efforts are focused on the \emph{ReLU networks} that  only use the simple piece-wise ReLU activation function. Despite their wide adoptions in modern AI-enabled systems, another notable class of S-shaped (or Sigmoid-like) activation functions, such as Sigmoid, Tanh, and Arctan, have not attracted much attention yet. Due to their non-linearity, Sigmoid-like activation functions are far more complex to be verified. A \textit{de facto} solution is to over-approximate such functions by linear bounds and to transform the verification problem into efficiently solvable linear programming. 
Many state-of-the-art DNN verification techniques, e.g., abstract interpretation \cite{gehr2018ai2,singh2019abstract}, symbolic interval propagation~\cite{wang2018efficient}, model checking  \cite{pulina2010abstraction}, differential verification \cite{PaulsenWWW20}, reachability and output range analysis \cite{tran2020nnv,dutta2018output}, are based on  linear approximation.  

Over-approximation inevitably introduces  imprecision, rendering approximation-based verification incomplete: \emph{Unknown} results may be returned when the neural network's robustness cannot be verified. Considerable efforts have been devoted to finding the so-called \emph{tighter} approximations to achieve more precise verification results. For example,  a larger certified lower robust bound \cite{boopathy2019cnn,lyu2020fastened} (the perturbation distance under which a neural network is proved robust against any allowable perturbation) is preferable in approximation. Several characterizations of tightness and  approximation approaches have been proposed for Sigmoid-like activation functions~\cite{boopathy2019cnn,zhang2018efficient,lyu2020fastened,wu2021tightening,HenriksenL20,lin2019robustness}. However, they are all heuristic and lack theoretical foundations for the individual outperformance.
 
We conduct a thorough empirical analysis of existing approaches and reveal that they are  superior 
only on specific neural networks.
In particular, we have found that the claimed tighter approximation actually produces smaller certified lower bounds according to the tightness defined and observed frequent occurrences of such cases.

Motivated by these observations, we introduce the notion of \emph{network-wise tightness} as a \emph{unified} tightness definition to characterize linear approximations of Sigmoid-like activation functions. This new definition ensures that a tighter approximation can always compute larger certified lower bounds (i.e., larger safe radius). However, we show that it unfortunately implies that computing the tightest approximation is essentially a network-wise non-convex optimization problem~\cite{lyu2020fastened}, which is hard to solve in practice \cite{pathak2020non}. 
 
We bypass the complex optimization problem from two different perspectives, depending on the neural network architecture. For the networks \emph{with only one hidden layer}, we leverage a gradient-based searching algorithm for computing the tightest approximations. Regarding the networks  \emph{with multiple hidden layers}, based on our  empirically study of  the state-of-the-art tools, we  have gained an insight that \emph{a larger robust  bound can be computed when the intervals keep to be tighter during the layer-by-layer propagation}.  Based on this insight, we propose a \emph{neuron-wise} tightest approximation and prove that it guarantees the \emph{network-wise tightest approximation} when the networks are of non-negative weights. Such networks have been demonstrated suitable in a wide range of applications such as effective defense for adversarial attacks in malware and spam detection \cite{DBLP:journals/corr/abs-1806-06108,ceschin2019shallow,kargarnovin2021mal2gcn} and  balancing accuracy and robustness in autoencoding~
\cite{DBLP:conf/icassp/NeacsuPB20,ali2017automatic}.

We have implemented a prototype of our approach called \textsc{NeWise}\footnote{Our code is available at \url{https://github.com/FormalAIze/NeWise.git}.} and  extensively compared it to three state-of-the-art tools, namely \textsc{DeepCert} \cite{wu2021tightening}, \textsc{VeriNet} \cite{HenriksenL20}, and  \textsc{RobustVerifier} \cite{lin2019robustness}. Our experimental results show that \textsc{NeWise} (i) achieves up to 251.28\% improvement to certified lower robustness bounds in the provably tightest cases and (ii) exhibits up to 122.22\% improvement to  certified lower robustness bounds on convolutional networks.

To summarize, this paper makes three major contributions:

\begin{enumerate}

\item We have introduced a novel unified definition of \emph{network-wise tightness} to characterize the tightness of linear approximations for neural network robustness  verification. 

\item We have identified two cases where we can efficiently achieve  provably tightest approximations; the corresponding approaches have been proposed. 

\item We have implemented a verification tool and conducted comprehensive evaluation on its effectiveness and efficiency over three state-of-the-art verifiers.  

\end{enumerate}

The remainder of this paper proceeds as follows: 
Section \ref{sec:prelim} gives preliminaries on  robustness verification of neural networks. 
Section \ref{sec:linear} shows the tightness measurements of linear approximations and introduce our notion of network-wise tightness. Sections \ref{sec:1layer} and \ref{sec:appr} present our provably tightest approximations from two different perspectives, respectively. 
Section \ref{sec:exp} describes our evaluation results. We discuss related work in Section \ref{sec:work} and conclude in Section \ref{sec:con}.

	\section{Preliminaries}
\label{sec:prelim}

\begin{figure}[t]
	\centering
	\includegraphics[width=0.48\textwidth]{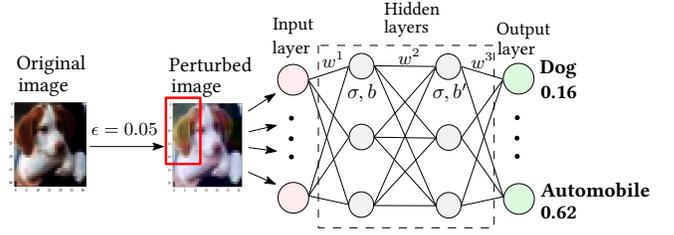}
	\caption{A perturbed image of a dog is misclassified to an automobile with 62\% probability in 0.05 perturbation radius. 
	}
	\label{robust_veri_pro}
	\vspace{-3mm}
\end{figure}

\subsection{Robustness Verification of Neural Networks}

\subsubsection{Deep Neural Network}

A deep neural network is a directed network, where the nodes are called neurons and arranged layer by layer. Each neuron is associated with an activation function $\sigma(x)$ and a bias $b$. 
Except for the first layer, the neurons on a layer are connected to those on the preceding layer, as shown in Figure \ref{robust_veri_pro}. 
Every edge is associated with a weight, which is computed by training. 
The first and last layers are called input and output layers, respectively. The others between them are called hidden layers.

The \emph{execution} of a neural network follows the style of layer-by-layer propagation. 
Each neuron on the input layer admits a number. The number is multiplied by the weights on the edges and then passed to the successor neurons on the next layer. All the incoming numbers are summed. The summation is fed to the activation function $\sigma$ and the output of $\sigma$ is added with the bias $b$. The result is then propagated to the next layer until reaching the output layer.

Formally, a $k$-layer neural network is a function $f:\mathbb{R}^n\rightarrow \mathbb{R}^m$ of the form $f^k\,\circ\,\sigma^{k-1}\,\circ\,\ldots\,\circ\,\sigma^{1}\,\circ\,f^{1}$, with $\sigma^t$ being a non-linear and differentiable activation function for $t$-th layer. The function $f^t$ is either an affine transformation \emph{or} a convolutional operation: 
\begin{linenomath*}   
	\begin{align}
		f(x)&=Wx+b, \tag{Affine Transformation}\\
		f(x)&=W*x+b, \tag{Convolutional Operation}
	\end{align}
\end{linenomath*}   
where $W$, $b$, and $*$ refer to the weight matrix,  the bias vector, and the convolutional product, respectively. In this work, we focus on the networks with the Sigmoid-like activation functions i.e., Sigmoid, Tanh, and Arctan, which are defined as follows, respectively. 
\begin{align*}
\sigma(x) = \frac{1}{1+e^{-x}}, \hspace{5mm}
\sigma(x) = \frac{e^x - e^{-x}}{e^x + e^{-x}},
\hspace{5mm} \sigma(x) = tan^{-1}(x) \notag 
\end{align*}

The output of a neural network  $f$ is a vector of $m$ floating numbers between 0 and 1, denoting the probabilities of classifying an input to the $m$ labels. Let $S$ be the set of $m$ classification labels for the network $f$. We use ${\mathcal L}(f(x))$ to represent the output label for the input $x$ with 
\begin{linenomath*}   
\begin{align}\notag 
{\mathcal L}(f(x))=\mathop{\arg\max}_{s\in S} f(x)[s]. 	
\end{align}
\end{linenomath*}   
Intuitively, ${\mathcal L}(f(x))$ returns a label $s$ in $S$ such that $f(x)[s]$ is maximal among the numbers in the output vector. 

\subsubsection{Robustness and Robustness Verification}

Neural networks are essentially ``programs'' composed by computers by fine-tuning the weights in the networks from training data. Unlike the handcrafted programs developed by programmers, neural networks lack formal requirements and are almost inexplicable, making it very challenging to formalize and verify their properties.

A neural network is called \emph{robust} if reasonable perturbations to its inputs do not alter the classification result. A perturbation is typically measured by the distance between the perturbed input $x'$ and the original one $x$ by using $\ell_p$-norm, denoted by $||x-x'||_{p}\triangleq \sqrt[p]{|x_1-x'_1|^p+\ldots+|x_n-x'_n|^p}$, where $p$ can be $1$, $2$ or $\infty$, and $n$ is the length of the vectors $x$. In this work, we consider the most general case when $p=\infty$.

\begin{example}
	We consider an example to explain how a perturbed image is misclassified. As shown in Figure \ref{robust_veri_pro}, a normal image of a dog  can be correctly classified by a neural network. We assume that the image can be perturbed within a 0.05 distance under $\ell_\infty$-norm. There exists a perturbed image such that when it is fed into the network, the outputs of the two neurons labeled by \emph{dog} and \emph{automobile} are 0.16 and 0.62, respectively. It indicates that the image is classified to a dog (resp. automobile) with the probability of 16\% (reps. 62\%). 
	Therefore, it is classified to be an automobile, although it still represents a dog to human eyes, apparently. 
\end{example}

The robustness of a neural network can be quantitively measured by a lower bound $\epsilon$, which refers to a safe perturbation distance such that any perturbations below $\epsilon$ have the same classification result as the original input to the neural network.

\begin{definition}[Local Robustness]\label{robustness property}
Given a neural network $f$, an input $x_{0}$, and a bound  $\epsilon$ under $\ell_p$-norm, $f$ is called robust  w.r.t. $x_{0}$ iff $\mathcal{L}(f(x))=\mathcal{L}(f(x_{0}))$ holds for each $x$ such that $||x-x_{0}||_{p}\leq\epsilon$. Such $\epsilon$ is called a certified lower bound.
\end{definition}

The twin problems of verifying $f$'s  robustness are: (i) to  prove that, for each $x$ satisfying   $||x-x_{0}||_{p}\leq\epsilon$,  
\begin{align}\label{eq:rob}
f_{s_0}(x)-f_{s}(x)>0
\end{align}
holds for each $s\in S-\{s_0\}$, where  $s_0={\mathcal L}(f(x_0))$ and $f_{s}(x)$ returns the probability, i.e., $P({\mathcal L}(f(x))=s)$, of classifying $x$ to the label $s$ by $f$; and (ii) to  compute a certified lower bound --- a larger certified lower bound implies a more precise robustness verification result. As directly computing $\epsilon$ is difficult due to the non-linearity of the constraint (\ref{eq:rob}), 
most of the state-of-the-art approaches \cite{boopathy2019cnn,zhang2018efficient,wu2021tightening} adopt the efficient binary search algorithm to first determine a candidate $\epsilon$ and then check whether (\ref{eq:rob}) is true or false on $\epsilon$.

\subsection{Approximation-based Robustness Verification}

A neural network $f$ is  highly non-linear due to the inclusion of activation functions.  Proving Formula (\ref{eq:rob}) is computationally expensive, e.g., NP-complete even for the simplest fully connected ReLU networks \cite{katz2017reluplex,salzer2021reachability}. 
Many approaches have been investigated to improve the verification efficiency while sacrificing completeness. Representative methods include interval analysis  ~\cite{wang2018formal}, abstract interpretation~\cite{singh2019abstract,gehr2018ai2}, and output range estimation~\cite{dutta2018output,xiang2018output}, etc. 
The technique underlying these approaches is to over-approximate the non-linear activation functions using linear constraints, which can be more efficiently solved than the original ones.

Instead of directly proving Formula (\ref{eq:rob}), the approximation-based approaches  over-approximate both $f_{s_0}(x)$ and $f_{s}(x)$ by two linear constraints and prove that the lower linear bound  $f_{L,s_0}(x)$ of $f_{s_0}(x)$ is greater than the upper linear bound   $f_{U,s}(x)$ of $f_{s}(x)$. Apparently, $f_{L,s_0}(x)-f_{U,s}(x)>0$ is a sufficient condition of Formula (\ref{eq:rob}), and it is significantly more efficient to prove or disprove. Therefore, it is widely adopted in neural network verification \cite{wu2021tightening,boopathy2019cnn,HenriksenL20,lin2019robustness}, although it may produce false positives when it is disproved.

\begin{definition}[Upper/Lower linear bounds]\label{def:linearbounds}
	Let  $\sigma(x)$ be a non-linear function with $x\in [l,u]$, $\alpha_{L},\alpha_{U},\beta_{L},\beta_{U} \in \mathbb{R}$, and
		\begin{align}
			h_{U}(x)=\alpha_{U}x+\beta_{U},\qquad h_{L}(x)=\alpha_{L}x+\beta_{L}. 
		\end{align} 
	$h_{U}(x)$ and $h_{L}(x)$ are called upper and lower linear bounds of
	$\sigma(x)$ if the following condition holds:
		\begin{align}\label{linearboundcondition}
			\forall x \in [l,u], \quad h_{L}(x)\leq  \sigma(x)\leq h_{U}(x).
		\end{align}
\end{definition}

Over-approximating the non-linear activation functions using linear lower and upper bounds is the key to the approximation of a neural network. For each activation function $\sigma$ on a domain $[l,u]$, we define an upper linear bound $h_U$ and a lower one $h_L$ to ensure that for all $x$ in $[l,u]$, $\sigma(x)$ is enclosed in $[h_L(x),h_U(x)]$. 

\begin{figure}[t]
	\includegraphics[width=0.46\textwidth]{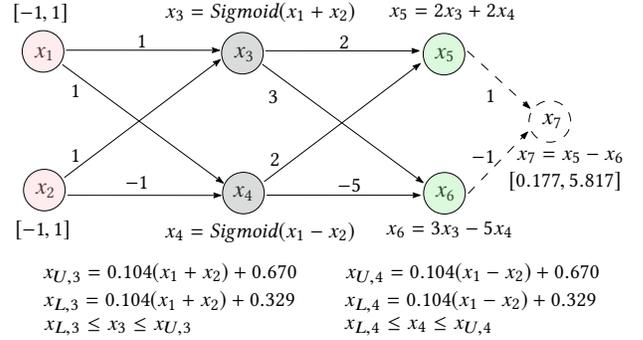}
	\caption{An example of approximation-based verification.}
	\label{fig:veriexample}
	\vspace{-3mm}
\end{figure}

Given an input range as in Definition \ref{eq:rob}, the output ranges of a network are computed by propagating the output interval of each  neuron as in Definition \ref{def:linearbounds} to the output layer.

\begin{example}\label{exp:verify}
	We consider an example of verifying a simple neural network based on  approximation, as shown in Figure \ref{fig:veriexample}. The original verification problem is to prove that for any input $(x_1,x_2)$ with $x_1\in[-1,1]$ and $x_2\in[-1,1]$, it is always classified to the label of neuron $x_5$. That is equivalent to proving that the output of the auxiliary neuron  $x_7=(x_5-x_6)$ is always greater than 0. We define the linear upper/lower bounds $x_{U,3},x_{L,3}$ and $x_{U,4},x_{L,4}$ to over-approximate $x_3$ and $x_4$, respectively. It suffices to prove that $x_{L,5}-x_{U,6}>0$ is always true. We can over-estimate the output interval of $x_{L,5}-x_{U,6}$ is $[0.177,5.817]$ using $x_{U,3},x_{L,3}$ and $x_{U,4},x_{L,4}$ and consequently prove the robustness of the network for all the inputs in $[-1,1]\times [-1,1]$. 
\end{example}

\begin{figure*}[t]
	\centering
	\begin{subfigure}{0.24\textwidth}
		\includegraphics[width=\textwidth]{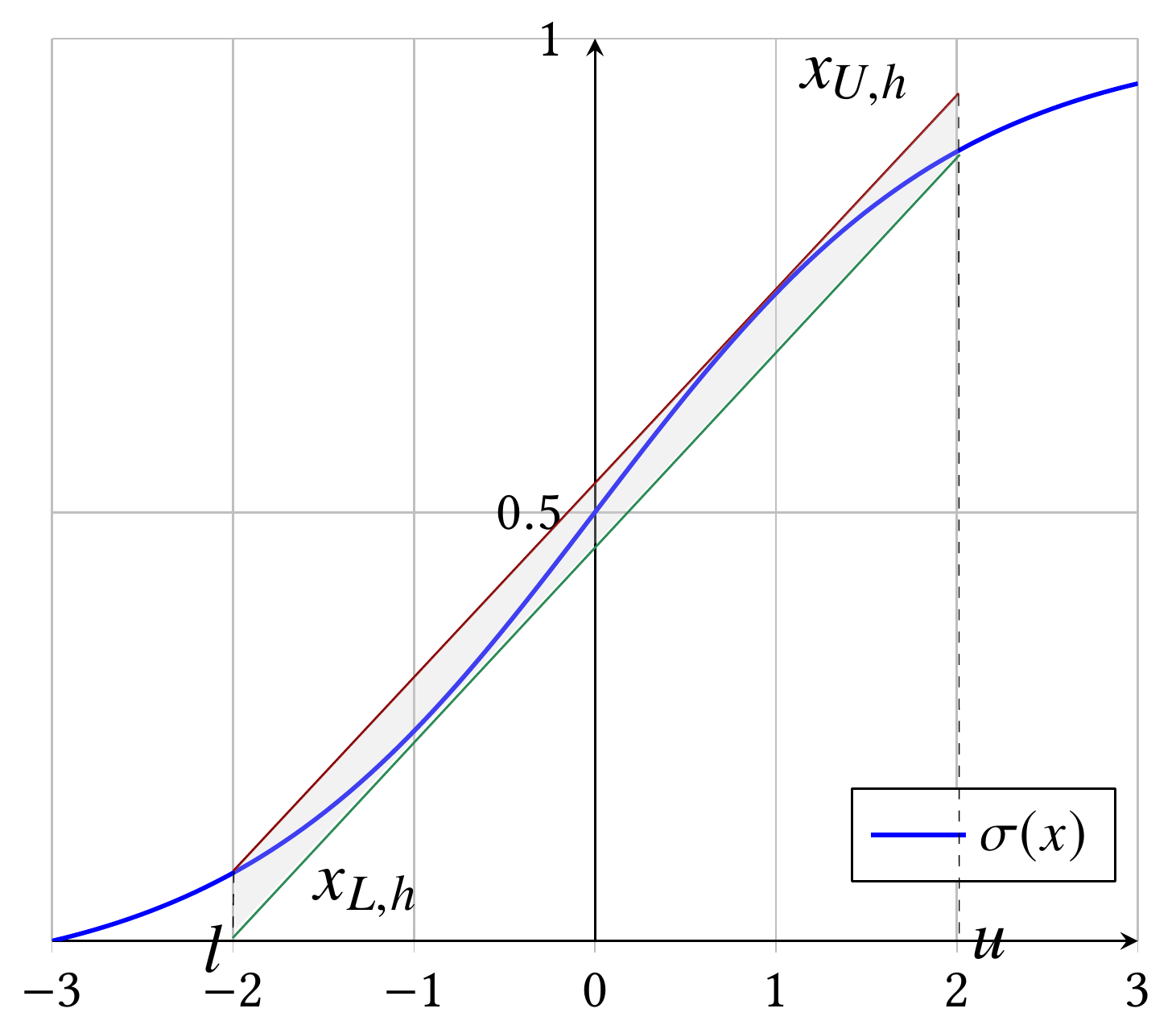}
		\caption{Minimal area and parallel lines.}
		\label{fig:minimal}
	\end{subfigure}
	~\begin{subfigure}{0.24\textwidth}
		\includegraphics[width=\textwidth]{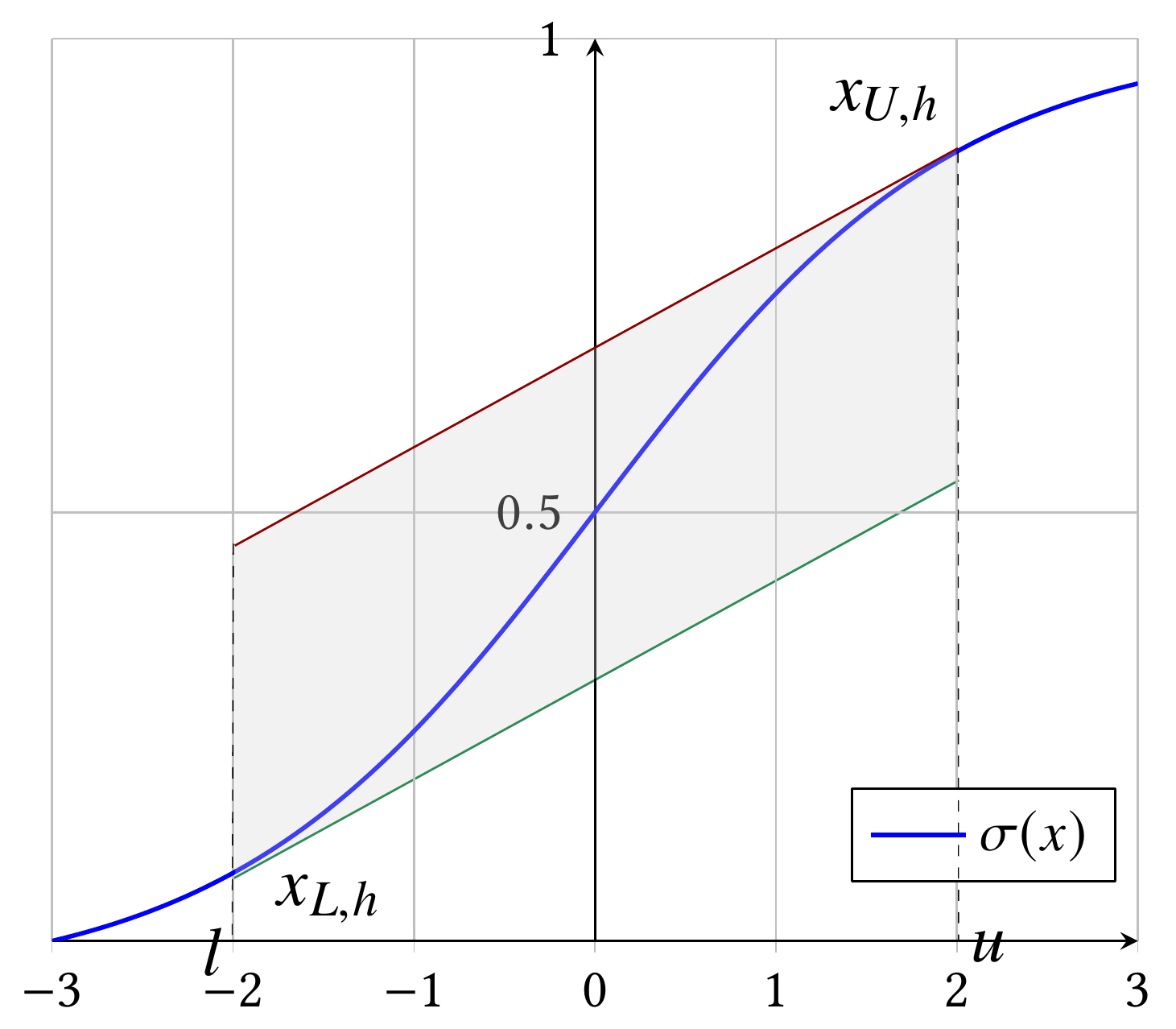}
		\caption{Endpoints (our work).}
		\label{fig:endpoints}	
	\end{subfigure}
	~\begin{subfigure}{0.24\textwidth}
		\includegraphics[width=\textwidth]{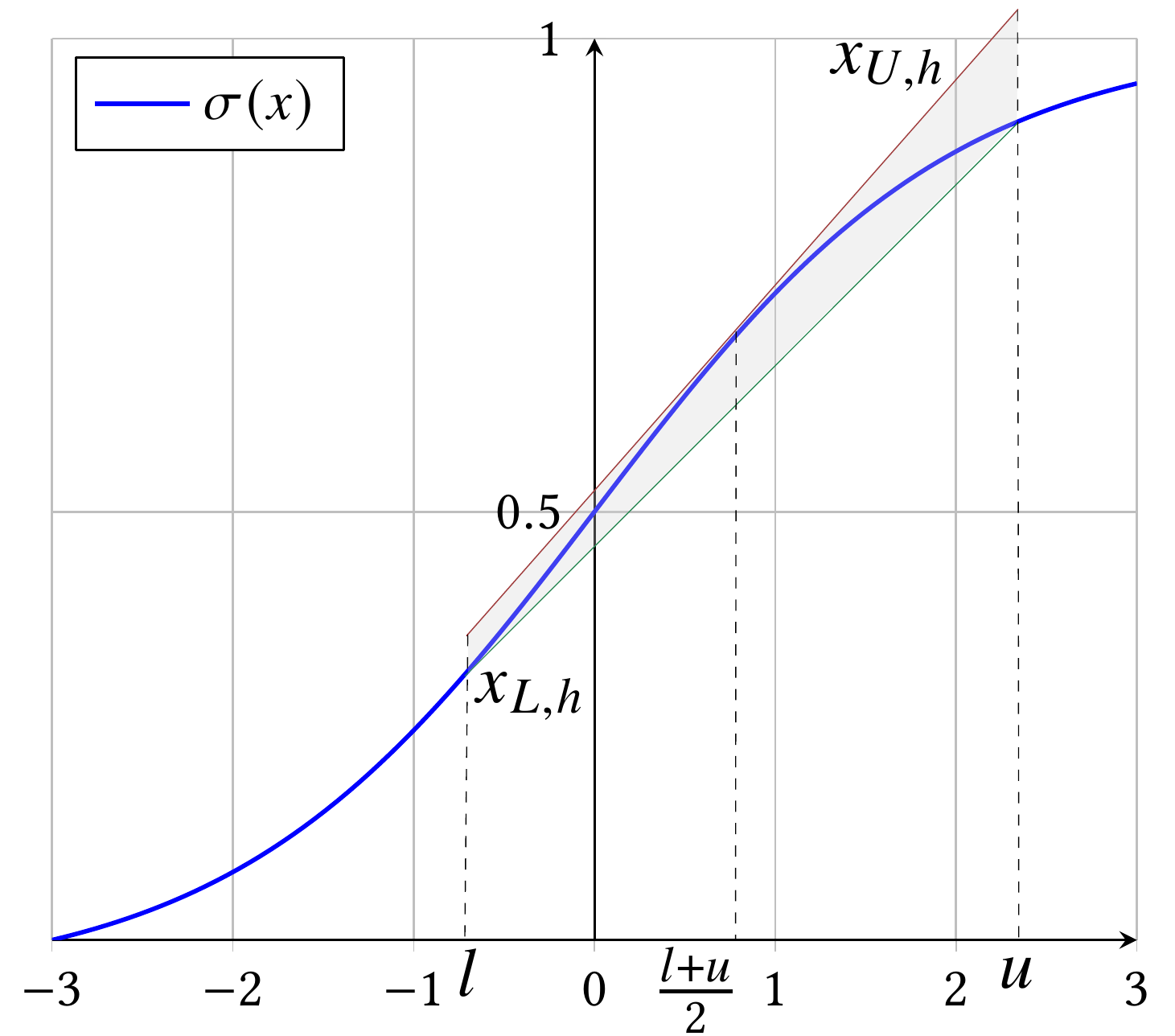}
		\caption{Minimal area \cite{HenriksenL20}.}
		\label{fig:middle}	
	\end{subfigure}
	~\begin{subfigure}{0.24\textwidth}	
		\includegraphics[width=\textwidth]{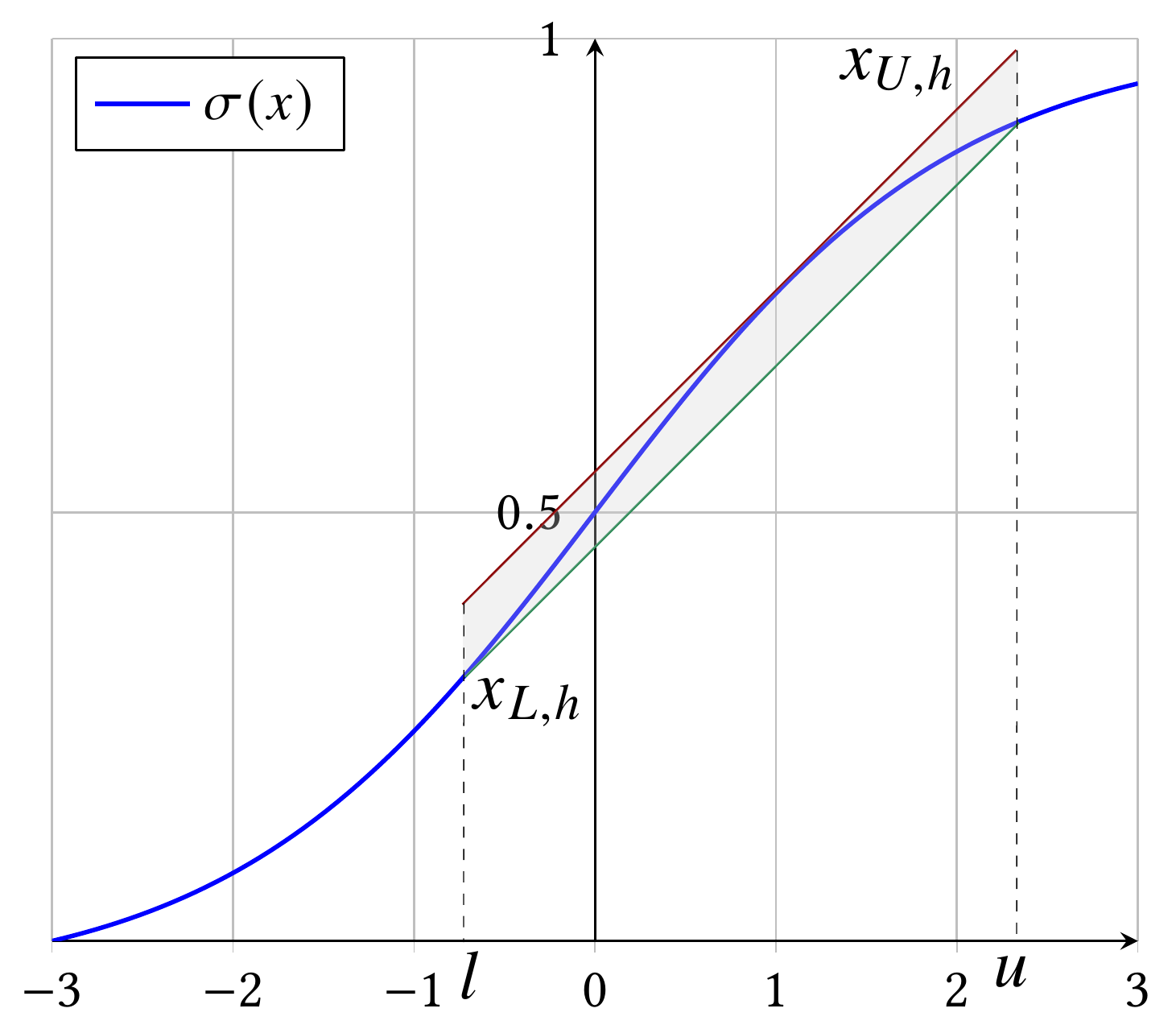}
		\caption{Parallel lines \cite{wu2021tightening}.}
		\label{fig:parallel}	
	\end{subfigure}
	\label{fig:approximations}
	\vspace{-2mm}
	\caption{Different approximations according to the domain of $\sigma(x)$ and their tightness definitions.}   
	\vspace{-2mm}
\end{figure*}

Note that it is not necessary to approximate an activation function using only one linear upper or lower bound. One may consider piece-wise linear bounds made up of a sequence of linear segments to approximate the function more tightly by being closer to it. However, the piece-wise way causes the number of constraints to blow up exponentially when propagated layer by layer \cite{singh2019abstract}. It would drastically reduce the verification scalability. Thus, over-approximating an activation function using one upper linear bound and one lower linear bound is the most efficient and widely-adopted choice for the approximation-based robustness verification approaches.

	\section{Linear Approximation Approaches}\label{sec:linear}
In this section, we analyze the tightness issue of existing approximation approaches and formally define a unified network-wise tightness to characterize the approximations. The network-wise tightness guarantees that output neurons can produce precise output ranges. 
%\vspace{-1mm}
\subsection{The Tightness Issue of Approximations}\label{sub:tightness}
As approximation inevitably introduces overestimation, defining the \emph{tightest} possible approximation is crucial to obtaining precise verification results. Several approximation approaches have been proposed under different strategies.

Henriksen et al. \cite{HenriksenL20} proposed to measure the tightness of approximations using the enclosed area between the bound and the approximated function. An approximation is tighter if the corresponding area is smaller than another. 
By this definition, the approximations to $x_3,x_4$ should be the following linear bounds:
\begin{align}
	x'_{U,3}&= 0.204(x_1+x_2)+0.527,\\
	x'_{L,3}&= 0.204(x_1+x_2)+0.472,\\
	x'_{U,4}&= 0.204(x_1-x_2)+0.527,\\
	x'_{L,4}&= 0.204(x_1-x_2)+0.472.
\end{align}
Figure \ref{fig:minimal} shows the bounds graphically. Apparently, they are closer to the activation function on the interval $[-2,2]$. Surprisingly, using the \textit{tighter} linear bounds the output range of $x_7$ is $[-0.079, 6.073]$, by which we cannot prove and disprove the robustness.  
Wu and Zhang adopted the same strategy for this case in their recent work~\cite{wu2021tightening}. In Example \ref{exp:verify}, we adopt a new strategy by taking the tangent lines at the two endpoints as its upper and lower bounds, as shown in Figure \ref{fig:endpoints}. 
We obtain a 
% preciser 
smaller output range by these bounds, although they are  far less tight than the one in Figure \ref{fig:minimal}.

In another case shown in Figure \ref{fig:middle}, Henriksen et al. \cite{HenriksenL20} proved that the tangent line at the middle point when $x=\frac{l+u}{2}$ is the tight upper bound because the enclosed area between it and the activation function is minimal. In Wu and Zhang's approach, they adopted the tangent line that is parallel to the lower bound as its upper bound, as shown in Figure \ref{fig:parallel}.  
Some other approaches such as \cite{boopathy2019cnn,zhang2018efficient,lin2019robustness} adopt similar approximation strategies, but they have been experimentally proved not as tight as the ones in \cite{HenriksenL20,wu2021tightening}. 

Lyu et al. \cite{lyu2020fastened} proposed a gradient-based searching  approach for computing a tighter approximation if the approximation can produce tighter input intervals for the following neurons. 
However, the experimental results in the work \cite{wu2021tightening} show that this approach neither guarantees it always produces larger certified lower robust bound than other approaches and its scalability is rather limited due to the complexity of the searching algorithm for each neuron.

\begin{table}[t]
	\centering
	%	\vspace{-3mm}
	\footnotesize 
	\setlength{\tabcolsep}{6.3pt}
	\caption{Tightness evaluation of state of the art: \textsc{DeepCert}\cite{wu2021tightening}, \textsc{VeriNet}\cite{lyu2020fastened}, and \textsc{RobustVerifier}\cite{lin2019robustness}.
		$\mbox{CNN}_{t-c}$ denotes a CNN with $t$ layers and $c$ filters of size 3$\times$3. The models are pre-trained \cite{singh2019abstract,wu2021tightening, tjx_models} by, e.g., \textsc{DeepPoly} \cite{singh2019abstract}.	
		\vspace{-2mm}
		}
	\begin{tabular}{|c|l|r|r|r|r|}
		\hline 
		\multirow{2}{*}{\textbf{Dataset}} & \multirow{2}{*}{\textbf{Model}} & \multirow{2}{*}{\textbf{\#Neurons}} &   \multicolumn{3}{c|}{\textbf{Certified Lower Bound  (Average)}}  \\
		\hhline{~~~---}
		& & & \textsc{DeepCert} & \textsc{VeriNet}  & \textsc{Rob.Ver.}  \\
		\hline
		\hline
		\multirow{11}{*}{MNIST}  
		& 3x50                 & 160    & 0.0076 & \cellcolor{apricot}0.0077 & 0.0065   \\
		& 3x100                & 310    & 0.0086 & \cellcolor{apricot}0.0087 & 0.0074   \\
		& 3x200                & 610    & 0.0091 & \cellcolor{apricot}0.0092 & 0.0079   \\ 
		& 5x100                & 510    & 0.0061 & \cellcolor{apricot}0.0062 & 0.0052 \\
		& 6x500                & 3,010  & \cellcolor{apricot}0.0778 & 0.0776 & 0.0665  \\
		& $\mbox{CNN}_{3-2}$  & 2,514  & 0.0579 & \cellcolor{apricot}0.0580 & 0.0569  \\
		& $\mbox{CNN}_{3-4}$  & 5,018  & \cellcolor{apricot}0.0473 & 0.0472 & 0.0464  \\
		& $\mbox{CNN}_{4-5}$  & 8,680  & 0.0539 & \cellcolor{apricot}0.0543 & 0.0522  \\
		& $\mbox{CNN}_{5-5}$  & 10,680 & 0.0548 & \cellcolor{apricot}0.0550 & 0.0513  \\
		& $\mbox{CNN}_{6-5}$  & 12,300 & \cellcolor{apricot}0.0590 & 0.0588 & 0.0541  \\
		& $\mbox{CNN}_{8-5}$  & 14,570 & 0.0878 & \cellcolor{apricot}0.0882 & 0.0685  \\
		\hline
		\multirow{5}{*}{\makecell{Fashion\\ MNIST}}  
		& 3x50                & 160   & 0.0101 & \cellcolor{apricot}0.0102 & 0.0086   \\   
		& 5x100               & 510   & 0.0078 & \cellcolor{apricot}0.0079 & 0.0066  \\
		& $\mbox{CNN}_{4-5}$ & 8,680 & \cellcolor{apricot}0.0721 & 0.0720 & 0.0666  \\
		& $\mbox{CNN}_{5-5}$ & 10,680& 0.0676 & \cellcolor{apricot}0.0677 & 0.0605  \\
		& $\mbox{CNN}_{6-5}$ & 12,300& \cellcolor{apricot}0.0695 & 0.0691 & 0.0627  \\
		\hline
		\multirow{4}{*}{CIFAR10} 
		& 3x50                & 160    & 0.0045 & \cellcolor{apricot}0.0046 & 0.0042  \\  
		& 5x100               & 510    & \cellcolor{apricot}0.0038 & 0.0037 & 0.0033  \\ 
		& $\mbox{CNN}_{3-2}$ & 3,378  & 0.0312 & \cellcolor{apricot}0.0313 & 0.0311  \\ 
		& $\mbox{CNN}_{6-5}$ & 17,110 & \cellcolor{apricot}0.0224 & 0.0223 & 0.0212  \\
		\hline
		%	\vspace{-4mm}
	\end{tabular}
% 	\vspace{-5mm}
	\label{certified lower bound results on general models}
	\vspace{-4mm}
\end{table}

\begin{figure*}	
	\setlength{\tabcolsep}{1pt}
	\begin{tabular}{rrrrrr}
		\begin{subfigure}{0.160\linewidth}
			\centering
			\includegraphics[width=\textwidth]{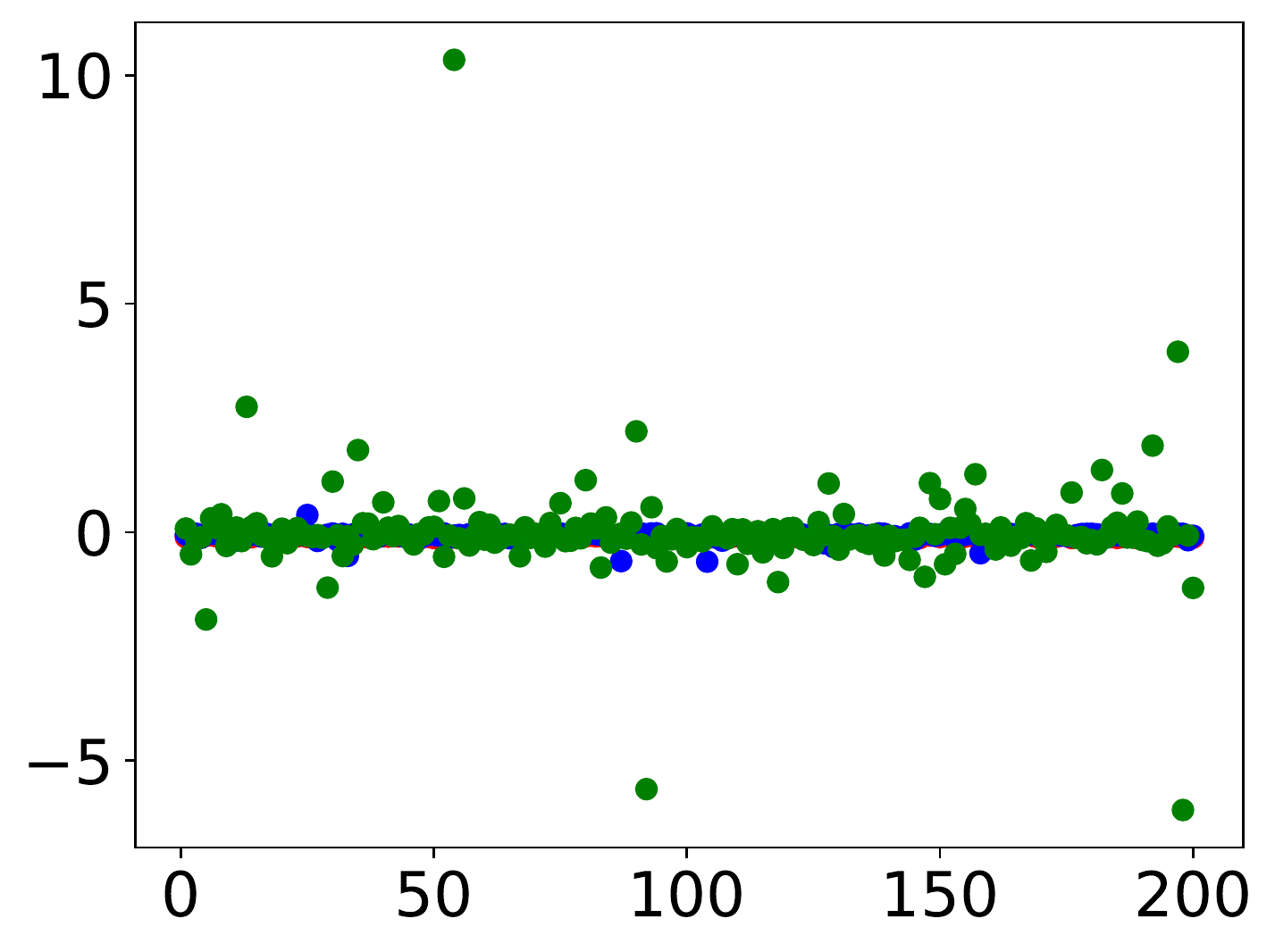}
			\caption{Layer 2 of $N_1$}
			\label{fig:layer2}
		\end{subfigure} &
		\begin{subfigure}{0.163\linewidth}
			\centering
			\includegraphics[width=\textwidth]{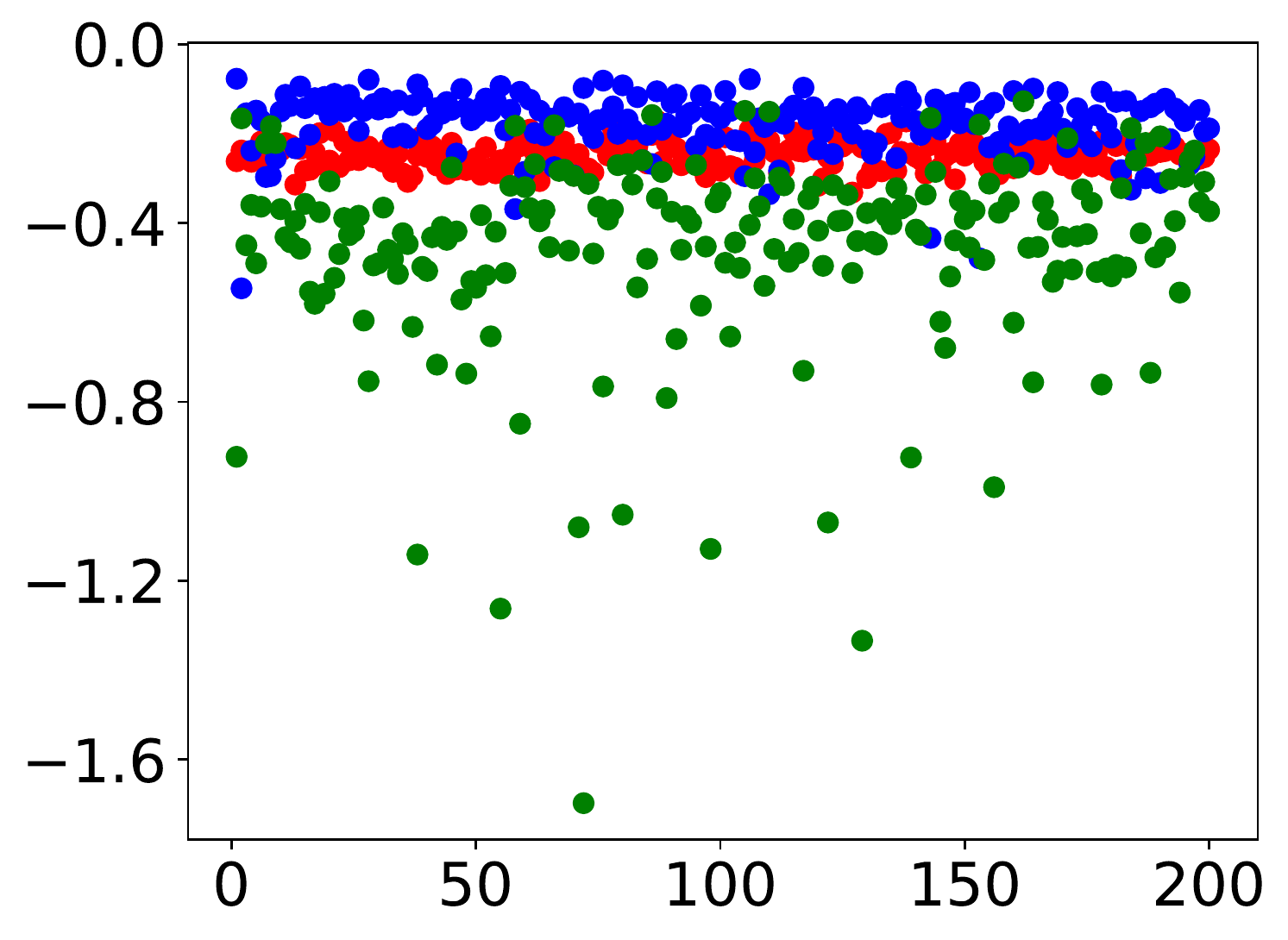}
			\caption{Layer 3 of $N_1$}
		\end{subfigure}&
		\begin{subfigure}{0.162\linewidth}
			\centering		\includegraphics[width=\textwidth]{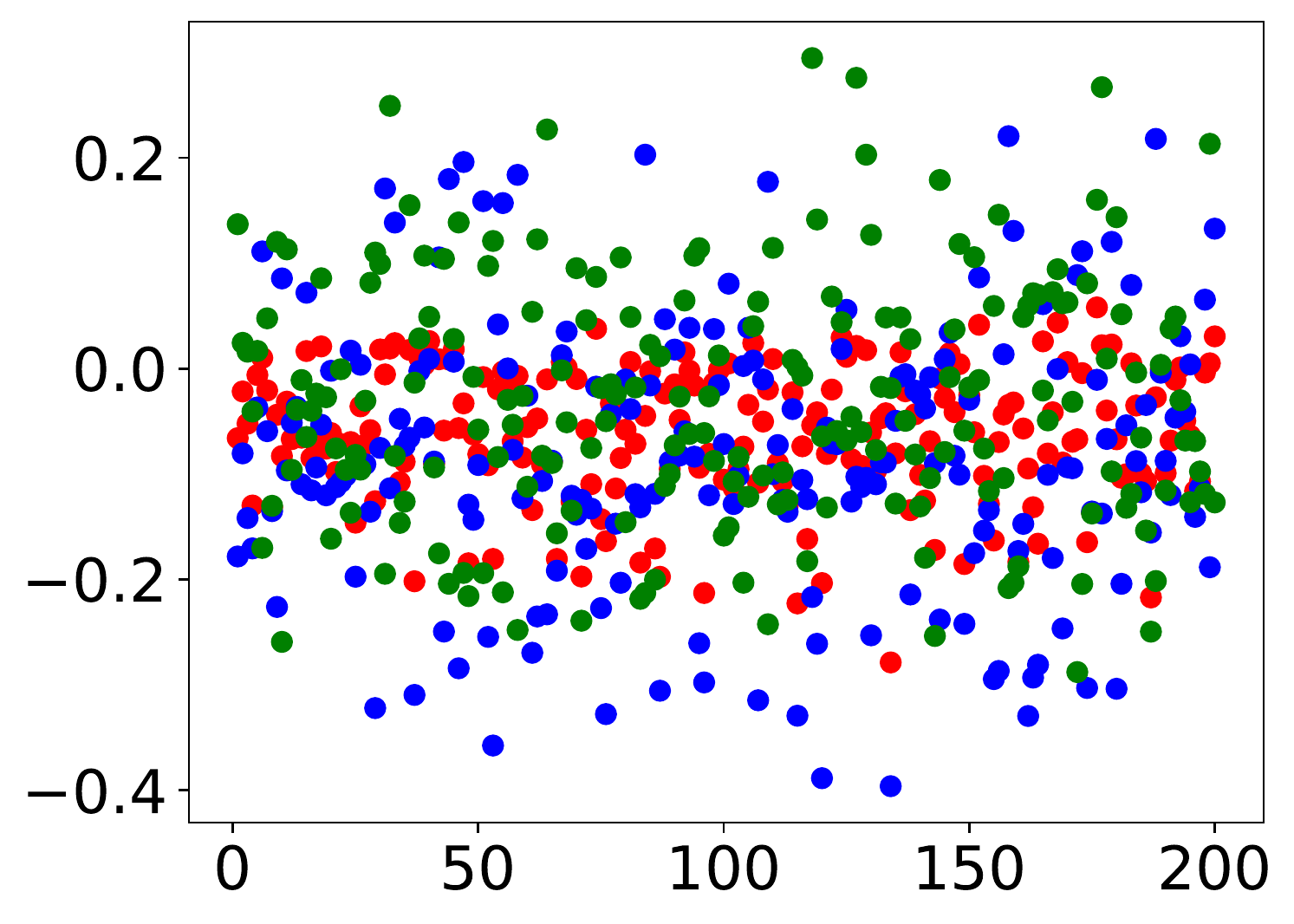}
			\caption{Layer 4 of $N_1$}
		\end{subfigure}&
		\begin{subfigure}{0.162\linewidth}
			\centering
			\includegraphics[width=\textwidth]{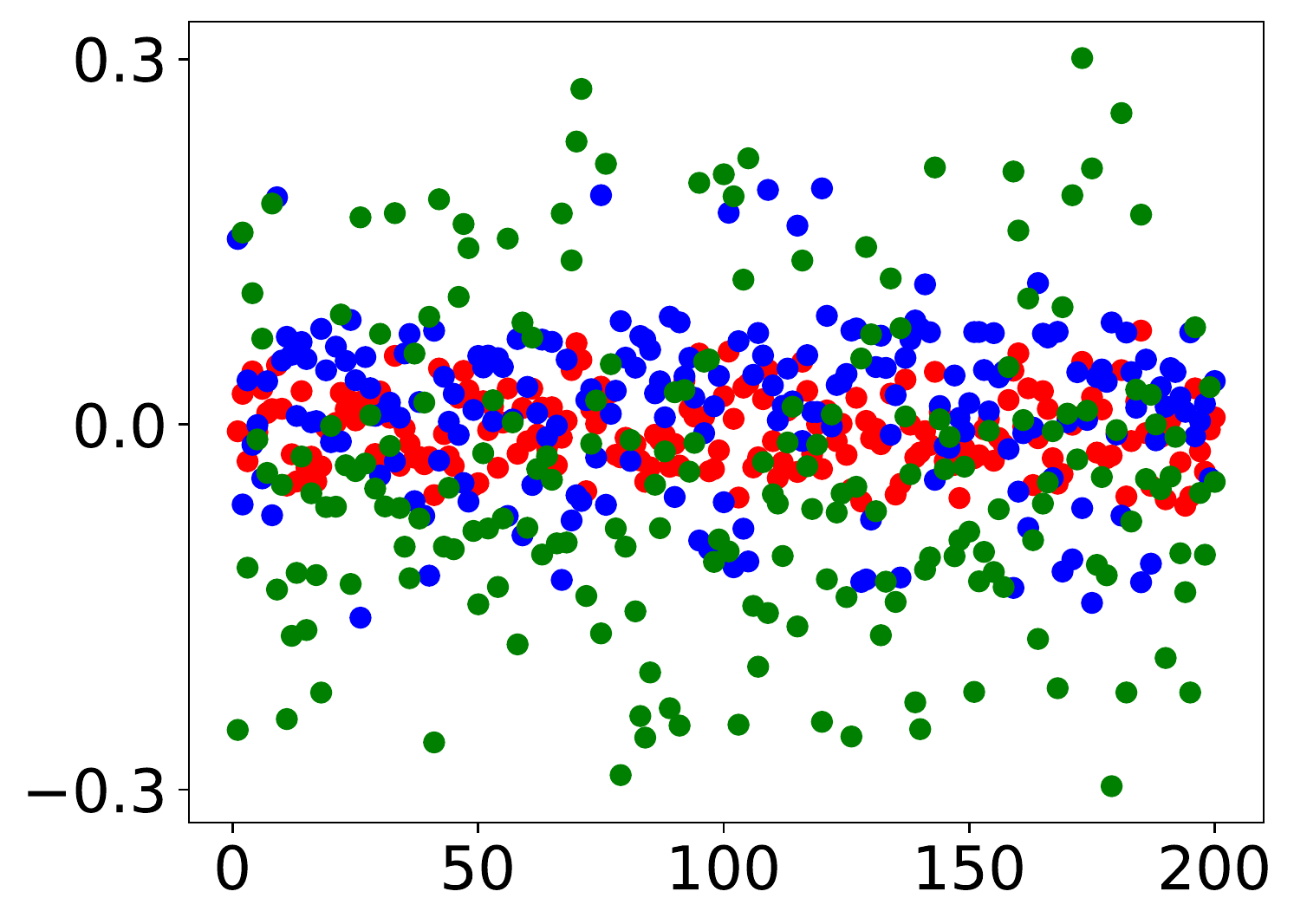}
			\caption{Layer 5 of $N_1$}
		\end{subfigure}&
		\begin{subfigure}{0.162\linewidth}
			\centering
			\includegraphics[width=\textwidth]{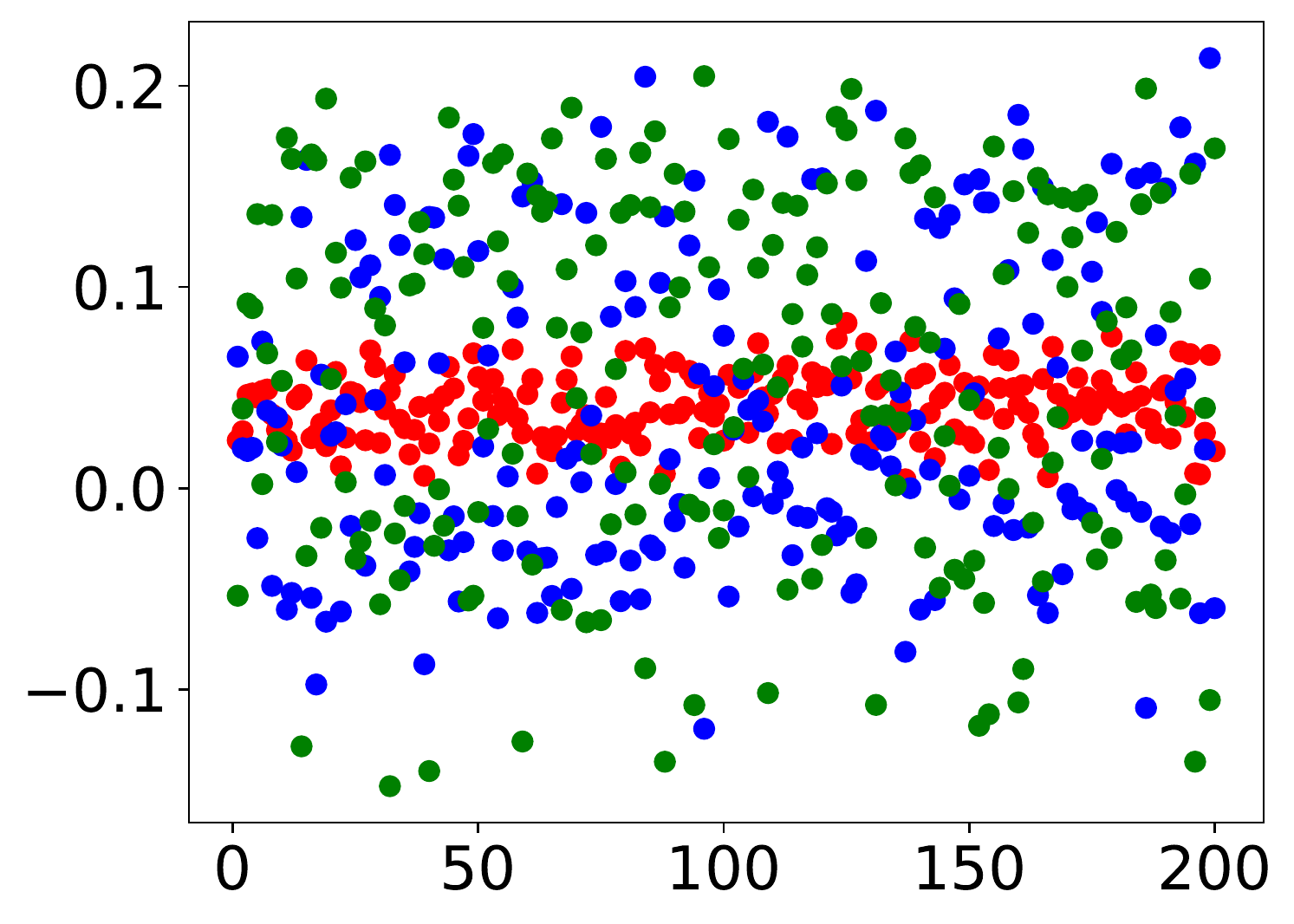}
			\caption{Layer 6 of $N_1$}
		\end{subfigure}&
		\begin{subfigure}{0.162\linewidth}
			\centering
			\includegraphics[width=\textwidth]{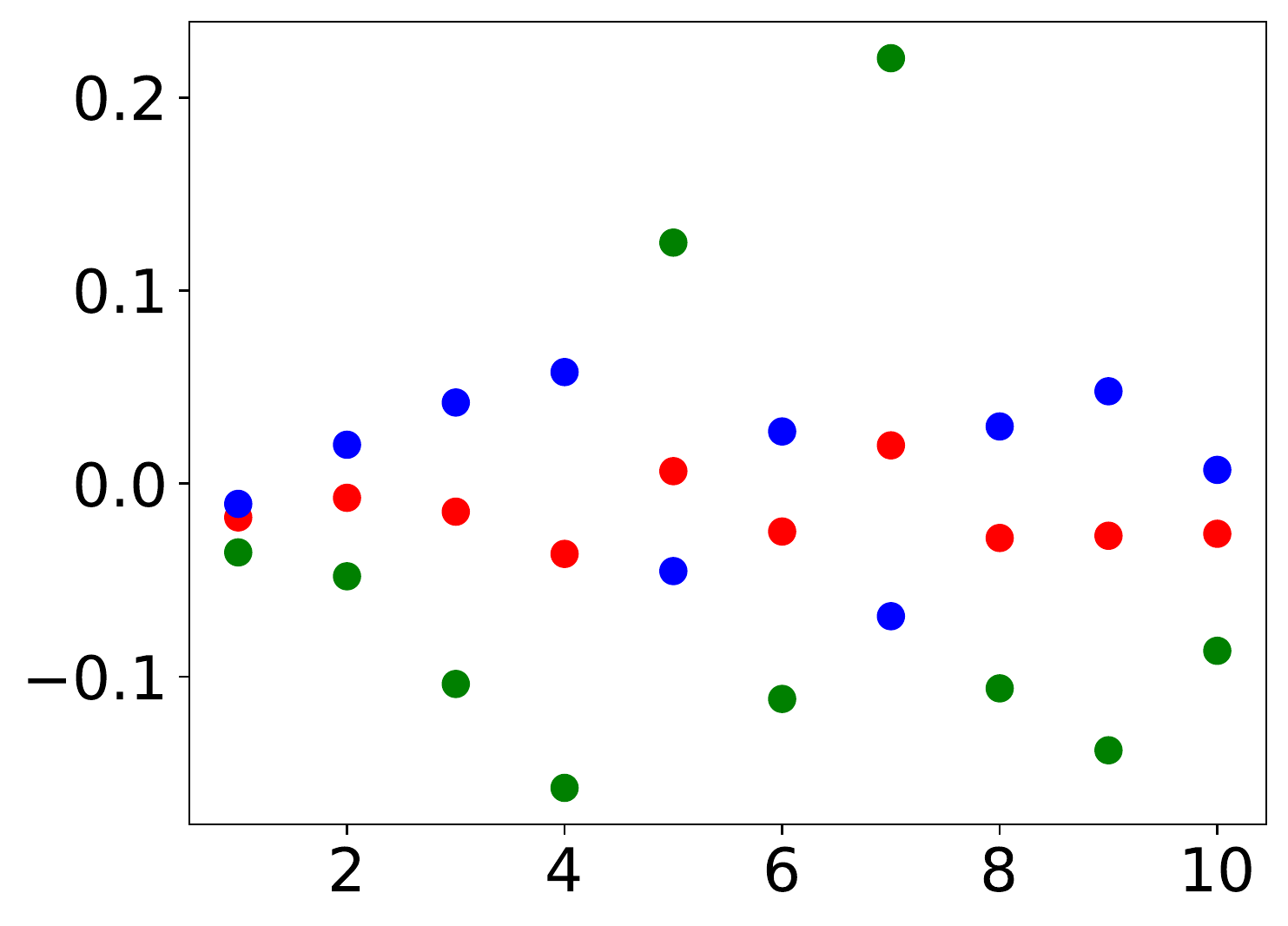}
			\caption{Output layer of $N_1$}		
			\label{fig:output}
		\end{subfigure}
		\\
		\begin{subfigure}{0.162\linewidth}
			\centering
			\includegraphics[width=\textwidth]{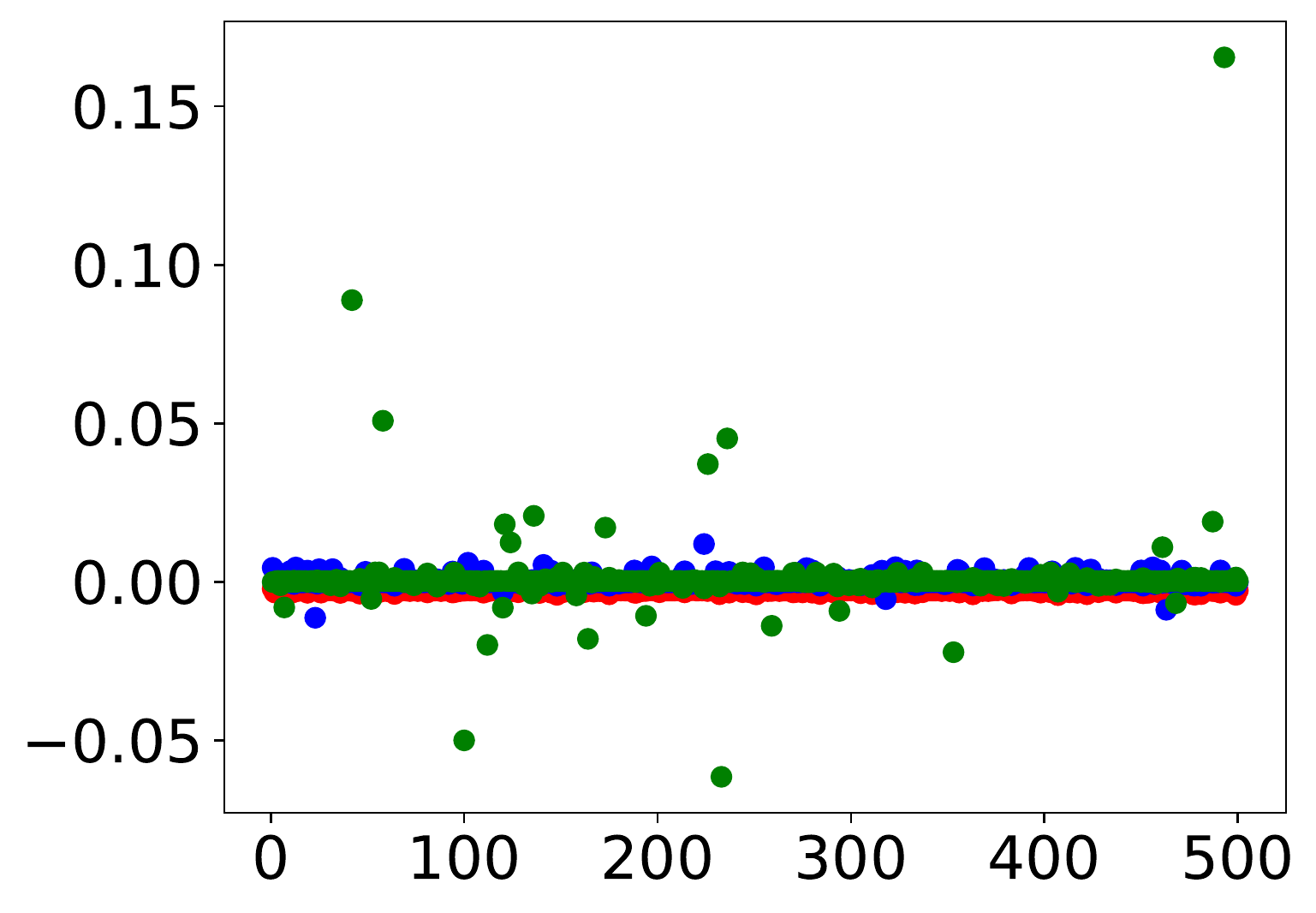}
			\caption{Layer 2 of $N_2$}
			\label{fig:layer2'}
		\end{subfigure}&
		\begin{subfigure}{0.15\linewidth}
			\centering
			\includegraphics[width=\textwidth]{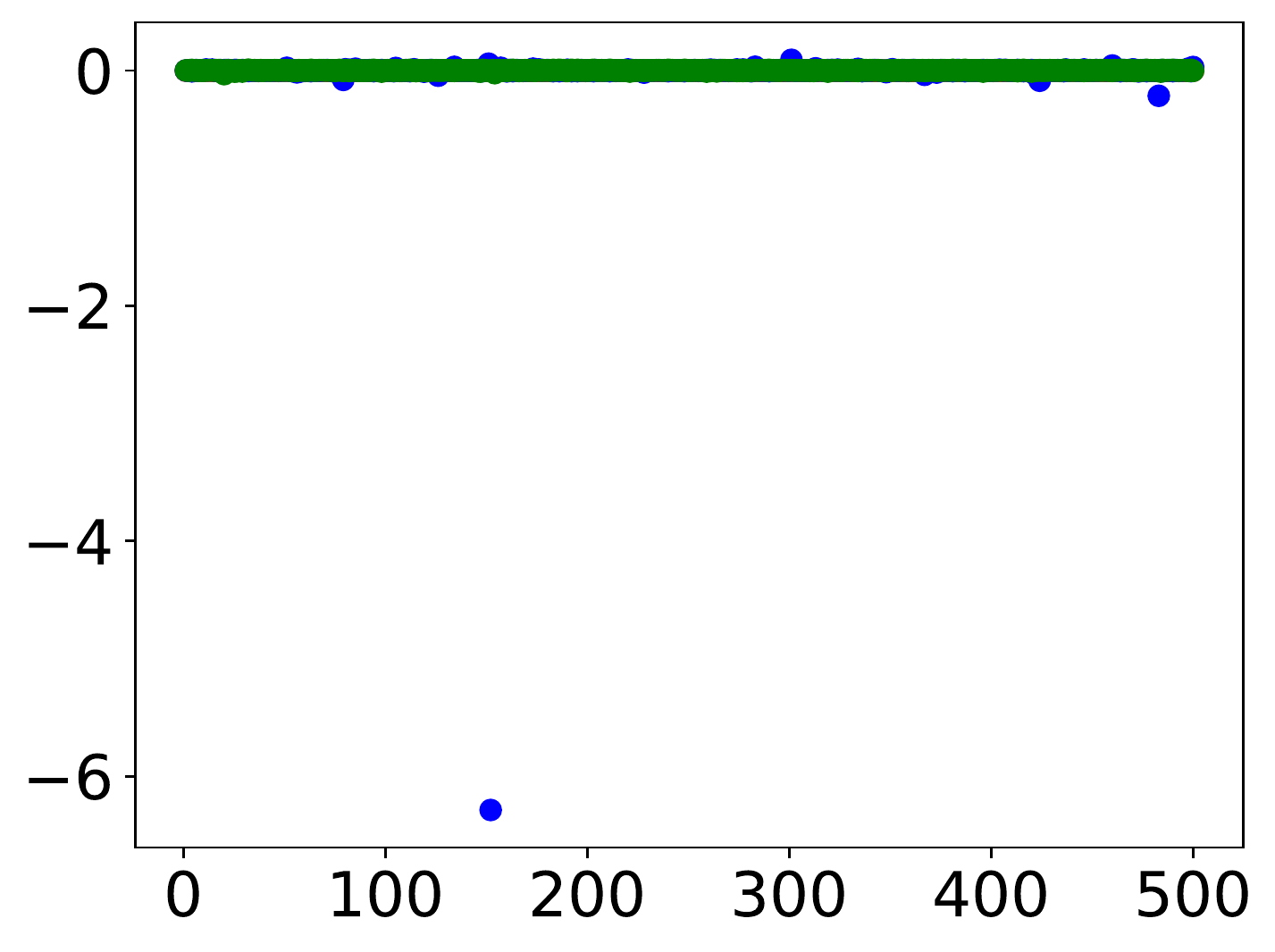}
			\caption{Layer 3 of $N_2$}
		\end{subfigure}&
		\begin{subfigure}{0.15\linewidth}
			\centering
			\includegraphics[width=\textwidth]{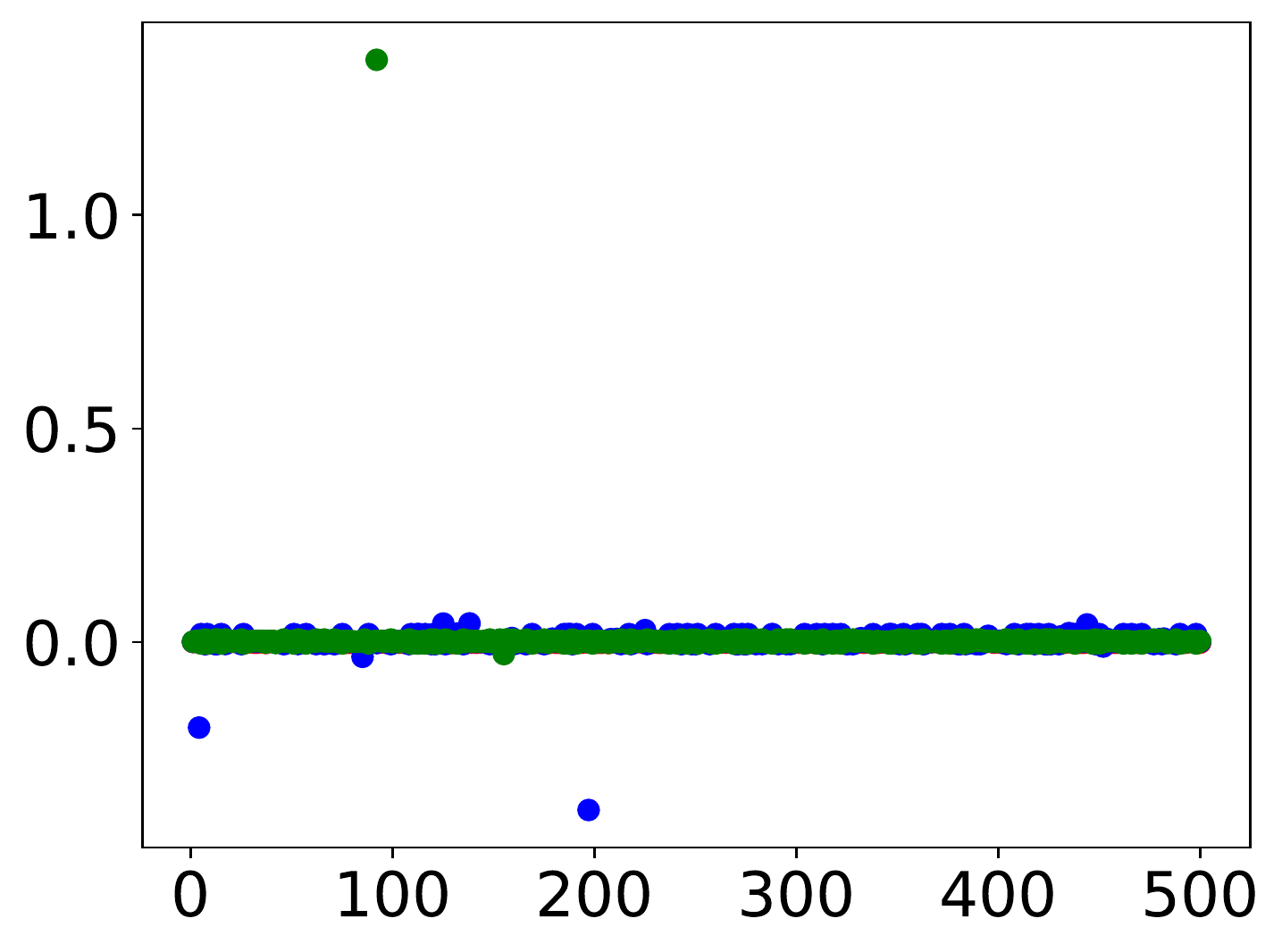}
			\caption{Layer 4 of $N_2$}
		\end{subfigure}&
		\begin{subfigure}{0.162\linewidth}
			\centering
			\includegraphics[width=\textwidth]{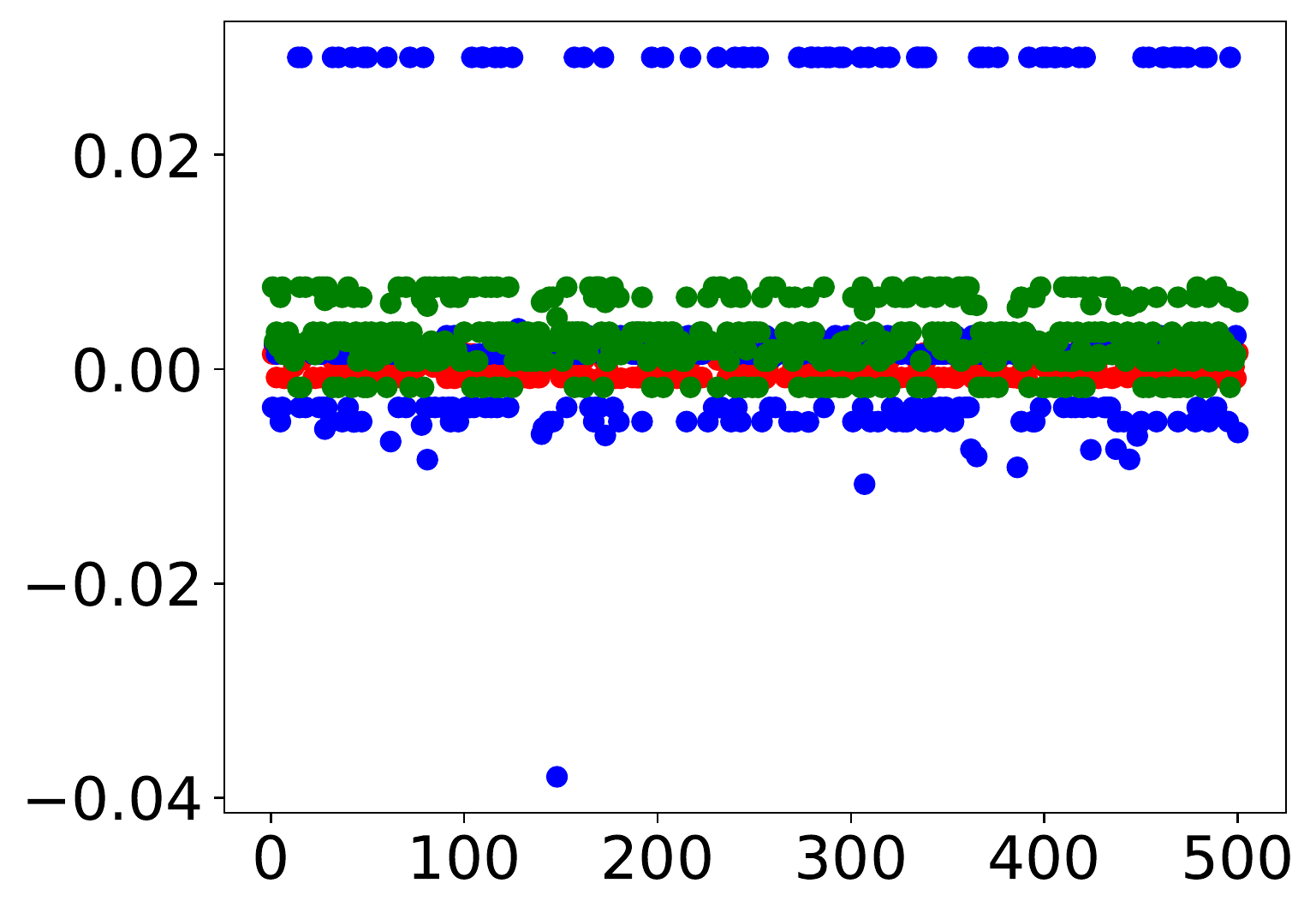}
			\caption{Layer 5 of $N_2$}
		\end{subfigure}&
		\begin{subfigure}{0.162\linewidth}
			\centering
			\includegraphics[width=\textwidth]{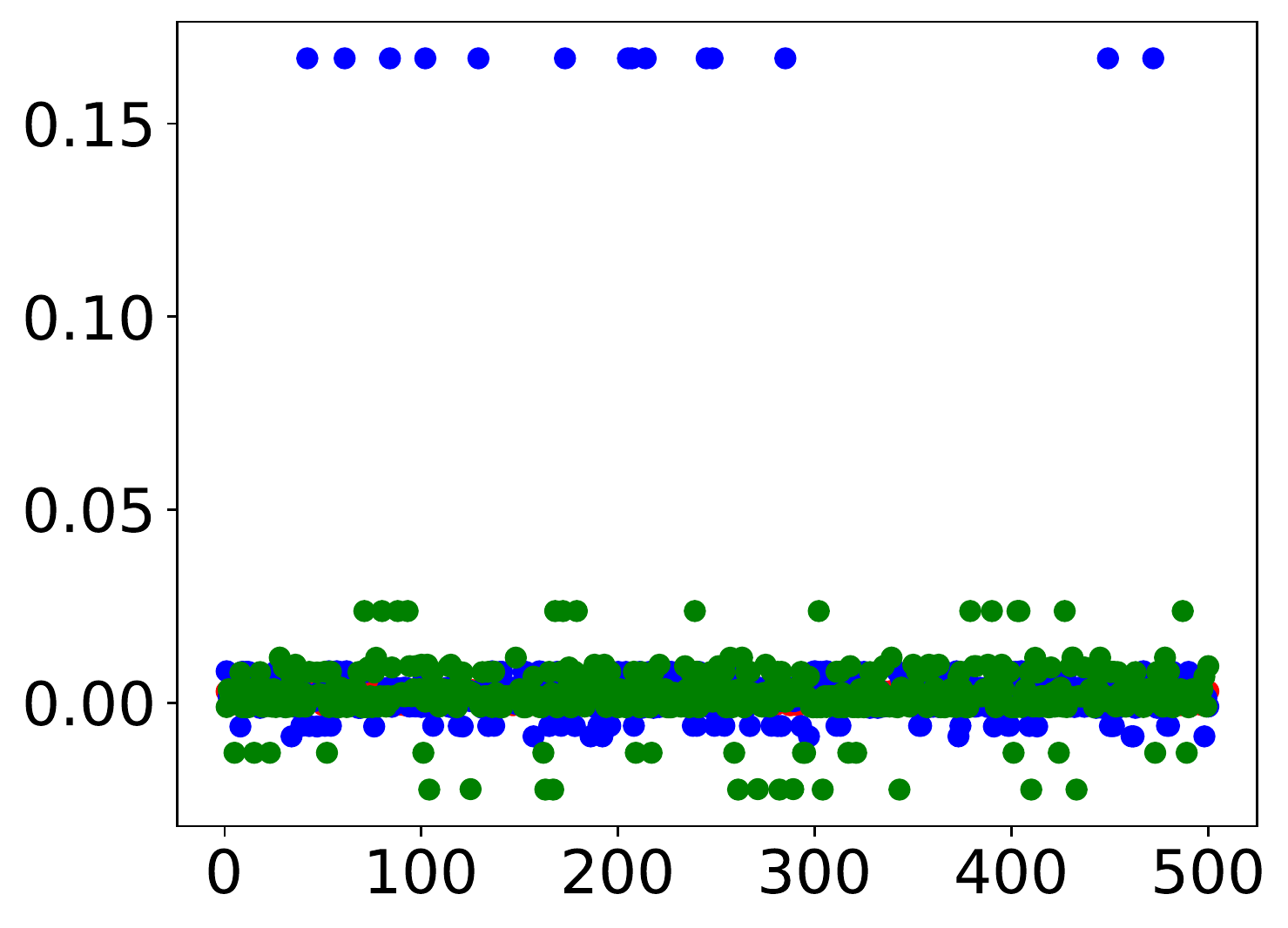}
			\caption{Layer 6 of $N_2$}
		\end{subfigure}&
		\begin{subfigure}{0.162\linewidth}
			\centering
			\includegraphics[width=\textwidth]{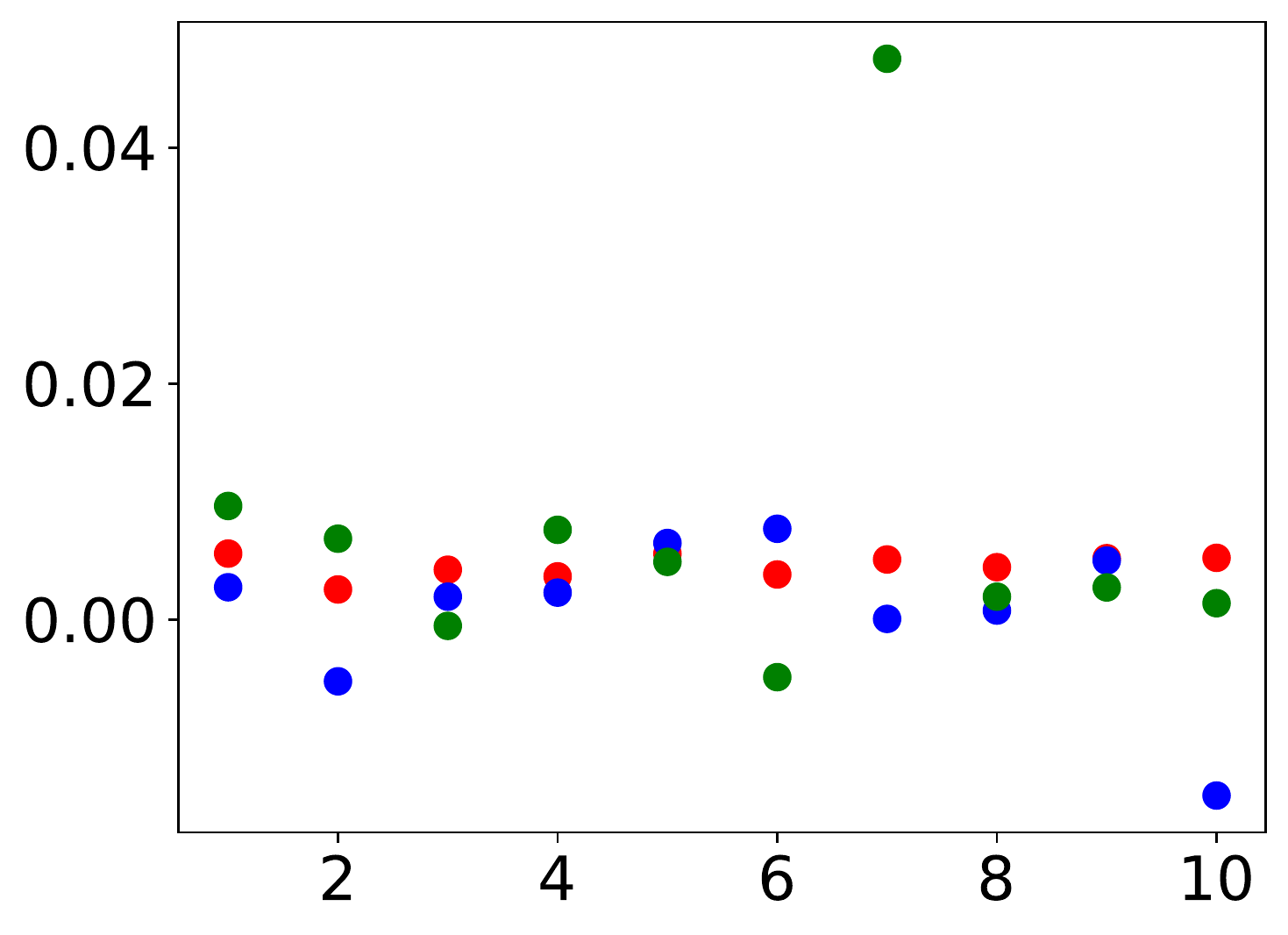}
			\caption{Output layer of $N_2$}
			\label{fig:output'}
		\end{subfigure}
	\end{tabular}
	\vspace{-2mm}
	\caption{Visualization of intermediate intervals during layer-by-layer propagation under different approximations (Red dot: $\frac{(u-l)-(u'-l')}{u'-l'}$; blue dot: $\frac{l-l'}{l'}$; green dot: $\frac{u-u'}{u'}$; $[l,u]$: interval computed by \textsc{VeriNet}, $[l',u']$: interval computed by \textsc{DeepCert}).}
	\vspace{-1mm}
	\label{fig:visual}
\end{figure*}

Table \ref{certified lower bound results on general models} shows the comparison results, where, surprisingly, none of these approaches surpass the others for all the networks. 
We also observe that \textsc{VeriNet} won the competition on 13 out of 20 networks, while \textsc{DeepCert} on the remaining ones. 
This indicates that the performances of these so-called tight approaches vary case by case. 
In this paper we do not intend to judge which approach is better experimentally but focus on seeking theoretical foundations for the tightness of approximations. 
The comparison result showed that existing tightness definitions do not rigorously guarantee that a tighter approximation can always produce a larger robust bound. 
This motivates us to seek a unified definition to characterize the  \emph{tightness} of approximations for robustness verification of neural networks. 

\vspace{-3.5mm}
\subsection{Empirical Analysis}
\label{subsec:empirical}
To validate our observation on the tightness issue and investigate its generality, we have performed empirical analysis of three state-of-the-art approximation approaches, i.e., \textsc{DeepCert} \cite{wu2021tightening}, \textsc{VeriNet} \cite{HenriksenL20}, and \textsc{RobustVerifier} \cite{lin2019robustness}, on 20 sigmoid neural networks collected from the public benchmarks. 
We evaluate the tightness of these approaches by computing the lower robust bound for each network using the tools, respectively. We randomly selected 100 images for each network, computed their lower robust bounds, and took the average value. Computing averaged lower bounds is a widely-adopted approach to reduce the affect of floating-point errors \cite{wu2021tightening,lyu2020fastened,wang2021beta}. 
Thus, even a small difference in the averaged value reflects a big difference in individual input. In general, the larger bound indicates the corresponding tool has a better performance.

%\vspace{1ex}
%\noindent \textbf{Empirical Analysis}
We empirically analyzed the layer-by-layer propagation in the verification process of the best two tools \textsc{VeriNet} and \textsc{DeepCert} and identified the missing factor that influences verification results. We tracked the computation of the intermediate intervals of the neurons on hidden layers during their layer-by-layer propagation and compared their tightness under different approximation strategies.

Figure \ref{fig:visual} shows the layer-by-layer comparison of the intermediate intervals on the hidden neurons and output neurons. These intervals are computed during verification using the approximation approaches in \cite{HenriksenL20,wu2021tightening}, respectively. The figures from Figure \ref{fig:layer2} to \ref{fig:output} show one 5-hidden-layer network named $N_1$ on which \textsc{VeriNet} computes a larger bound than \textsc{DeepCert}, while those from Figure \ref{fig:layer2'} to \ref{fig:output'} show another 5-hidden-layer network named $N_2$ on which \textsc{DeepCert} computes a larger bound than \textsc{VeriNet}. For each neuron, we use $[l,u]$ and $[l',u']$ to represent the intervals computed by \textsc{VeriNet} and \textsc{DeepCert}, respectively. 
We introduce three dots in red, blue and green to represent $\frac{(u-l)-(u'-l')}{u'-l'}$, $\frac{l-l'}{l'}$ and $\frac{u-u'}{u'}$, respectively. The $x$-axis represents the neurons on the corresponding layer, and the $y$-axis represents the differences of the interval length, lower bounds, and upper bounds of the intervals.

Horizontally on $N_1$, most of the red dots are below 0 from layer 2 to the output layer, except layer 6. That indicates the intervals computed by \textsc{VeriNet} usually have smaller (tighter) sizes than those by \textsc{DeepCert}. 
The blue dots gradually move up above 0, indicating that 
the lower bounds of the intervals computed by \textsc{VeriNet} 
become greater than the those by \textsc{DeepCert}. Similarly, the green dots gradually move down below 0, indicating that 
the upper bounds computed by \textsc{VeriNet} become smaller. 
The trends of the three values reflect that the intermediate intervals computed by \textsc{VeriNet} are statistically tighter than those by \textsc{DeepCert} on network $N_1$. 

The trend of intermediate intervals on network $N_2$ is an opposite of the one on $N_1$. The red dots move up above 0 layer by layer, indicating that the interval sizes computed by \textsc{VeriNet} become larger than those by \textsc{DeepCert}. 
The blue dots gradually move down below 0 and the green ones move up above 0. 
% The green dots gradually move down below 0 and the blue ones move up above 0. 

The analysis result from Figure \ref{fig:visual} reveals 
that the intermediate intervals are statistically tighter than other tools when a tool produces larger certified lower bounds.

\subsection{The Network-Wise Tightest Approximation}
\label{sec:network-wise}
Existing characterizations of tightness are individually heuristic under a presumption that a tighter approximation gives rise to a more precise verification result. Unfortunately, the above examples show that this presumption does not always hold. It means that defining the tightness on each individual neuron is neither sufficient nor necessary for achieving tight approximations. That is because the tightness on neurons cannot guarantee the output intervals of neural networks are always precise. However, the output intervals are the basis for judging whether a network is robust or not. 

To characterize the tightness of  approximations to the activation functions in a neural network, we introduce the notion of \emph{network-wise tightness}. We ensure that, by a network-wise tighter approximation of the activation functions, the approximated neural network must produce more precise output intervals and consequently more precise verification results. 

\begin{definition}[Network-wise tightness]\label{global_opt}
	Given a neural network $f:{\mathbb R}^n\rightarrow {\mathbb R}^m$ and $x \in \mathbb{B}_p(x_0,\epsilon)$, let $(f_L,f_U)$ be a linear approximation of $f$ with $f_U$ and $f_L$ the upper and lower bounds, respectively.  $(f_L,f_U)$ is  network-wise tightest if, for any different linear approximation $(\hat{f}_L,\hat{f}_U)$,
	\begin{align*}
		\forall s \in S, \quad &\min\limits_{x\in \mathbb{B}_p(x_0,\epsilon)} f_{L,s}(x)\ge \min\limits_{x\in \mathbb{B}_p(x_0,\epsilon)} \hat{f}_{L, s}(x),\\
		&\max\limits_{x\in \mathbb{B}_p(x_0,\epsilon)} f_{U,s}(x)\le \max\limits_{x\in \mathbb{B}_p(x_0,\epsilon)} \hat{f}_{U,s}(x),
	\end{align*}
	where $f_{L,s}(x),f_{U,s}(x)$ denote  $s$-th item of $f_L(x),f_U(x)$, respectively.
\end{definition}
\noindent Intuitively, $(f_L,f_U)$ is tighter than $(\hat{f_L},\hat{f_U})$ in that for all output neurons $s$, $f_L$ (\textit{resp.} $f_U$) always computes a lower (\textit{resp.} an upper) bound that is greater (\textit{resp.} less) than the one $\hat{f_L}$ (\textit{resp.} $\hat{f_U}$) does. Note that Definition \ref{global_opt} is universal in that it is applicable to (i) all activation functions and (ii) all $\ell_p$ norms.

\begin{example}
	By Definition \ref{global_opt}, the approximations 
	$x_{U,3}, x_{L,3}, x_{U,4},  x_{L,4}$ to the activation functions on $x_3$ and $x_4$ in Example \ref{exp:verify} are tighter than  $x'_{U,3}, x'_{L,3}, x'_{U,4},  x'_{L,4}$. This is consistent to the verification results. Using the former approximations, we can compute tighter output ranges for both $x_5$ and $x_6$  than those by the latter. However,  the former approximation can be proved to be less tight than the latter if we take the tightness definition with respect to the minimal area defined in  \cite{HenriksenL20}. 
\end{example}
% \vspace{-2mm}

Next, we give an important property of the network-wise tightness. That is, 
a network-wise tighter approximation always leads to more precise verification results. 

\begin{theorem}
	The approximation $(f_L,f_U)$ of a neural network always produces more precise robustness verification result than 
	$(\hat{f_L},\hat{f_U})$ if $(f_L,f_U)$ is tighter than $(\hat{f_L},\hat{f_U})$ by Definition \ref{global_opt}. 
\end{theorem}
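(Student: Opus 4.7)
The plan is to unpack ``more precise robustness verification result'' into the sufficient condition that the approximation-based verifier actually checks, and then reduce the theorem to a short transitivity of inequalities. Recall from Section~\ref{sec:prelim} that the verifier declares robustness at $(x_0,\epsilon)$ iff, for every competing label $s\neq s_0$ and every $x\in \mathbb{B}_p(x_0,\epsilon)$, the linear lower bound on $f_{s_0}$ dominates the linear upper bound on $f_s$. Since both bounds are fixed functions once the approximation is chosen, this is equivalent to the scalar inequality
\begin{equation*}
\min_{x\in \mathbb{B}_p(x_0,\epsilon)} f_{L,s_0}(x) \;>\; \max_{x\in \mathbb{B}_p(x_0,\epsilon)} f_{U,s}(x).
\end{equation*}
I would take this as the operative definition of ``$(f_L,f_U)$ verifies robustness at $(x_0,\epsilon)$'' and then interpret ``more precise'' in the two standard ways: (i) every instance verified by $(\hat f_L,\hat f_U)$ is also verified by $(f_L,f_U)$, and (ii) the certified lower bound produced by $(f_L,f_U)$ is at least as large as that produced by $(\hat f_L,\hat f_U)$.

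For (i), fix $x_0$, $\epsilon$, and an arbitrary $s\neq s_0$, and assume $(\hat f_L,\hat f_U)$ certifies robustness, i.e.\ $\min_x \hat f_{L,s_0}(x) > \max_x \hat f_{U,s}(x)$. The two inequalities in Definition~\ref{global_opt} applied at coordinates $s_0$ and $s$ respectively give $\min_x f_{L,s_0}(x) \ge \min_x \hat f_{L,s_0}(x)$ and $\max_x f_{U,s}(x) \le \max_x \hat f_{U,s}(x)$. Chaining these yields
\begin{equation*}
\min_{x\in\mathbb{B}_p(x_0,\epsilon)} f_{L,s_0}(x)
\;\ge\; \min_{x\in\mathbb{B}_p(x_0,\epsilon)} \hat f_{L,s_0}(x)
\;>\; \max_{x\in\mathbb{B}_p(x_0,\epsilon)} \hat f_{U,s}(x)
\;\ge\; \max_{x\in\mathbb{B}_p(x_0,\epsilon)} f_{U,s}(x),
\end{equation*}
which is exactly the verification criterion for $(f_L,f_U)$ at $(x_0,\epsilon)$. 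Since $s$ was arbitrary, $(f_L,f_U)$ certifies robustness whenever $(\hat f_L,\hat f_U)$ does.

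For (ii), let $\epsilon^\star$ and $\hat\epsilon^\star$ be the suprema of the $\epsilon$'s certified by $(f_L,f_U)$ and $(\hat f_L,\hat f_U)$, respectively. Part (i), applied at every $\epsilon\le\hat\epsilon^\star$, shows that the set of certifiable radii of $(f_L,f_U)$ contains that of $(\hat f_L,\hat f_U)$, so $\epsilon^\star \ge \hat\epsilon^\star$, i.e.\ the certified lower bound is at least as large. The main obstacle, if any, is simply settling the informal phrase ``more precise verification result'' into a statement to which Definition~\ref{global_opt} can be applied coordinate-wise; once that is done, the proof is a single four-term chain of inequalities and requires no further reasoning about the internal structure of $f$, the activation $\sigma$, or the norm $\ell_p$, which is consistent with the universality claim made right after Definition~\ref{global_opt}.
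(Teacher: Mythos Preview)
Your proposal is correct and follows essentially the same approach as the paper: the paper's proof sketch fixes $s_0=\mathcal{L}(f(x_0))$ and an arbitrary $s\neq s_0$, assumes $(\hat f_L,\hat f_U)$ satisfies the verification condition, and derives the same four-term chain of inequalities you wrote to conclude that $(f_L,f_U)$ satisfies it too. Your part~(ii) about the certified lower bound is a small elaboration not spelled out in the paper's sketch, but it is an immediate corollary of part~(i) and does not constitute a different route.
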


\begin{proof}[Proof sketch]
	Let $s_0 = {\mathcal L}(f(x_0))$ with fixed $\epsilon$. We check for all $s$ other than $s_0$ whether the following condition: \begin{equation}
		\min\limits_{x\in \mathbb{B}_p(x_0,\epsilon)} f_{L, s_0}(x) > \max\limits_{x\in \mathbb{B}_p(x_0,\epsilon)} f_{U, s}(x)
		\label{robust_condition}
	\end{equation}
	holds. By Definition \ref{global_opt}, if $(\hat{f_L},\hat{f_U})$ satisfies (\ref{robust_condition}), then we have:
	\begin{align*}
		&\min\limits_{x\in \mathbb{B}_p(x_0,\epsilon)} f_{L,s_0}(x)\ge \min\limits_{x\in \mathbb{B}_p(x_0,\epsilon)} \hat{f}_{L, s_0}(x)>\\
		& \max\limits_{x\in \mathbb{B}_p(x_0,\epsilon)} \hat{f}_{U,s}(x)\ge \max\limits_{x\in \mathbb{B}_p(x_0,\epsilon)} f_{U,s}(x), 
	\end{align*}
	for all $s$ other than $s_0$. Thus, $(f_L,f_U)$ certainly satisfies (\ref{robust_condition}) as well. Namely, the result verified by $(\hat{f_L},\hat{f_U})$ can also be deduced by $(f_L,f_U)$. On the contrary, the result verified by $(f_L,f_U)$ may not be verified by $(\hat{f_L},\hat{f_U})$. Consequently, $(f_L,f_U)$ always produce more precise verification result than $(\hat{f_L},\hat{f_U})$.
\end{proof}

\vspace{-2mm}
Next, we show that computing the  network-wise  tightest approximation is essentially an optimization problem. Given a $k$-layer neural network $f:\mathbb{R}^n \to \mathbb{R}^m$, we use $\phi^t$ to denote the compound function of  $f$'s layers before $t$-th activation function is applied, \textit{i.e.,}
\begin{linenomath*}
	\begin{equation*}
		\phi^t = f^t \circ \sigma^{t-1} \circ f^{t-1} \circ \ldots\circ \sigma^1 \circ f^1.
	\end{equation*}
\end{linenomath*} 
For layer $t$ with $n^t$ neurons, let $\phi^{t}_r(x)$ indicate  $r$-th item of its output (with $r \in \mathbb{Z}$ and $1 \le r\le n^t$). For each activation function $\sigma(x)$ with $x\in[l,u]$, we denote the upper (resp. lower) bound of $\sigma(x)$ by $h_{U}(x)=\alpha_{U}x+\beta_{U}$ (resp. $h_{L}(x)=\alpha_{L}x+\beta_{L}$), with variables  $\alpha_{L},\alpha_{U},\beta_{L},\beta_{U}\in \mathbb{R}$. The problem of computing the network-wise tightest approximation can then be formalized as the following optimization problems:
\begin{align} 
	&\max (\min\limits_{x \in \mathbb{B}_\infty(x_0,\epsilon)} (A_{L,s}^k x+B_{L,s}^k)), \text{and} \label{opti-max}\\
	&\min (\max\limits_{x \in \mathbb{B}_\infty(x_0,\epsilon)} (A_{U,s}^k x+B_{U,s}^k))\label{opti-min}\\
	s.t. &\quad \forall r \in \mathbb{Z},1 \le r\le n^t,
	\forall t \in \mathbb{Z}, 1 \le t < k \notag, \\
	&\left\{
	\begin{array}{ll}
		\alpha_{L,r}^t z_r^t+\beta_{L,r}^t\le \sigma(z_r^t) \le \alpha_{U,r}^t z_r^t+\beta_{U,r}^t;\\
		\min\limits_{x \in \mathbb{B}_\infty(x_0,\epsilon)} A_{L,r}^{t} x+B_{L,r}^{t}\leq z_r^t \le \max\limits_{x \in \mathbb{B}_\infty(x_0,\epsilon)} A_{U,r}^{t} x+B_{U,r}^{t}.
	\end{array}
	\right.\notag 
\end{align}
Here, $A_{L,r}^t x+B_{L,r}^t$ and $A_{U,r}^t x+B_{U,r}^t$ are the lower and upper linear bounds of $\phi_{r}^t(x)$, respectively. $A_{L,r}^t, A_{U,r}^t, B_{L,r}^t$ and $B_{U,r}^t$ are constant tensors defined on $W^t,b^t$, where $W^t,b^t$ are the weights and biases of the $t$-th layer. $\phi_{r}^t$ is the compound function of $f$'s first $t$ layers. 
$\phi_{r}^t(x)$ can be approximated by a lower linear bound $A_{L,r}^t x+B_{L,r}^t$  and an upper linear bound $A_{U,r}^t x+B_{U,r}^t$ with: 
\begin{small}
	\begin{align*}
		A_{L,r}^t=&
		\left\{
		\begin{aligned}
			&W_r^t  ,\hspace{48mm} t=1\\
			&W_{\ge 0,r}^t \alpha_{L}^{t-1} \odot A_{L}^{t-1}+ W_{< 0,r}^t \alpha_{U}^{t-1} \odot A_{L}^{t-1} , \hspace{7mm}t\ge 2 \\
		\end{aligned}
		\right.\\
		B_{L,r}^t=&
		\left\{
		\begin{aligned}
			&b_r^t, &t=1\\
			&W_{\ge 0,r}^t (\alpha_{L}^{t-1} \odot B_{L}^{t-1}+\beta_{L}^{t-1})+ \\
			&W_{< 0,r}^t (\alpha_{U}^{t-1} \odot B_{L}^{t-1}+\beta_{L}^{t-1}) + b_r^t , \hspace{11mm} &t \ge 2 \\
		\end{aligned}
		\right.\\
		A_{U,r}^t=&
		\left\{
		\begin{aligned}
			&W_r^t  , \hspace{47mm} t=1\\
			& W_{\ge 0,r}^t \alpha_{U}^{t-1} \odot A_{U}^{t-1}+ W_{< 0,r}^t \alpha_{L}^{t-1} \odot A_{U}^{t-1} ,\hspace{6mm}t \ge 2 \\
		\end{aligned}
		\right.\\
		B_{U,r}^t=&
		\left\{
		\begin{aligned}
			&b_r^t, & t=1\\
			&W_{\ge 0,r}^t (\alpha_{U}^{t-1} \odot B_{U}^{t-1}+\beta_{U}^{t-1})+ \\
			&W_{< 0,r}^t (\alpha_{L}^{t-1} \odot B_{U}^{t-1}+\beta_{U}^{t-1}) + b_r^t,\hspace{10mm}  &\hfill t \ge 2 \\
		\end{aligned}
		\right.
	\end{align*}
\end{small}
where $\odot$ denotes Hadamard production.

The solutions to all $\alpha_L, \alpha_U, \beta_L, \beta_U$ are the linear bounds to all the activation functions in the network, and their composition is the network-wise tightest approximation. Note that the solutions may not guarantee that the approximation to an individual activation function is the tightest with respect to existing tightness definitions.

\section{Approach for 1-Hidden-Layer Networks}\label{sec:1layer}

Given a neural network, we can compute the network-wise tightest approximation by instantiating and solving the optimization problems  (\ref{opti-max}) and (\ref{opti-min}). For a one-hidden-layer network, the optimization problems can be simplified as follows:
\begin{align}
	&\max (\min\limits_{x \in \mathbb{B}_\infty(x_0,\epsilon)} (A_{L,s} x+B_{L,s})), \label{small_nn_max}\\
	&\min (\max\limits_{x \in \mathbb{B}_\infty(x_0,\epsilon)} (A_{U,s} x+B_{U,s})), \label{small_nn_min}\\
	s.t. &\quad \forall r \in \mathbb{Z},1 \le r\le n, \nonumber \\
	&\left\{
	\begin{array}{ll}
		\alpha_{L,r}z_r+\beta_{L,r}\le \sigma(z_r) \le \alpha_{U,r}z_r+\beta_{U,r};\\
		\min\limits_{x \in \mathbb{B}_\infty(x_0,\epsilon)} W^{1}x+b^{1} \le z_r \le \max\limits_{x \in \mathbb{B}_\infty(x_0,\epsilon)} W^{1}x+b^{1}. \nonumber
	\end{array}
	\right.
\end{align}
where $A_{L,s}=W_{\ge0,s}^2 (\alpha_{L} \odot W^1)+W_{<0,s}^2 (\alpha_{U} \odot W^1)$, $B_{L,s}= W_{\ge0,s}^2 (\alpha_{L}\odot b^1+\beta_L)+ W_{<0,s}^2 (\alpha_{U}\odot b^1+\beta_U)+b_{s}^2$, $n$ denotes the amount of neurons in the hidden layer. The above optimization problems are the instances of the problems (\ref{opti-max}) and (\ref{opti-min}).

\DecMargin{1em}
\begin{algorithm}[t]
	\SetKwData{Left}{left}\SetKwData{This}{this}\SetKwData{Up}{up}
	\SetKwFunction{Union}{Union}\SetKwFunction{FindCompress}{FindCompress}
	\SetKwInOut{Input}{Input}\SetKwInOut{Output}{Output}
	\caption{A gradient descent-based searching algorithm for the tightest approximations of 1-hidden-layer networks.}
	\label{Algorithm1}
	\Input{~$N$: a network; $x_0$: an input to $N$; $\epsilon$: a $\ell_\infty$-norm radius}
	\Output{~$\alpha_{L,r},\beta_{L,r},\alpha_{U,r},\beta_{U,r}$ for each hidden neuron $r$}
	\For{each neuron $r$}{
		Evaluate input range $[l_r,u_r]$ for $r$;\\
		Let $\omega$ denote the line connecting $(l_r, \sigma(l_r))$ and $(u_r, \sigma(u_r))$;\\
		$R_L\leftarrow \emptyset,R_U\leftarrow \emptyset$\tcp*{Empty the sets of optimizable neurons.}
		\If{$\omega$ can be an upper bound of $\sigma$}{
			Let $\alpha_{U,r},\beta_{U,r}$ be the slope and intercept of $\omega$;\\
			Add $(r,[l_r, u_r])$ to $R_L$\tcp*{$r$'s lower bound is optimizable.} 
			%as a neuron whose lower bound can be optimized, set the optimal range as $[l_r, u_r]$;
		}\ElseIf{$\omega$ can be a lower bound of $\sigma$}{
			Let $\alpha_{L,r},\beta_{L,r}$ be the slope and intercept of $\omega$;\\
			Add $(r,[l_r, u_r])$ to $R_U$\tcp*{$r$'s upper bound is optimizable.} 
			%Mark $r$ as a neuron whose upper bound can be optimized, set the optimal range as $[l_r, u_r]$;
		}
		\Else{
				Let $z_{U,r},z_{L,r}$ be the cut-off points of the tangent lines of $\sigma$ crossing $(l_r,\sigma(l_r))$ and $(u_r,\sigma(u_r))$;\\
				Add $(r,[z_{U,r}, u_r])$ to $R_U$, and $(r,[l_r, z_{L,r}])$ to $R_L$;
				%Let $$ be the cut-off point of the tangent line of $\sigma$ crossing ;\\
				%Add $(r,[l_r, u_r])$ to $R_L$\;
		}
		Randomize the cut-off points for optimizable bounds of $r$\;
		Let $\alpha_{L,r},\beta_{L,r},\alpha_{U,r},\beta_{U,r}$ be the slope and intercept of tangent line of $\sigma$ at chosen cut-off points;
	}
	\For(\tcp*[f]{$k$ is the preset optimization round}){$1,\ldots,k$}{
		Compute $A_{L,s}, B_{L,s}$ of the lower bound of output neuron $s$;\\
		Let $G:=\min\limits_{x \in \mathbb{B}_\infty(x_0,\epsilon)} (A_{L,s} x+B_{L,s})$;\\
		Update the cut-off points for $r$'s bound through $-\nabla(G)$\; 
		Update $\alpha_{L,r},\beta_{L,r},\alpha_{U,r},\beta_{U,r}$ at chosen cut-off points\;
	}
\end{algorithm}

The optimization problem is a convex variant and thus efficiently solvable by leveraging the gradient descent-based searching algorithm \cite{lyu2020fastened}. Algorithm \ref{Algorithm1} shows the pseudo code of the algorithm for calculating the optimal solution for the optimization problem with objective function (\ref{small_nn_max}). 
A solution represents a network-wise tightest lower bound to the 1-hidden-layer networks.  For each activation function on a hidden neuron, it first determines whether the line $\omega$ crossing the two endpoints can be an upper (Lines 5-7) or lower bound (Lines 8-10). 
If those are the cases, the tangent line of the activation function is chosen to be lower bound (\textit{resp}. upper bound), and its cut-off point can be an optimization variable. Otherwise (Lines 11-13), the lower and upper bounds can both be optimized. The optimizing ranges for those cases are calculated.

Let $G:=\min\limits_{x \in \mathbb{B}_\infty(x_0,\epsilon)} (A_{L,s} x+B_{L,s})$ and $ANS = \max\limits_{\alpha_L,\alpha_U,\beta_L,\beta_U}(G)$. We use gradient descent steps (Lines 16-20) to optimize the target $ANS$. We conduct gradient descent and modify the value of ${\alpha_L,\alpha_U,\beta_L,\beta_U}$ if the $ANS$ achieves a larger result under the new bounds.

The optimization problem with objective function (\ref{small_nn_min}) can be solved by the same algorithm, with $ANS$ replaced by (\ref{small_nn_min}).

%\vspace{-2mm}
%\vspace{1ex}
\begin{example}
	Let us revisit  Example \ref{exp:verify}. With Algorithm \ref{Algorithm1}, we  compute the network-wise tightest approximations for the network in Figure \ref{fig:veriexample}. Figure \ref{app3} shows the upper (\textit{resp.} lower) bounds, denoted by $x''_{U,3},x''_{U,4}$ (\textit{resp}.  $x''_{L,3},x''_{L,4}$). 
	By Definition \ref{global_opt}, 
	$x''_{U,3},x''_{U,4}$ are tighter than the other two approximations. 
	The resulting output range of neuron $x_7$ is $[0.307, 5.693]$, which is  more precise than both  $[-0.079, 6.073]$ and $[0.177,5.817]$ in Figure \ref{fig:minimal} and \ref{fig:endpoints}, respectively. 
\end{example}

\begin{figure}
	\centering
	\includegraphics[width=0.48\textwidth]{./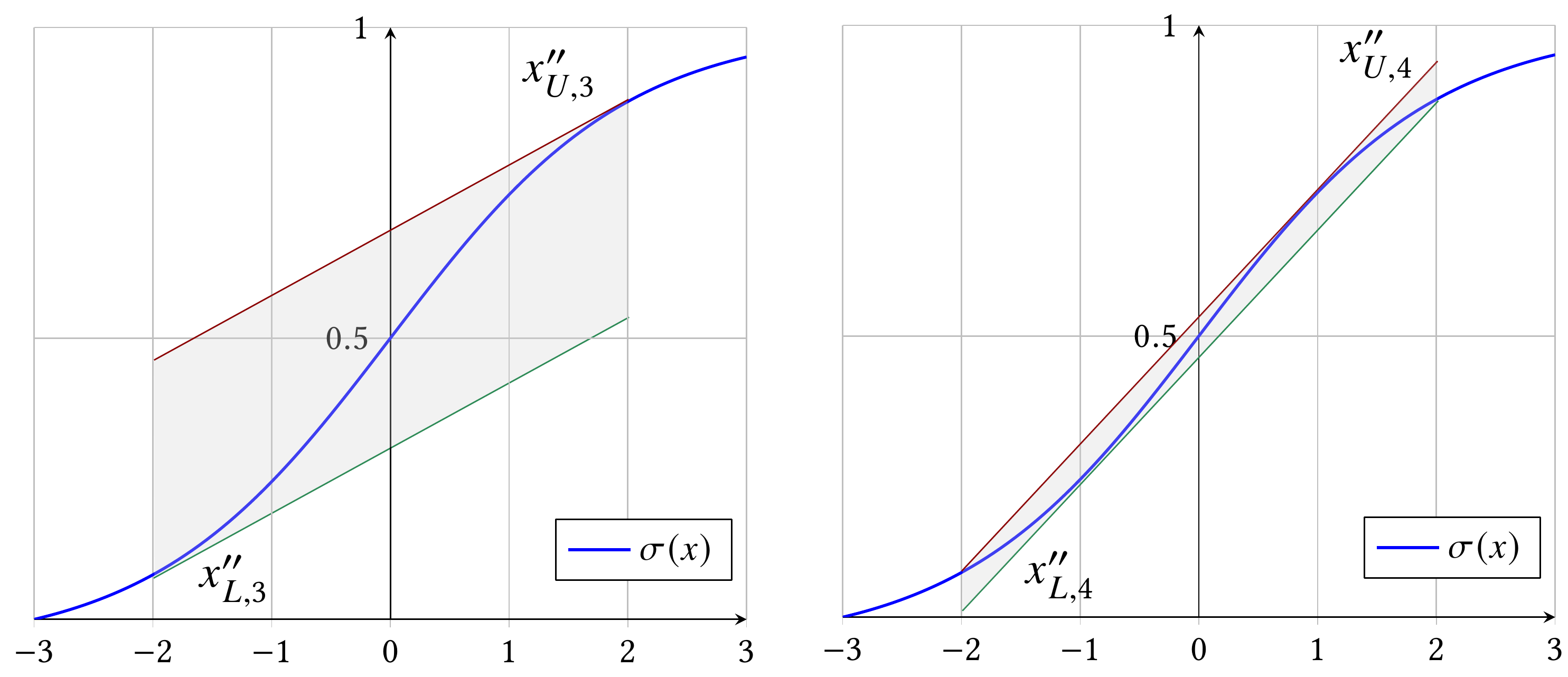}
 	\vspace{-5mm}
	\caption{The network-wise tightest approximation to the activation functions on  $x_3$ (left) and $x_4$ (right) in Example  \ref{exp:verify}.} 
	\label{app3}
	\vspace{-2mm}
\end{figure}

Note that the network-wise tightest approximations in the above example are a  hybrid of both kinds of approximations in Figure \ref{fig:minimal} and \ref{fig:endpoints}, which cannot be the tightest under a single tightness criterion in \cite{HenriksenL20,wu2021tightening}. This echoes our advocation that solely pursuing neuron-wise tightness under existing tightness definitions may not guarantee that they are the network-wise tightest, and consequently cannot achieve precise robust verification results.

	% \vspace{-1mm}
\section{\hspace{-1mm}Approach for Multi-Hidden-Layer Networks}\label{sec:appr}

For the networks with two or more hidden layers, solving the optimization problems (\ref{opti-max}) and (\ref{opti-min}) becomes impractical due to its non-convexity. In \cite{lyu2020fastened}, Lyu \emph{et al.} proved that it is even non-convex to separately compute the tightest approximation for each neuron. The intractability lies in the accumulated constraints throughout the network: for any hidden layer, the input intervals of the  activation functions are constrained by the approximations to the activation functions for the previous hidden layer. Neither can the optimization problems be solved on a layer basis because the objective function are network-wise. To our knowledge, no efficient algorithms or tools exist for such optimization problems.  
In this section, we propose computable neuron-wise tightest approximations and identify the condition when all the weights in a neural network are non-negative, the neuron-wise tightest approximations lead to being network-wise tightest. 

 \vspace{-2mm}
\subsection{The Neuron-Wise Tightest Approximation}
\label{subsec:our-neuron-wise-method}
Our empirical analysis in Section \ref{sub:tightness} reveals an insight that preserving tighter intermediate intervals during layer-by-layer propagation usually produces larger certified lower robust bounds. In the same spirit, we heuristically define the tightness of an approximation to an individual activation function in terms of the overestimation caused by the approximation. Smaller overestimation 
implies a tighter approximation. Particularly, an approximation is 
the \emph{neuron-wise tightest} if it results in no overestimation of the output range of the activation function.

\begin{definition}[Neuron-wise Tightness]\label{def:neuronwise}
	Let  $\sigma(x)$ be an activation function with $x\in [l,u]$, and  $h_{U}(x), h_{L}(x)$ be its upper and lower bounds, with $\alpha_U,\alpha_L$ their slopes. $h_{U}(x)$ (resp. $h_{L}(x)$) is the neuron-wise tightest if $h_{U}(u)=\sigma(u)$ (resp.  $h_{L}(l)=\sigma(l)$) and $\int^u_l h_U(x)-\sigma(x)dx$ (resp.  $\int^u_l \sigma(x)-h_L(x)dx$) is minimal. 
\end{definition}

By Definition \ref{def:neuronwise}, we identify three cases of defining the neuron-wise tightest approximation for each individual activation function. The three cases are defined 
according to the relation between the slopes of activation function at two endpoints and the slope of the line crossing the two endpoints, as classified in \cite{wu2021tightening,HenriksenL20}. 
Given an input interval $[l,u]$ for $\sigma(x)$, the slopes of $\sigma(x)$ at $(l,\sigma(l))$ and $(u,\sigma(u))$ are represented by $\sigma'(l)$ and $\sigma'(u)$, respectively; the slope of the line crossing $(l,\sigma(l))$ and $(u,\sigma(u))$ is $k=\frac{\sigma(u)-\sigma(l)}{u-l}$.  
Figure \ref{optimal_approximation} depicts the neuron-wise tightest approximations in the following three different cases:
\paragraph{\textbf{Case 1.}} When $\sigma'(l)<k<\sigma'(u)$ (Figure \ref{optimal_approximation}a), the line  that connects the two endpoints is chosen as the upper bound, while the tangent line of $\sigma(x)$ at $(l,\sigma(l))$ as the lower bound. We then have $h_{U}(x)=k(x-l)+\sigma(l)$ and $h_{L}(x)=\sigma'(l)(x-l)+\sigma(l)$.

% \vspace{-1mm}
\paragraph{\textbf{Case 2.}} When $\sigma'(u)<k<\sigma'(l)$ (Figure \ref{optimal_approximation}b),  the tangent line of $\sigma(x)$ at  $(u,\sigma(u))$ and the line crossing two endpoints are considered as the upper and lower bounds, respectively. We then have $h_{U}(x)=\sigma'(u)(x-u)+\sigma(u)$ and $h_{L}(x)=k(x-u)+\sigma(u)$.

% \vspace{-1mm}
\paragraph{\textbf{Case 3.}} When $\sigma'(l)<k$ and $\sigma'(u)<k$ (Figure \ref{optimal_approximation}c), the tangent line of $\sigma(x)$ at $(u,\sigma(u))$  is taken as the upper bound, while the tangent line of $\sigma(x)$ at $(l,\sigma(l))$  as the lower bound. We then have $h_{U}(x)=\sigma'(u)(x-u)+\sigma(u)$ and $h_{L}(x)=\sigma'(l)(x-l)+\sigma(l)$.

\begin{figure}[t!]
	\centering
	\includegraphics[width=0.48\textwidth]{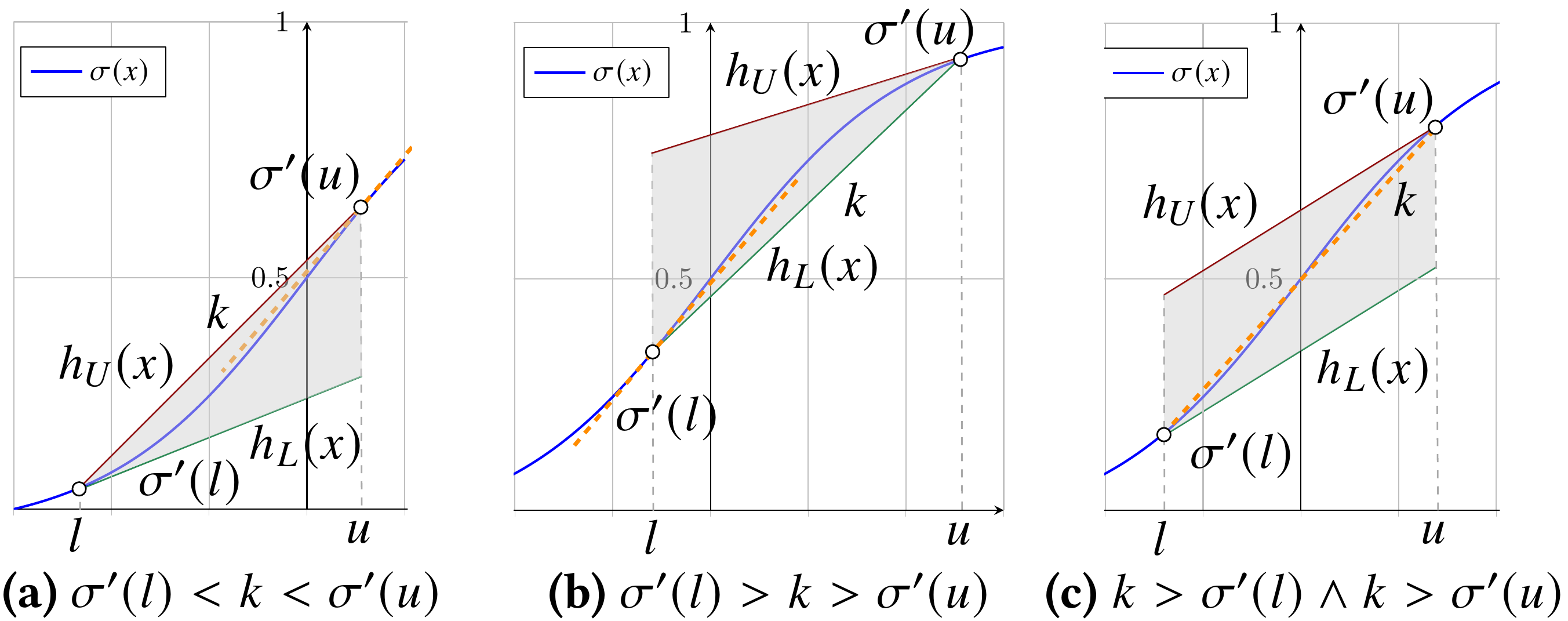}
 	\vspace{-6mm}
	\caption{The neuron-wise tightest linear approximation.}
	\label{optimal_approximation}
 	\vspace{-4mm}
\end{figure}

% \vspace{1ex}
Note that, Definition \ref{def:neuronwise} also considers the tightness characterizations in \cite{HenriksenL20,lyu2020fastened}. It is easy to prove that 
any other linear bound crossing the endpoints is less tight than the one defined in the above three cases according to the tightness definitions in \cite{HenriksenL20,lyu2020fastened}.  

\vspace{-1mm}
\subsection{Neuron-Wise vs. Network-Wise}
\label{subsec:conditions}
In this section, we study the relation between neuron-wise tightness and network-wise tightness. 
Although the neuron-wise tightest approximation does not overestimate the output range of a single neuron, it cannot guarantee that the composition for all the neurons is the network-wise tightest because the monotonicity of a neuron cannot be preserved by the next layer. 
The monotonicity may be altered by the weights between layers because the input function of each neuron in any hidden layer is compounded by the output functions in the previous layer multiplied by the weights. Hence, a sufficient condition of passing neuron-wise tightness to the network is to avoid breaking monotonicity during the propagation.

% Sigmoid average + std
\begin{table*}[ht]
	\footnotesize 
		\setlength{\tabcolsep}{1pt}
	\caption{Performance comparison  on non-negative Sigmoid networks between \textsc{NeWise} (NW) and existing tools, \textsc{DeepCert} (DC), \textsc{VeriNet} (VN), and \textsc{RobustVerifier} (RV).
		$t\times n$ refers to an FNN with $t$ layers and $n$ neurons per layer. $\mbox{CNN}_{t-c}$ denotes a CNN with $t$ layers and $c$ filters of size 3$\times$3. 
	}
 	\vspace{-2mm}
	\centering
	\resizebox{\textwidth}{!}{
		\begin{tabular}{|c|c|r|r|r|r|r|r|r|r|r|r|r|r|r|r|r|r|}
			\hline 
			\multirow{3}{*}{\textbf{Dataset}} & \multirow{3}{*}{\textbf{Model}} & \multirow{3}{*}{\textbf{\#Neur.}} &   \multicolumn{14}{c|}{\textbf{Certified Lower Bound}} & \multirow{3}{*}{\textbf{Time (s)}}  \\
			\hhline{~~~--------------~}
			& & & \multicolumn{7}{c|}{\textbf{Average}} & \multicolumn{7}{c|}{\textbf{ Standard Deviation}} & \\
			\hhline{~~~--------------~}
			& & & \textsc{NW} & \textsc{DC} &  Impr. (\%) & \textsc{VN} &  Impr. (\%) & \textsc{RV} &  Impr. (\%) & \textsc{NW} & \textsc{DC} &  Impr. (\%) & \textsc{VN} &  Impr. (\%) & \textsc{RV} &  Impr. (\%) &  \\
			\hline
			\hline
			\multirow{9}{*}{MNIST} 
			& 5x100              & 510    & 0.0091 & 0.0071 & \cellcolor{atomictangerine} 28.15  $\uparrow$ & 0.0071 & \cellcolor{atomictangerine} 27.25 $\uparrow$ & 0.0064 & \cellcolor{atomictangerine} 40.90 $\uparrow$ & 0.0057 & 0.0042 & \cellcolor{atomictangerine} 37.11 $\uparrow$ & 0.0042 & \cellcolor{atomictangerine} 35.48 $\uparrow$ & 0.0034 & \cellcolor{atomictangerine} 69.35 $\uparrow$ &   4.30 $\pm$0.02    \\ 
			& 3x700              & 2,110  & 0.0037 & 0.0030 & \cellcolor{atomictangerine} 24.92  $\uparrow$ & 0.0030 & \cellcolor{atomictangerine} 22.85 $\uparrow$ & 0.0029 & \cellcolor{atomictangerine} 27.05 $\uparrow$ & 0.0018 & 0.0013 & \cellcolor{atomictangerine} 41.86 $\uparrow$ & 0.0014 & \cellcolor{atomictangerine} 34.56 $\uparrow$ & 0.0013 & \cellcolor{atomictangerine} 41.86 $\uparrow$ & 117.94 $\pm$0.31  \\ 
			& $\mbox{CNN}_{6-5}$& 12,300  & 0.0968 & 0.0788 & \cellcolor{atomictangerine} 22.82  $\uparrow$ & 0.0778 & \cellcolor{atomictangerine} 24.37 $\uparrow$ & 0.0699 & \cellcolor{atomictangerine} 38.50 $\uparrow$ & 0.0372 & 0.0280 & \cellcolor{atomictangerine} 32.92 $\uparrow$ & 0.0276 & \cellcolor{atomictangerine} 35.09 $\uparrow$ & 0.0212 & \cellcolor{atomictangerine} 75.86 $\uparrow$ &   5.70 $\pm$0.42  \\ 
			& 3x50               & 160    & 0.0105 & 0.0088 & \cellcolor{apricot}         19.23  $\uparrow$ & 0.0088 & \cellcolor{apricot}         19.50 $\uparrow$ & 0.0080 & \cellcolor{atomictangerine} 31.42 $\uparrow$ & 0.0051 & 0.0038 & \cellcolor{atomictangerine} 32.72 $\uparrow$ & 0.0038 & \cellcolor{atomictangerine} 32.72 $\uparrow$ & 0.0029 & \cellcolor{atomictangerine} 71.86 $\uparrow$ &   0.14 $\pm$0.00  \\ 
			& 3x100              & 310    & 0.0139 & 0.0120 & \cellcolor{apricot}         15.46  $\uparrow$ & 0.0120 & \cellcolor{apricot}         15.56 $\uparrow$ & 0.0111 & \cellcolor{atomictangerine} 25.47 $\uparrow$ & 0.0071 & 0.0057 & \cellcolor{apricot}         24.82 $\uparrow$ & 0.0057 & \cellcolor{apricot}         23.30 $\uparrow$ & 0.0046 & \cellcolor{atomictangerine} 53.46 $\uparrow$ &   2.22 $\pm$0.02  \\ 
			& $\mbox{CNN}_{5-5}$& 10,680  & 0.0801 & 0.0708 & \cellcolor{apricot}         13.14  $\uparrow$ & 0.0704 & \cellcolor{apricot}         13.75 $\uparrow$ & 0.0683 & \cellcolor{apricot}         17.30 $\uparrow$ & 0.0238 & 0.0200 & \cellcolor{apricot}         18.87 $\uparrow$ & 0.0198 & \cellcolor{apricot}         20.50 $\uparrow$ & 0.0180 & \cellcolor{apricot}         32.20 $\uparrow$ &   2.88 $\pm$0.32  \\ 
			& $\mbox{CNN}_{3-2}$& 2,514   & 0.0521 & 0.0483 & \cellcolor{antiquewhite}     7.82  $\uparrow$ & 0.0483 & \cellcolor{antiquewhite}     7.94 $\uparrow$ & 0.0478 & \cellcolor{antiquewhite}     8.88 $\uparrow$ & 0.0180 & 0.0161 & \cellcolor{apricot}         12.13 $\uparrow$ & 0.0160 & \cellcolor{antiquewhite}    12.41 $\uparrow$ & 0.0156 & \cellcolor{apricot}         15.44 $\uparrow$ &   0.17 $\pm$0.04  \\ 
			& $\mbox{CNN}_{4-5}$& 8,680   & 0.0505 & 0.0473 & \cellcolor{antiquewhite}     6.68  $\uparrow$ & 0.0471 & \cellcolor{antiquewhite}     7.26 $\uparrow$ & 0.0464 & \cellcolor{antiquewhite}     8.81 $\uparrow$ & 0.0207 & 0.0186 & \cellcolor{apricot}         11.26 $\uparrow$ & 0.0183 & \cellcolor{antiquewhite}    12.84 $\uparrow$ & 0.0175 & \cellcolor{apricot}         17.87 $\uparrow$ &   1.17 $\pm$0.20  \\ 
			& $\mbox{CNN}_{3-4}$& 5,018   & 0.0448 & 0.0422 & \cellcolor{antiquewhite}     6.09  $\uparrow$ & 0.0421 & \cellcolor{antiquewhite}     6.24 $\uparrow$ & 0.0418 & \cellcolor{antiquewhite}     6.98 $\uparrow$ & 0.0156 & 0.0142 & \cellcolor{antiquewhite}     9.71 $\uparrow$ & 0.0141 & \cellcolor{antiquewhite}    10.18 $\uparrow$ & 0.0138 & \cellcolor{antiquewhite}    12.56 $\uparrow$ &   0.30 $\pm$0.08  \\ 
			\hline
			%\hline
			\multirow{6}{*}{\makecell{Fashion \\ MNIST} }
			& 4x100              & 410    & 0.0312 & 0.0188 & \cellcolor{atomictangerine} 65.48  $\uparrow$ & 0.0194 & \cellcolor{atomictangerine} 60.62 $\uparrow$ & 0.0159 & \cellcolor{atomictangerine} 96.22 $\uparrow$ & 0.0403 & 0.0210 & \cellcolor{atomictangerine} 92.28 $\uparrow$ & 0.0220 & \cellcolor{atomictangerine} 83.20 $\uparrow$ & 0.0176 & \cellcolor{atomictangerine} 129.47 $\uparrow$ & 3.31 $\pm$0.04   \\ 
			& 3x100              & 310    & 0.0326 & 0.0263 & \cellcolor{atomictangerine}         24.02  $\uparrow$ & 0.0270 & \cellcolor{atomictangerine} 21.03 $\uparrow$ & 0.0238 & \cellcolor{atomictangerine} 36.81 $\uparrow$ & 0.0335 & 0.0262 & \cellcolor{atomictangerine} 27.67 $\uparrow$ & 0.0282 & \cellcolor{apricot}         18.92 $\uparrow$ & 0.0234 & \cellcolor{atomictangerine}  43.22 $\uparrow$ & 2.22 $\pm$0.01 \\ 
			& $\mbox{CNN}_{5-5}$& 10,680  & 0.1303 & 0.1155 & \cellcolor{apricot}         12.81  $\uparrow$ & 0.1151 & \cellcolor{apricot}         13.22 $\uparrow$ & 0.1088 & \cellcolor{apricot}         19.72 $\uparrow$ & 0.0830 & 0.0714 & \cellcolor{apricot}         16.23 $\uparrow$ & 0.0721 & \cellcolor{apricot}         15.10 $\uparrow$ & 0.0636 & \cellcolor{apricot}          30.51 $\uparrow$ & 2.89 $\pm$0.33 \\ 
			& $\mbox{CNN}_{3-2}$& 2,514   & 0.0790 & 0.0713 & \cellcolor{apricot}         10.79  $\uparrow$ & 0.0713 & \cellcolor{apricot}         10.74 $\uparrow$ & 0.0695 & \cellcolor{apricot}         13.68 $\uparrow$ & 0.0497 & 0.0416 & \cellcolor{apricot}         19.55 $\uparrow$ & 0.0418 & \cellcolor{apricot}         19.06 $\uparrow$ & 0.0386 & \cellcolor{apricot}          28.98 $\uparrow$ & 0.17 $\pm$0.04 \\ 
			& $\mbox{CNN}_{4-5}$& 8,680   & 0.0959 & 0.0868 & \cellcolor{apricot}         10.40  $\uparrow$ & 0.0864 & \cellcolor{apricot}         10.90 $\uparrow$ & 0.0839 & \cellcolor{apricot}         14.19 $\uparrow$ & 0.0561 & 0.0486 & \cellcolor{apricot}         15.51 $\uparrow$ & 0.0482 & \cellcolor{apricot}         16.52 $\uparrow$ & 0.0453 & \cellcolor{apricot}          24.03 $\uparrow$ & 1.18 $\pm$0.21 \\ 
			& $\mbox{CNN}_{3-4}$& 5,018   & 0.0747 & 0.0694 & \cellcolor{antiquewhite}     7.52  $\uparrow$ & 0.0693 & \cellcolor{antiquewhite}     7.72 $\uparrow$ & 0.0681 & \cellcolor{antiquewhite}     9.70 $\uparrow$ & 0.0465 & 0.0410 & \cellcolor{antiquewhite}    13.32 $\uparrow$ & 0.0409 & \cellcolor{antiquewhite}    13.59 $\uparrow$ & 0.0391 & \cellcolor{antiquewhite}     18.85 $\uparrow$ & 0.30 $\pm$0.09 \\ 
			\hline
			%\hline
			\multirow{7}{*}{CIFAR10} 
			& 9x100 			  & 910   & 0.0315 & 0.0211 & \cellcolor{atomictangerine} 49.03  $\uparrow$ & 0.0214 & \cellcolor{atomictangerine} 46.94 $\uparrow$ & 0.0192 & \cellcolor{atomictangerine} 63.58 $\uparrow$ & 0.0280 & 0.0183 & \cellcolor{atomictangerine} 52.70 $\uparrow$ & 0.0186 & \cellcolor{atomictangerine} 50.32 $\uparrow$ & 0.0133 & \cellcolor{atomictangerine} 110.07 $\uparrow$ & 4.92 $\pm$0.01   \\ 
			& 6x100 			  & 610   & 0.0221 & 0.0174 & \cellcolor{atomictangerine} 27.08  $\uparrow$ & 0.0176 & \cellcolor{atomictangerine} 26.14 $\uparrow$ & 0.0170 & \cellcolor{atomictangerine} 30.22 $\uparrow$ & 0.0165 & 0.0118 & \cellcolor{atomictangerine} 40.05 $\uparrow$ & 0.0120 & \cellcolor{atomictangerine} 37.82 $\uparrow$ & 0.0111 & \cellcolor{atomictangerine}  48.24 $\uparrow$ & 3.04 $\pm$0.02   \\ 
			& 5x100 			  & 510   & 0.0200 & 0.0167 & \cellcolor{apricot}         19.76  $\uparrow$ & 0.0167 & \cellcolor{apricot}         19.47 $\uparrow$ & 0.0163 & \cellcolor{atomictangerine} 22.40 $\uparrow$ & 0.0137 & 0.0104 & \cellcolor{apricot}         31.80 $\uparrow$ & 0.0104 & \cellcolor{apricot}         31.42 $\uparrow$ & 0.0099 & \cellcolor{apricot}          38.45 $\uparrow$ & 2.44 $\pm$0.01   \\ 
			& 3x50  			  & 160   & 0.0206 & 0.0178 & \cellcolor{apricot}         15.43  $\uparrow$ & 0.0179 & \cellcolor{apricot}         14.66 $\uparrow$ & 0.0176 & \cellcolor{apricot}         16.88 $\uparrow$ & 0.0144 & 0.0113 & \cellcolor{apricot}         27.57 $\uparrow$ & 0.0115 & \cellcolor{apricot}         25.24 $\uparrow$ & 0.0110 & \cellcolor{apricot}          31.30 $\uparrow$ & 0.43 $\pm$0.00   \\ 
			& 4x100 			  & 410   & 0.0161 & 0.0140 & \cellcolor{apricot}         15.23  $\uparrow$ & 0.0140 & \cellcolor{apricot}         14.81 $\uparrow$ & 0.0138 & \cellcolor{apricot}         16.56 $\uparrow$ & 0.0111 & 0.0089 & \cellcolor{apricot}         24.61 $\uparrow$ & 0.0090 & \cellcolor{apricot}         23.63 $\uparrow$ & 0.0087 & \cellcolor{apricot}          27.62 $\uparrow$ & 1.85 $\pm$0.01   \\ 
			& $\mbox{CNN}_{3-4}$ & 6,746  & 0.0187 & 0.0181 & \cellcolor{antiquewhite}     3.38  $\uparrow$ & 0.0181 & \cellcolor{antiquewhite}     3.32 $\uparrow$ & 0.0181 & \cellcolor{antiquewhite}     3.43 $\uparrow$ & 0.0109 & 0.0103 & \cellcolor{antiquewhite}     5.93 $\uparrow$ & 0.0103 & \cellcolor{antiquewhite}     5.83 $\uparrow$ & 0.0103 & \cellcolor{antiquewhite}      6.13 $\uparrow$ & 0.56 $\pm$0.08  \\ 
			& $\mbox{CNN}_{3-2}$ & 3,378  & 0.0185 & 0.0180 & \cellcolor{antiquewhite}     2.49  $\uparrow$ & 0.0180 & \cellcolor{antiquewhite}     2.55 $\uparrow$ & 0.0180 & \cellcolor{antiquewhite}     2.67 $\uparrow$ & 0.0125 & 0.0120 & \cellcolor{antiquewhite}     4.34 $\uparrow$ & 0.0120 & \cellcolor{antiquewhite}     4.34 $\uparrow$ & 0.0120 & \cellcolor{antiquewhite}      4.60 $\uparrow$ & 0.30 $\pm$0.06  \\ 
			\hline
	\end{tabular}}
	\label{certified lower bound on Sigmoid models}
 	\vspace{-2mm}
\end{table*}

% \vspace{-1mm}
\begin{definition}[Network-wise monotonous]\label{stren_condi}
	Given a $k$-layer neural network $f:\mathbb{R}^n \to \mathbb{R}^m$ and its input $x=[x_1,...,x_n]$, f is called network-wise monotonous if the following three conditions hold:
% 	\vspace{-1mm}
	\begin{enumerate}
		\item $\forall t_1, t_2\in \mathbb{Z}, 1 \le t_1 \le k \wedge 1 \le t_2 \le k$, 
		\item $\forall r_1, r_2\in \mathbb{Z}, 1 \le r_1 \le n^{t_1} \wedge  1 \le r_2 \le n^{t_2}$,
		\item $\forall i \in \mathbb{Z}, 1 \le i \le n, \phi_{r_1}^{t_1}(x_i),\phi_{r_2}^{t_2}(x_i)$ are both either monotonically increasing or decreasing.
	\end{enumerate}

\end{definition}
\noindent Intuitively, a monotonous network requires all the neurons to share the same monotonicity w.r.t. the input so that they can achieve the  maximum or minimum on the same input. 
\begin{lemma}\label{2con->mono}
	A neural network is network-wise monotonous if the network satisfies the following two conditions:
	\begin{enumerate}
		\item For the first layer, for any selected $i\in \mathbb{Z}, 1\le i \le n$, items in the $i$-th column of $W^1$ are all positive or all negative;
		\item Every item in weights from the second layer to the last layer is non-negative.
	\end{enumerate}
\end{lemma}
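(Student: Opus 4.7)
The plan is to establish the lemma by induction on the layer index $t$, proving the stronger invariant that for every input coordinate $x_i$ there exists a common direction $d_i \in \{+,-\}$ such that every pre-activation $\phi^t_r(x_i)$, across all layers $t$ and all neurons $r$, is monotonic in $x_i$ in direction $d_i$. Since the direction is shared by every neuron in the network, any two neurons indexed by $(t_1,r_1)$ and $(t_2,r_2)$ satisfy clause~(3) of Definition~\ref{stren_condi}, which is exactly what ``network-wise monotonous'' requires. The direction $d_i$ will be seeded at layer~1 by the sign of the $i$-th column of $W^1$ and then transported through every subsequent layer.

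For the base case $t=1$, the affine map $\phi^1_r(x) = W^1_{r,:} x + b^1_r$ has $\partial \phi^1_r / \partial x_i = W^1_{r,i}$, so the lemma's Condition~(1) — all entries of the $i$-th column of $W^1$ sharing one sign — immediately makes every first-layer neuron monotonic in $x_i$ in the same direction $d_i$. For the inductive step, suppose every $\phi^t_j$ is monotonic in $x_i$ in direction $d_i$. Sigmoid, Tanh, and Arctan are strictly monotonically increasing on $\mathbb{R}$, so $\sigma(\phi^t_j)$ preserves direction $d_i$. The next layer's pre-activation then satisfies
\begin{equation*}
\phi^{t+1}_r(x) \;=\; \sum_{j} W^{t+1}_{r,j}\,\sigma\bigl(\phi^t_j(x)\bigr) + b^{t+1}_r,
\end{equation*}
with the convolutional variant reducing to the same structural expression (a convolution is just a sparse linear combination). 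Because $t+1 \geq 2$, the lemma's Condition~(2) gives $W^{t+1}_{r,j} \geq 0$ for all $r,j$, so $\phi^{t+1}_r$ is a non-negative combination of functions that are co-monotonic in direction $d_i$, hence itself monotonic in direction $d_i$. This closes the induction.

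The only delicate point is handling affine and convolutional $f^t$ uniformly, but a convolution with non-negative kernel entries is precisely a structured non-negative linear combination of previous-layer outputs, so the argument transfers verbatim. The three conceptual ingredients are: (i) interpreting monotonicity coordinate-wise with other coordinates held fixed, matching Definition~\ref{stren_condi}; (ii) the admissible activations being globally monotonically increasing, so an activation layer never flips direction; and (iii) a non-negative linear combination of co-monotonic functions remaining co-monotone. Beyond formulating the stronger invariant ``a single shared direction $d_i$ for every neuron'' up front, I do not expect a substantive obstacle: Condition~(1) plants $d_i$ at layer~1, and Condition~(2) forbids any subsequent layer from inverting it.
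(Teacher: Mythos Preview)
Your proposal is correct and follows essentially the same approach as the paper: induction on the layer index $t$, with the shared direction $d_i$ fixed at layer~1 by the sign of the $i$-th column of $W^1$, then propagated through the strictly increasing activation and the non-negative weights of later layers. Your write-up is slightly more explicit than the paper's (you state the invariant cleanly, address convolutions, and avoid the paper's implicit use of strict inequality where weak monotonicity suffices), but the logical skeleton is identical.
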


Next, we formulate the most important property of our neuron-wise  approximation approach as the following theorem, stating that the composition of all neuron-wise tightest approximations is the network-wise tightest if the network is monotonous. 

\begin{theorem}\label{neuron->network}
	The composition of the neuron-wise tightest approximations is the network-wise  tightest if the network satisfies the following two conditions:
	\begin{enumerate}
		\item For the first layer, the items in each column of the weight matrix are all positive or negative;
		\item Every item in weights between remained layers is non-negative.
	\end{enumerate}
\end{theorem}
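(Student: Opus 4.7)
The proof strategy is induction on layer depth, establishing that the composed lower bound $A_L^t x + B_L^t$ coincides with $\phi_r^t(x)$ at a specific corner $x^L$ of the input box $\mathbb{B}_\infty(x_0,\epsilon)$ (and dually, the upper bound at a corner $x^U$). Once this ``tight-at-a-corner'' property reaches the output layer, network-wise tightness follows immediately: any rival linear lower bound $\hat f_{L,s}$ must satisfy $\hat f_{L,s}(x)\le \phi_s^k(x)$ throughout the box, hence $\min_x \hat f_{L,s}(x) \le \phi_s^k(x^L)=\min_x f_{L,s}(x)$, fulfilling the inequality required by Definition \ref{global_opt}; the upper-bound case is symmetric.

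First I would invoke Lemma \ref{2con->mono} so that the two assumed weight conditions yield network-wise monotonicity. Condition (1) forces every $\phi_r^t$ to be jointly monotone in each coordinate $x_i$, so all hidden and output functions share a common minimising vertex $x^L$ and a common maximising vertex $x^U$ of $\mathbb{B}_\infty(x_0,\epsilon)$. Consequently, the preactivation intervals used when approximating the activation on neuron $(t,r)$ satisfy $l_r^t=\phi_r^t(x^L)$ and $u_r^t=\phi_r^t(x^U)$ exactly -- it is precisely this ``single corner realises the extremum of every neuron'' property that will allow the local neuron-wise tightness to lift to the whole network.

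The induction is trivial for $t=1$ since $\phi^1$ is affine. For the step, condition (2) kills the $W_{<0}$ branches in the recurrence from the excerpt, collapsing it to
\begin{align*}
A_{L,r'}^{t+1}x + B_{L,r'}^{t+1} \;=\; \sum_{q} W^{t+1}_{r',q}\bigl[\alpha_{L,q}^t (A_{L,q}^t x + B_{L,q}^t) + \beta_{L,q}^t\bigr] + b_{r'}^{t+1}.
\end{align*}
Setting $x=x^L$ and using the inductive hypothesis $A_{L,q}^t x^L + B_{L,q}^t = \phi_q^t(x^L) = l_q^t$, the bracketed term becomes $h_{L,q}(l_q^t)$, which by Definition \ref{def:neuronwise} equals $\sigma(l_q^t)$. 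The whole expression therefore telescopes to $\sum_q W^{t+1}_{r',q}\sigma(l_q^t)+b_{r'}^{t+1}=\phi_{r'}^{t+1}(x^L)$, completing the induction. The parallel argument at $x^U$, exploiting $h_{U,q}(u_q^t)=\sigma(u_q^t)$, handles the upper bound.

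The main obstacle is making this single-corner tightness survive the layer-wise propagation, and both assumptions enter exactly here: condition (2) removes the cross terms $W_{<0}\alpha_U$ and $W_{<0}\alpha_L$ that would otherwise swap the roles of upper and lower bounds between layers, while condition (1) guarantees that $x^L$ and $x^U$ are neuron-independent, so that all previous-layer bounds can be simultaneously tight at the same input. Without either, neuron-wise tightness in the sense of Definition \ref{def:neuronwise} may fail to realise the network-wise extremum of Definition \ref{global_opt}, explaining why the theorem is stated conditionally rather than in full generality.
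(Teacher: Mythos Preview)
Your proof is correct and shares the same two pillars as the paper's---monotonicity from Lemma~\ref{2con->mono} and the endpoint identity $h_L(l)=\sigma(l)$, $h_U(u)=\sigma(u)$ from Definition~\ref{def:neuronwise}---but the overall architecture differs. The paper proceeds through a chain of auxiliary results: Lemma~\ref{lem:Exchangeable} decomposes $\min_x\phi_j^{t+1}(x)$ into $\sum_r w_{j,r}^{t+1}\min_x\sigma(\phi_r^t(x))$, Theorem~\ref{theorem:compoundfunction} shows each composed $h_{L,r}^t\circ\phi_r^t$ attains its minimum at $h_{L,r}^t(l_r^t)$, and Theorem~\ref{independent-optimal} establishes layer-local optimality. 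The global statement is then finished by a \emph{backward} contradiction: assuming some rival $\hat f_L$ beats $f_L$ at the output, they chase the strict inequality $\hat l_r^{k-1}>l_r^{k-1}$ back through the layers until it forces $\hat l_i^0>l_i^0$ on the fixed input box. Your route is a single \emph{forward} induction on the invariant ``$A_L^t x^L+B_L^t=\phi^t(x^L)$ at the common minimising corner $x^L$'', followed by the one-line observation that any sound rival lower bound satisfies $\min_x\hat f_{L,s}\le\hat f_{L,s}(x^L)\le\phi_s^k(x^L)=\min_x f_{L,s}$. This bypasses both the layer-local optimality theorem and the backward contradiction, and is arguably cleaner.

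One expository point to tighten: in your second paragraph you say monotonicity ``consequently'' gives $l_r^t=\phi_r^t(x^L)$, but the algorithm computes $l_r^t=\min_x(A_{L,r}^t x+B_{L,r}^t)$, not $\min_x\phi_r^t(x)$. That these coincide is precisely what your induction (together with soundness of the composed bound) delivers; it is not a consequence of monotonicity alone. You do use it correctly in the inductive step---writing ``$A_{L,q}^t x^L+B_{L,q}^t=\phi_q^t(x^L)=l_q^t$'' as part of the hypothesis---so the logic is fine, but the claim belongs inside the induction rather than before it.
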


Theorem \ref{neuron->network} holds as  both conditions guarantee the monotonicity of  the network. 
If a neural network is monotonous, then the composition of the neuron-wise tightest approximations is a network-wise tightest  approximation with respect to the robustness verification. 
In particular, starting from the second layer, the neuron-wise tightness  is preserved with only non-negative weights during the layer-wise propagation (the second condition). See Appendix B for the complete proof.

\begin{example}
	Assume that we replace the three negative weights of the neural network in Figure \ref{fig:veriexample} with $1$, $5$, and $1$, respectively. The tightest approximations to  $x_3$ and $x_4$, returned by Algorithm \ref{Algorithm1}, are exactly the same as those returned by the neuron-wise tightest approximations in Section \ref{subsec:our-neuron-wise-method} (i.e., $[1.430, 10.570]$). 
\end{example}
 \vspace{-1mm}

	\section{ Experiments}\label{sec:exp}
We evaluate our  approximation method concerning both precision and efficiency in the robustness verification of Sigmoid-like neural networks. Our goal is threefold:
\begin{enumerate}
	\item  To validate our mathematical proof of Theorem \ref{neuron->network} 
	via extensive experimental results (i.e., always returning the largest certified lower bounds for non-negative networks);
	\item To demonstrate that Algorithm \ref{Algorithm1} can always compute tighter lower bounds for 1-hidden-layer networks; 
	\item To explore our approach's effectiveness under general neural networks
	with \emph{mixed}  weights.
	
\end{enumerate}

 \vspace{-2mm}
\subsection{Benchmarks and Experimental Setup}
 %\vspace{-1mm}
\noindent \textbf{Competitors.} 
We consider three representative approximations in the literature:  \textsc{DeepCert} \cite{wu2021tightening}, \textsc{VeriNet} \cite{HenriksenL20}, and \textsc{RobustVerifier} \cite{lin2019robustness}. For a fair comparison, we implemented in Python
all the competing approaches including our new algorithm called 
\textsc{NeWise}.

%\subsubsection*{Datasets} 
 \vspace{1ex}
\noindent \textbf{Datasets and Networks.}
We have conducted three sets of experiments on fully connected (FNNs) and convolutional (CNNs) networks: We focus on CNNs due to their effectiveness
in a wide range of visual recognition applications~\cite{DBLP:journals/cacm/KrizhevskySH17,DBLP:conf/aaai/PanWDY17,DBLP:conf/cvpr/LongSD15, DBLP:journals/pami/RenHG017, DBLP:conf/cvpr/ToshevS14}; we also consider FNNs to 
expand the architecture variety. We trained all the networks 
on the image databases
MNIST \cite{lecun1998gradient}, Fashion MNIST \cite{xiao2017/online}, and CIFAR10~\cite{krizhevsky2009learning}.
We chose the first 100 images from the test set of each dataset as in \cite{boopathy2019cnn,zhang2018efficient,wu2021tightening}, among which only correctly-classified images by the neural network are considered in our experiments.

For each network architecture, we trained three variant neural networks using the Sigmoid, Tanh, and Arctan activation functions, respectively. 
In Experiment \textbf{I} the networks contain only non-negative weights. We used Adam or SGD optimizer with at least 50 epochs of batch size 128. The test set accuracy of networks trained on MNIST, Fashion MNIST, and CIFAR10 is around 0.9, 0.85, and 0.4, respectively.
In Experiment \textbf{II} and \textbf{III}  we trained 1-hidden-layer networks and used pre-trained  models  \cite{singh2019abstract,wu2021tightening, tjx_models} (as in Table~\ref{certified lower bound results on general models}, Section \ref{subsec:empirical})  with no constraint on the weights. Note that the number of neurons in FNNs can be considerably fewer than that in CNNs~\cite{DBLP:journals/pieee/LeCunBBH98}, while the networks can still achieve up to 0.99 test accuracy.

 \vspace{1ex}
\noindent \textbf{Metrics.}
We use certified lower bound to assess  \emph{effectiveness}, and $(\epsilon'-\epsilon)/\epsilon$ to quantify the precision improvement, where $\epsilon'$ and $\epsilon$ denote the lower bounds certified by  \textsc{NeWise} and each competing approach, respectively. 
We consider both \emph{average} and \emph{standard deviation} (SD) of certified lower bounds; in particular, SD  is a suitable measure of sensitivity of the approximations to input images: A larger SD implies a better sensitivity \cite{wu2021tightening}. 
For \emph{efficiency}, we record the average computation time over the (correctly-classified) images. 

\vspace{1ex}
\noindent \textbf{Experimental Setup.}
All the experiments were conducted on a workstation running Ubuntu 18.04 with a 2.35GHz 32-core AMD EPYC 7452 CPU and 128 GB memory.

% \vspace{-3mm}
\subsection{Experimental Results}
%\subsubsection*{Metrics}

%\subsubsection*{Experiment $\uppercase\expandafter{\romannumeral1}$}

\noindent 
\textbf{Experiment I.}
Table \ref{certified lower bound on Sigmoid models} shows the comparison results  for 22  Sigmoid models with non-negative weights. Regarding the precision of verification results,  our \textsc{NeWise} computes significantly larger certified lower bounds  than the competitors for \emph{all} the models. In particular, for \emph{average}, \textsc{NeWise} achieves up to 96.22\%  improvement, i.e., FNNs with 4 hidden layers trained on Fashion MNIST. 
\textsc{NeWise}
improves the precision even more with 
\emph{standard deviation} (up to 129.33\%).  This indicates that our approach is more sensitive to input images compared to the other approaches:
The more the certified lower bound is improved,
the larger deviation the network exhibits.

\begin{table}
	\footnotesize 
	\setlength{\tabcolsep}{2pt}
	\caption{Performance comparison of \textsc{NeWise} (NW) with \textsc{DeepCert} (DC), \textsc{VeriNet} (VN), and \textsc{RobustVerifier} (RV) on non-negative Tanh networks. 
		$t\times n$ refers to an FNN with $t$ layers and $n$ neurons per layer. $\mbox{CNN}_{t-c}$ denotes a CNN with $t$ layers and $c$ filters of size 3$\times$3.}
% 	\vspace{-2mm}
	\centering
	\begin{tabular}{|c|l|r|r|r|r|r|r|r|}
		\hline 
		\multirow{3}{*}{\textbf{Dataset}} & \multirow{3}{*}{\textbf{Model}} &   \multicolumn{7}{c|}{\textbf{Certified Lower Bound (Standard Deviation)}}   \\
		\hhline{~~-------}
		& & \textsc{NW} & \textsc{DC} &  Impr. (\%) & \textsc{VN} &  Impr. (\%) & \textsc{RV} &  Impr. (\%)   \\
		\hline
		\hline
		\multirow{8}{*}{MNIST} 
		& 5x100               & 0.0018 & 0.0005 & \cellcolor{atomictangerine} 233.96 $\uparrow$ & 0.0006 & \cellcolor{atomictangerine} 195.00 $\uparrow$ & 0.0006 & \cellcolor{atomictangerine} 216.07 $\uparrow$   \\
		& 3x700               & 0.0027 & 0.0008 & \cellcolor{atomictangerine} 251.28 $\uparrow$ & 0.0009 & \cellcolor{atomictangerine} 191.49 $\uparrow$ & 0.0009 & \cellcolor{atomictangerine} 191.49 $\uparrow$   \\ 
		& 3x400               & 0.0018 & 0.0007 & \cellcolor{atomictangerine} 166.67 $\uparrow$ & 0.0008 & \cellcolor{atomictangerine} 128.57 $\uparrow$ & 0.0007 & \cellcolor{atomictangerine} 137.84 $\uparrow$   \\ 
		& $\mbox{CNN}_{6-5}$  & 0.0243 & 0.0115 & \cellcolor{atomictangerine} 111.84 $\uparrow$ & 0.0131 & \cellcolor{apricot}          85.94 $\uparrow$ & 0.0087 & \cellcolor{atomictangerine} 179.13 $\uparrow$   \\ 
		& 3x50                & 0.0012 & 0.0009 & \cellcolor{apricot}          29.79 $\uparrow$ & 0.0010 & \cellcolor{apricot}          27.08 $\uparrow$ & 0.0008 & \cellcolor{apricot}          45.24 $\uparrow$   \\ 
		& $\mbox{CNN}_{3-4}$  & 0.0067 & 0.0055 & \cellcolor{antiquewhite}     21.05 $\uparrow$ & 0.0058 & \cellcolor{antiquewhite}     14.80 $\uparrow$ & 0.0055 & \cellcolor{antiquewhite}     20.83 $\uparrow$   \\ 
		& $\mbox{CNN}_{5-5}$  & 0.0108 & 0.0090 & \cellcolor{antiquewhite}     20.24 $\uparrow$ & 0.0092 & \cellcolor{antiquewhite}     17.50 $\uparrow$ & 0.0087 & \cellcolor{antiquewhite}     24.54 $\uparrow$   \\ 
		& $\mbox{CNN}_{4-5}$  & 0.0074 & 0.0061 & \cellcolor{antiquewhite}     21.21 $\uparrow$ & 0.0063 & \cellcolor{antiquewhite}     17.19 $\uparrow$ & 0.0060 & \cellcolor{antiquewhite}     23.63 $\uparrow$   \\ 
		\hline 
		\multirow{6}{*}{\makecell{Fashion\\MNIST}} 
		& 3x100               & 0.0156 & 0.0105 & \cellcolor{atomictangerine} 48.81 $\uparrow$ & 0.0110 & \cellcolor{atomictangerine} 42.04 $\uparrow$ & 0.0091 & \cellcolor{atomictangerine} 70.79 $\uparrow$   \\
		& $\mbox{CNN}_{4-5}$  & 0.0188 & 0.0134 & \cellcolor{atomictangerine} 41.12 $\uparrow$ & 0.0139 & \cellcolor{apricot}         35.83 $\uparrow$ & 0.0129 & \cellcolor{atomictangerine} 45.82 $\uparrow$   \\ 
		& $\mbox{CNN}_{6-5}$  & 0.0329 & 0.0237 & \cellcolor{apricot}         38.83 $\uparrow$ & 0.0242 & \cellcolor{apricot}         35.67 $\uparrow$ & 0.0180 & \cellcolor{atomictangerine} 82.66 $\uparrow$   \\ 
		& 2x100               & 0.0109 & 0.0081 & \cellcolor{apricot}         35.27 $\uparrow$ & 0.0085 & \cellcolor{antiquewhite}    28.59 $\uparrow$ & 0.0077 & \cellcolor{apricot}         41.95 $\uparrow$   \\ 
		& 2x200               & 0.0102 & 0.0076 & \cellcolor{antiquewhite}    33.38 $\uparrow$ & 0.0080 & \cellcolor{antiquewhite}    27.06 $\uparrow$ & 0.0076 & \cellcolor{antiquewhite}    33.55 $\uparrow$   \\ 
		& $\mbox{CNN}_{5-5}$  & 0.0201 & 0.0153 & \cellcolor{antiquewhite}    31.52 $\uparrow$ & 0.0158 & \cellcolor{antiquewhite}    27.03 $\uparrow$ & 0.0125 & \cellcolor{atomictangerine} 60.69 $\uparrow$   \\  
		\hline 
		\multirow{7}{*}{CIFAR10} 
		& 3x200              & 0.0176 & 0.0083 & \cellcolor{atomictangerine} 113.30 $\uparrow$ & 0.0092 & \cellcolor{atomictangerine} 90.91 $\uparrow$ & 0.0080 & \cellcolor{atomictangerine} 119.40 $\uparrow$   \\
		& 3x50               & 0.0111 & 0.0073 & \cellcolor{apricot}          53.52 $\uparrow$ & 0.0077 & \cellcolor{apricot}         44.73 $\uparrow$ & 0.0071 & \cellcolor{apricot}          56.32 $\uparrow$   \\ 
		& 3x100              & 0.0435 & 0.0243 & \cellcolor{apricot}          78.79 $\uparrow$ & 0.0270 & \cellcolor{apricot}         61.35 $\uparrow$ & 0.0256 & \cellcolor{apricot}          69.66 $\uparrow$   \\ 
		& 3x400              & 0.0441 & 0.0245 & \cellcolor{apricot}          79.67 $\uparrow$ & 0.0291 & \cellcolor{apricot}         51.58 $\uparrow$ & 0.0253 & \cellcolor{apricot}          74.69 $\uparrow$   \\ 
		& $\mbox{CNN}_{3-2}$ & 0.0106 & 0.0102 & \cellcolor{antiquewhite}      4.01 $\uparrow$ & 0.0102 & \cellcolor{antiquewhite}     4.01 $\uparrow$ & 0.0102 & \cellcolor{antiquewhite}      4.21 $\uparrow$   \\  
		& $\mbox{CNN}_{3-4}$ & 0.0062 & 0.0061 & \cellcolor{antiquewhite}      1.80 $\uparrow$ & 0.0061 & \cellcolor{antiquewhite}     1.80 $\uparrow$ & 0.0061 & \cellcolor{antiquewhite}      1.96 $\uparrow$   \\ 
		& $\mbox{CNN}_{3-5}$ & 0.0066 & 0.0065 & \cellcolor{antiquewhite}      1.86 $\uparrow$ & 0.0065 & \cellcolor{antiquewhite}     1.86 $\uparrow$ & 0.0065 & \cellcolor{antiquewhite}      2.02 $\uparrow$   \\ 
		\hline 
	\end{tabular}
	\label{certified lower bound on nn Tanh models}
	\vspace{-3mm}
\end{table}

Regarding efficiency, all the approaches incur similar overhead as expected (they share the same complexity, i.e., $O(1)$ on each neuron).
We use $p\pm q$ to denote their average time cost $p$  and the size  of the interval $2q$. Table \ref{certified lower bound on nn Tanh models} presents the results on the Tanh models. \textsc{NeWise} computes \emph{even larger} certified lower  bounds, e.g., with up to 251.28\%. We omit the similar time overheads in Table \ref{certified lower bound on nn Tanh models}. See Appendix C for the complete results, including those on the Arctan models. 
% ; see
% % \cite[Appendix C]{techrep}
% Appendix C
% for the complete results, including those on the Arctan models.

All these experimental results provide strong independent validation of our  mathematical proof of Theorem \ref{neuron->network}.

% Tanh std
%  \vspace{1ex}
\noindent 
\textbf{Experiment II.} We evaluate the performance of Algorithm \ref{Algorithm1} on 1-hidden-layer networks. Table \ref{alg on Sigmoid models} shows the comparison results with the other three tools. We only show the metric of standard deviation due to space limit. 
As shown in the table, Algorithm \ref{Algorithm1} can compute larger bounds with up to 160.66\% improvement.
Complete results are available in the 
%  \cite[Appendix C]{techrep}
Appendix C
. 

Regarding efficiency, the searching algorithm needs more time because it is in polynomial time, unlike the constant-time approach for the non-negative models. 
Nevertheless, the gradient-decent-based approach has been proven an efficient and practical solution to such convex optimization problems \cite{haji2021comparison}. 
When the size of a network is reasonably small, such overhead is acceptable,  compared with the improvement of the verification results. 

%  \vspace{1ex}
\noindent 
\textbf{Experiment III.} 
Despite the infeasibility of network-wise tightest approximations in the general case (Section \ref{sec:network-wise}), we have explored the performance of our approximation method and the competitors on the networks of mixed weights. 
Table \ref{certified lower bound on general models} shows the certified lower bounds returned by each approach for 11 CNNs and 9 FNNs.  

First,
the performance of each approach as compared with the others  varies under different mixed-weight models. This 
coincides with our analysis in Section \ref{sec:network-wise}: 
Pure neuron-wise tightness does not imply a network-wise tightness. Moreover, we observe that 
our \textsc{NeWise} performs surprisingly better than other approaches on all the experimented CNNs, while \textsc{DeepCert}  and \textsc{VeriNet} return larger certified lower bounds on the FNNs. The results evidenced network architecture is another factor influencing the verification. One possible reason is that a convolutional neural network is more possible to be monotonic based on the fact that  the neurons' weights on the same layer are constrained to be identical~\cite{DBLP:journals/pieee/LeCunBBH98}.

% the data of Alg.1 needs to update
% Alg. 1 std
\begin{table}[t]
	\footnotesize 
	\setlength{\tabcolsep}{2pt}
	\caption{Performance comparison of Algorithm \ref{Algorithm1} with \textsc{DeepCert} (DC), \textsc{VeriNet} (VN), and \textsc{RobustVerifier} (RV) on 1-hidden-layer Sigmoid networks. $t\times n$ refers to an FNN with $t$ layers and $n$ neurons per layer. $\mbox{CNN}_{t-c-f}$ denotes a CNN with $t$ layers and $c$ filters of size $f\times f$. $\ast$ and ${+}$ mark the models trained on MNIST and Fashion MNIST, respectively.}
% 	\vspace{-2mm}
	\centering
	\begin{tabular}{|c|c|r|r|r|r|r|r|r|}
		\hline 
		\multirow{3}{*}{\textbf{Arch.}} & \multirow{3}{*}{\textbf{Model}} &   \multicolumn{7}{c|}{\textbf{Certified Lower Bound (Standard Deviation)}}   \\
		\hhline{~~-------}
		& & \textsc{Alg.1} & \textsc{DC} &  Impr. (\%) & \textsc{VN} &  Impr. (\%) & \textsc{RV} &  Impr. (\%)   \\
		\hline
		\hline
		\multirow{10}{*}{CNN} 
		& $\mbox{CNN}_{2-1-5}$$^\ast$ & 0.0358 & 0.0145 & \cellcolor{atomictangerine} 146.32 $\uparrow$ & 0.0143 & \cellcolor{atomictangerine} 150.98 $\uparrow$ & 0.0137 & \cellcolor{atomictangerine} 160.66$\uparrow$   \\
		& $\mbox{CNN}_{2-2-5}$$^\ast$ & 0.0308 & 0.0208 & \cellcolor{apricot}          47.82 $\uparrow$ & 0.0207 & \cellcolor{apricot}          48.96 $\uparrow$ & 0.0187 & \cellcolor{apricot}          64.23$\uparrow$  \\ 
		& $\mbox{CNN}_{2-3-5}$$^\ast$ & 0.0305 & 0.0197 & \cellcolor{apricot}          54.80 $\uparrow$ & 0.0196 & \cellcolor{apricot}          55.28 $\uparrow$ & 0.0176 & \cellcolor{apricot}          73.57$\uparrow$  \\ 
		& $\mbox{CNN}_{2-4-5}$$^\ast$ & 0.0419 & 0.0233 & \cellcolor{apricot}          79.70 $\uparrow$ & 0.0232 & \cellcolor{apricot}          80.56 $\uparrow$ & 0.0210 & \cellcolor{apricot}          99.40$\uparrow$   \\
		& $\mbox{CNN}_{2-5-3}$$^\ast$ & 0.0319 & 0.0182 & \cellcolor{apricot}          75.22 $\uparrow$ & 0.0182 & \cellcolor{apricot}          75.89 $\uparrow$ & 0.0176 & \cellcolor{apricot}          81.59$\uparrow$   \\
		& $\mbox{CNN}_{2-1-5}$$^+$    & 0.0497 & 0.0385 & \cellcolor{antiquewhite}     29.08 $\uparrow$ & 0.0386 & \cellcolor{antiquewhite}     28.74 $\uparrow$ & 0.0348 & \cellcolor{apricot}          42.52$\uparrow$  \\ 
		& $\mbox{CNN}_{2-2-5}$$^+$    & 0.0547 & 0.0353 & \cellcolor{apricot}          54.77 $\uparrow$ & 0.0355 & \cellcolor{apricot}          53.94 $\uparrow$ & 0.0311 & \cellcolor{apricot}          75.66$\uparrow$  \\ 
		& $\mbox{CNN}_{2-3-5}$$^+$    & 0.0541 & 0.0371 & \cellcolor{apricot}          45.76 $\uparrow$ & 0.0374 & \cellcolor{apricot}          44.71 $\uparrow$ & 0.0344 & \cellcolor{apricot}          57.28$\uparrow$  \\ 
		& $\mbox{CNN}_{2-4-5}$$^+$    & 0.0540 & 0.0366 & \cellcolor{apricot}          47.48 $\uparrow$ & 0.0367 & \cellcolor{apricot}          47.00 $\uparrow$ & 0.0336 & \cellcolor{apricot}          60.41$\uparrow$  \\ 
		& $\mbox{CNN}_{2-5-3}$$^+$    & 0.0598 & 0.0340 & \cellcolor{apricot}          75.97 $\uparrow$ & 0.0340 & \cellcolor{apricot}          75.71 $\uparrow$ & 0.0312 & \cellcolor{apricot}          91.34$\uparrow$ \\ 
		\hline
		%\hline
		\multirow{10}{*}{FNN} 
		& 1x50 $\ast$ & 0.0122 & 0.0082 & \cellcolor{apricot}       49.16 $\uparrow$ & 0.0085 & \cellcolor{apricot}      43.89 $\uparrow$ & 0.0062 & \cellcolor{atomictangerine}  96.32 $\uparrow$  \\
		& 1x100$\ast$ & 0.0107 & 0.0064 & \cellcolor{apricot}       67.67 $\uparrow$ & 0.0066 & \cellcolor{apricot}      61.10 $\uparrow$ & 0.0050 & \cellcolor{atomictangerine} 114.80 $\uparrow$  \\
		& 1x150$\ast$ & 0.0124 & 0.0083 & \cellcolor{apricot}       50.17 $\uparrow$ & 0.0085 & \cellcolor{apricot}      45.59 $\uparrow$ & 0.0064 & \cellcolor{atomictangerine}  93.75 $\uparrow$ \\ 
		& 1x200$\ast$ & 0.0127 & 0.0074 & \cellcolor{apricot}       71.99 $\uparrow$ & 0.0076 & \cellcolor{apricot}      68.35 $\uparrow$ & 0.0058 & \cellcolor{atomictangerine} 120.58 $\uparrow$  \\
		& 1x250$\ast$ & 0.0120 & 0.0075 & \cellcolor{apricot}       60.93 $\uparrow$ & 0.0076 & \cellcolor{apricot}      58.37 $\uparrow$ & 0.0060 & \cellcolor{atomictangerine} 100.82 $\uparrow$ \\ 
		& 1x50 $^+$   & 0.0184 & 0.0117 & \cellcolor{apricot}       56.83 $\uparrow$ & 0.0122 & \cellcolor{apricot}      51.04 $\uparrow$ & 0.0089 & \cellcolor{atomictangerine} 107.35 $\uparrow$ \\ 
		& 1x100$^+$   & 0.0149 & 0.0119 & \cellcolor{antiquewhite}  25.06 $\uparrow$ & 0.0123 & \cellcolor{antiquewhite} 21.89 $\uparrow$ & 0.0088 & \cellcolor{apricot}          70.46 $\uparrow$ \\ 
		& 1x150$^+$   & 0.0183 & 0.0120 & \cellcolor{apricot}       52.83 $\uparrow$ & 0.0123 & \cellcolor{apricot}      49.35 $\uparrow$ & 0.0090 & \cellcolor{atomictangerine} 103.61 $\uparrow$ \\ 
		& 1x200$^+$   & 0.0216 & 0.0129 & \cellcolor{apricot}       67.41 $\uparrow$ & 0.0132 & \cellcolor{apricot}      63.74 $\uparrow$ & 0.0096 & \cellcolor{atomictangerine} 125.31 $\uparrow$  \\
		& 1x250$^+$   & 0.0170 & 0.0126 & \cellcolor{antiquewhite}  34.67 $\uparrow$ & 0.0128 & \cellcolor{antiquewhite} 32.88 $\uparrow$ & 0.0095 & \cellcolor{apricot}          77.96 $\uparrow$ \\ 
		\hline
	\end{tabular}
	\label{alg on Sigmoid models}
		\vspace{-3mm}
\end{table}

Finally, we observe that average and standard deviation share the same increase/decrease trends. This indicates that a tighter approximation is more sensitive to the input images, which  conforms to our conclusion in Experiment \textbf{I}. 

% general case average + std
% \resizebox
\begin{table*}[t]
	\caption{Performance comparison with \textsc{DeepCert} (DC), \textsc{VeriNet} (VN), and \textsc{RobustVerifier} (RV) on mixed-weights Sigmoid networks. $\ast$, ${+}$, and  $\#$ mark the models trained on MNIST, Fashion MNIST, and CIFAR10, respectively.}
	\label{certified lower bound on general models}
% 	\vspace{-3mm}
\footnotesize
	\setlength{\tabcolsep}{2.9pt}
%R	\hspace{-2mm}
	%\resizebox{\textwidth}{!}{
	%	\def\arraystretch{0.9}	
		\begin{tabular}{|c|c|r|r|r|r|r|r|r|r|r|r|r|r|r|r|r|r|}
			\hline 
			\multirow{3}{*}{\textbf{Arch.}} & \multirow{3}{*}{\textbf{Model}} & \multirow{3}{*}{\textbf{\#Neur.}} &   \multicolumn{14}{c|}{\textbf{Certified Lower Bound}} & \multirow{3}{*}{\textbf{Time (s)}}  \\
			\hhline{~~~--------------~}
			& & & \multicolumn{7}{c|}{\textbf{Average}} & \multicolumn{7}{c|}{\textbf{ Standard Deviation}} & \\
			\hhline{~~~--------------~}
			& & & \textsc{NW} & \textsc{DC} &  Impr. (\%) & \textsc{VN} &  Impr. (\%) & \textsc{RV} &  Impr. (\%) & \textsc{NW} & \textsc{DC} &  Impr. (\%) & \textsc{VN} &  Impr. (\%) & \textsc{RV} &  Impr. (\%) &  \\
			\hline
			\hline
			\multirow{11}{*}{CNN} 
			& $\mbox{CNN}_{3\_2}$$^\ast$ & 2,514  & 0.0607 & 0.0579 & \cellcolor{apricot} 4.92  $\uparrow$  & 0.0580 & \cellcolor{apricot} 4.67  $\uparrow$   & 0.0569 & \cellcolor{apricot}  6.82 $\uparrow$   & 0.0219 & 0.0202 & \cellcolor{apricot} 8.06  $\uparrow$   & 0.0204 & \cellcolor{apricot} 7.37  $\uparrow$   & 0.0192 & \cellcolor{apricot}  13.79 $\uparrow$ &  0.17 $\pm$0.04  \\ 
			& $\mbox{CNN}_{3\_4}$$^\ast$ & 5,018  & 0.0478 & 0.0472 & \cellcolor{apricot} 1.17  $\uparrow$  & 0.0472 & \cellcolor{apricot} 1.29  $\uparrow$   & 0.0464 & \cellcolor{apricot}  2.95 $\uparrow$   & 0.0155 & 0.0153 & \cellcolor{apricot} 1.11  $\uparrow$   & 0.0153 & \cellcolor{apricot} 1.44  $\uparrow$   & 0.0146 & \cellcolor{apricot}   5.87 $\uparrow$ &  0.31 $\pm$0.08  \\ 
			& $\mbox{CNN}_{4\_5}$$^\ast$ & 8,680  & 0.0570 & 0.0539 & \cellcolor{apricot} 5.64  $\uparrow$  & 0.0543 & \cellcolor{apricot} 5.03  $\uparrow$   & 0.0522 & \cellcolor{apricot}  9.16 $\uparrow$   & 0.0157 & 0.0145 & \cellcolor{apricot} 8.64  $\uparrow$   & 0.0146 & \cellcolor{apricot} 7.52  $\uparrow$   & 0.0132 & \cellcolor{apricot}  19.45 $\uparrow$ &  1.18 $\pm$0.20  \\ 
			& $\mbox{CNN}_{5\_5}$$^\ast$ & 10,680 & 0.0581 & 0.0548 & \cellcolor{apricot} 6.06  $\uparrow$  & 0.0550 & \cellcolor{apricot} 5.63  $\uparrow$   & 0.0512 & \cellcolor{apricot} 13.42 $\uparrow$   & 0.0157 & 0.0142 & \cellcolor{apricot} 10.48 $\uparrow$   & 0.0144 & \cellcolor{apricot} 8.80  $\uparrow$   & 0.0120 & \cellcolor{apricot}  30.81 $\uparrow$ &  2.99 $\pm$0.38  \\ 
			& $\mbox{CNN}_{6\_5}$$^\ast$ & 12,300 & 0.0624 & 0.0590 & \cellcolor{apricot} 5.71  $\uparrow$  & 0.0588 & \cellcolor{apricot} 6.00  $\uparrow$   & 0.0541 & \cellcolor{apricot} 15.27 $\uparrow$   & 0.0171 & 0.0153 & \cellcolor{apricot} 12.03 $\uparrow$   & 0.0153 & \cellcolor{apricot} 11.96 $\uparrow$   & 0.0123 & \cellcolor{apricot}  39.72 $\uparrow$ &  5.72 $\pm$0.46  \\ 
			& $\mbox{CNN}_{8\_5}$$^\ast$ & 14,570 & 0.1191 & 0.0878 & \cellcolor{apricot} 35.58 $\uparrow$  & 0.0882 & \cellcolor{apricot} 35.02 $\uparrow$   & 0.0685 & \cellcolor{apricot} 73.75 $\uparrow$   & 0.0361 & 0.0248 & \cellcolor{apricot} 45.60 $\uparrow$   & 0.0255 & \cellcolor{apricot} 41.66 $\uparrow$   & 0.0163 & \cellcolor{apricot} 122.22 $\uparrow$ & 15.27 $\pm$0.78  \\ 
			& $\mbox{CNN}_{4\_5}$$^{+}$  & 8,680  & 0.0747 & 0.0720 & \cellcolor{apricot} 3.73  $\uparrow$  & 0.0720 & \cellcolor{apricot} 3.79  $\uparrow$   & 0.0666 & \cellcolor{apricot} 12.16 $\uparrow$   & 0.0413 & 0.0376 & \cellcolor{apricot} 9.85  $\uparrow$   & 0.0378 & \cellcolor{apricot} 9.12  $\uparrow$   & 0.0313 & \cellcolor{apricot}  31.73 $\uparrow$ &  1.19 $\pm$0.21    \\ 
			& $\mbox{CNN}_{5\_5}$$^{+}$  & 10,680 & 0.0704 & 0.0676 & \cellcolor{apricot} 4.14  $\uparrow$  & 0.0676 & \cellcolor{apricot} 4.14  $\uparrow$   & 0.0605 & \cellcolor{apricot} 16.51 $\uparrow$   & 0.0347 & 0.0318 & \cellcolor{apricot} 9.03  $\uparrow$   & 0.0320 & \cellcolor{apricot} 8.41  $\uparrow$   & 0.0244 & \cellcolor{apricot}  41.82 $\uparrow$ &  2.99 $\pm$0.40    \\ 
			& $\mbox{CNN}_{6\_5}$$^{+}$  & 12,300 & 0.0735 & 0.0695 & \cellcolor{apricot} 5.77  $\uparrow$  & 0.0691 & \cellcolor{apricot} 6.37  $\uparrow$   & 0.0626 & \cellcolor{apricot} 17.32 $\uparrow$   & 0.0368 & 0.0341 & \cellcolor{apricot} 7.97  $\uparrow$   & 0.0340 & \cellcolor{apricot} 8.35  $\uparrow$   & 0.0278 & \cellcolor{apricot}  32.57 $\uparrow$ &  5.81 $\pm$0.55  \\ 
			& $\mbox{CNN}_{3\_2}$$^\#$   & 3,378  & 0.0314 & 0.0312 & \cellcolor{apricot} 0.58  $\uparrow$  & 0.0312 & \cellcolor{apricot} 0.61  $\uparrow$   & 0.0311 & \cellcolor{apricot}  1.06 $\uparrow$   & 0.0172 & 0.0169 & \cellcolor{apricot} 1.65  $\uparrow$   & 0.0169 & \cellcolor{apricot} 1.84  $\uparrow$   & 0.0168 & \cellcolor{apricot}   2.69 $\uparrow$ &  0.31 $\pm$0.06  \\ 
			& $\mbox{CNN}_{6\_5}$$^\#$   & 17,110 & 0.0229 & 0.0224 & \cellcolor{apricot} 2.19  $\uparrow$  & 0.0223 & \cellcolor{apricot} 2.46  $\uparrow$   & 0.0212 & \cellcolor{apricot}  7.77 $\uparrow$   & 0.0158 & 0.0153 & \cellcolor{apricot} 3.20  $\uparrow$   & 0.0153 & \cellcolor{apricot} 3.20  $\uparrow$   & 0.0141 & \cellcolor{apricot}  12.13 $\uparrow$ & 10.31 $\pm$0.68  \\ 
			\hline
			% \hline
			\multirow{9}{*}{FNN} 
			& 3x50 $^\ast$ & 160   & 0.0069 & 0.0076 & \cellcolor{antiquewhite} -8.82 $\downarrow$ & 0.0077 & \cellcolor{antiquewhite} -9.77 $\downarrow$ & 0.0065 & \cellcolor{apricot}       6.62 $\uparrow$    & 0.0025 & 0.0027 & \cellcolor{antiquewhite}  -6.37 $\downarrow$ & 0.0028 & \cellcolor{antiquewhite}  -9.42 $\downarrow$ & 0.0021 & \cellcolor{apricot} 18.48 $\uparrow$ &   0.14 $\pm$0.00   \\ 
			& 3x100$^\ast$ & 310   & 0.0078 & 0.0086 & \cellcolor{antiquewhite} -9.44 $\downarrow$ & 0.0087 & \cellcolor{antiquewhite} -10.79$\downarrow$ & 0.0074 & \cellcolor{apricot}       4.44 $\uparrow$    & 0.0026 & 0.0029 & \cellcolor{antiquewhite} -10.14 $\downarrow$ & 0.0029 & \cellcolor{antiquewhite} -12.88 $\downarrow$ & 0.0023 & \cellcolor{apricot} 10.30 $\uparrow$ &   2.14 $\pm$0.03   \\ 
			& 3x200$^\ast$ & 610   & 0.0080 & 0.0091 & \cellcolor{antiquewhite} -11.69$\downarrow$ & 0.0091 & \cellcolor{antiquewhite} -12.36$\downarrow$ & 0.0079 & \cellcolor{apricot}       1.01 $\uparrow$    & 0.0026 & 0.0030 & \cellcolor{antiquewhite} -14.14 $\downarrow$ & 0.0031 & \cellcolor{antiquewhite} -16.39 $\downarrow$ & 0.0024 & \cellcolor{apricot}  5.37 $\uparrow$ &  10.77 $\pm$0.01   \\ 
			& 5x100$^\ast$ & 510   & 0.0057 & 0.0061 & \cellcolor{antiquewhite} -5.27 $\downarrow$ & 0.0062 & \cellcolor{antiquewhite} -6.66 $\downarrow$ & 0.0052 & \cellcolor{apricot}      10.79 $\uparrow$    & 0.0024 & 0.0025 & \cellcolor{antiquewhite}  -5.98 $\downarrow$ & 0.0026 & \cellcolor{antiquewhite}  -8.88 $\downarrow$ & 0.0021 & \cellcolor{apricot} 12.38 $\uparrow$ &   4.38 $\pm$0.03   \\ 
			& 6x500$^\ast$ & 3,010 & 0.0685 & 0.0778 & \cellcolor{antiquewhite} -11.95$\downarrow$ & 0.0776 & \cellcolor{antiquewhite} -11.73$\downarrow$ & 0.0665 & \cellcolor{apricot}       3.05 $\uparrow$    & 0.0186 & 0.0210 & \cellcolor{antiquewhite} -11.56 $\downarrow$ & 0.0210 & \cellcolor{antiquewhite} -11.69 $\downarrow$ & 0.0152 & \cellcolor{apricot} 21.98 $\uparrow$ & 154.39 $\pm$0.36   \\ 
			& 3x50 $^{+}$  & 160   & 0.0092 & 0.0101 & \cellcolor{antiquewhite} -9.67 $\downarrow$ & 0.0102 & \cellcolor{antiquewhite} -10.29$\downarrow$ & 0.0086 & \cellcolor{apricot}       6.64 $\uparrow$    & 0.0035 & 0.0037 & \cellcolor{antiquewhite}  -6.74 $\downarrow$ & 0.0038 & \cellcolor{antiquewhite}  -9.90 $\downarrow$ & 0.0030 & \cellcolor{apricot} 15.72 $\uparrow$ &   0.14 $\pm$0.00   \\ 
			& 5x100$^{+}$  & 510   & 0.0071 & 0.0078 & \cellcolor{antiquewhite} -8.51 $\downarrow$ & 0.0079 & \cellcolor{antiquewhite} -10.01$\downarrow$ & 0.0066 & \cellcolor{apricot}       8.23 $\uparrow$    & 0.0036 & 0.0040 & \cellcolor{antiquewhite}  -8.79 $\downarrow$ & 0.0041 & \cellcolor{antiquewhite} -11.68 $\downarrow$ & 0.0033 & \cellcolor{apricot} 11.35 $\uparrow$ &   4.44 $\pm$0.03   \\ 
			& 3x50 $^\#$   & 160   & 0.0041 & 0.0045 & \cellcolor{antiquewhite} -10.57$\downarrow$ & 0.0045 & \cellcolor{antiquewhite} -10.18$\downarrow$ & 0.0042 & \cellcolor{antiquewhite} -2.17 $\downarrow$  & 0.0018 & 0.0021 & \cellcolor{antiquewhite} -14.90 $\downarrow$ & 0.0021 & \cellcolor{antiquewhite} -14.49 $\downarrow$ & 0.0017 & \cellcolor{apricot}  2.91 $\uparrow$ &   0.43 $\pm$0.00   \\ 
			& 5x100$^\#$   & 510   & 0.0033 & 0.0037 & \cellcolor{antiquewhite} -10.60$\downarrow$ & 0.0037 & \cellcolor{antiquewhite} -10.60$\downarrow$ & 0.0033 & \cellcolor{antiquewhite} -0.60 $\downarrow$  & 0.0014 & 0.0017 & \cellcolor{antiquewhite} -15.06 $\downarrow$ & 0.0016 & \cellcolor{antiquewhite} -14.55 $\downarrow$ & 0.0013 & \cellcolor{apricot}  5.22 $\uparrow$ &   2.45 $\pm$0.01 \\ 
			\hline
	\end{tabular}
%}
% 	\vspace{-3mm}
\end{table*}

\vspace{-1mm}
\subsection{Threats to Validity}
We discuss  potential threats to the validity of our approach in terms of its application domains.

% \vspace{1ex}
\noindent \textbf{Neural Networks with ReLU Activation Functions.}
Despite the focus on the Sigmoid-like activation functions,
our approach is also applicable to the ReLU activation functions. A ReLU function $\sigma(x)=max(x,0)$, with $x\in [l,u]$, only needs approximation when $l<0$ and $u>0$; the upper, resp. lower, linear bound would be then  $y=\frac{u}{u-l}(x-l)$, resp. $y=0$. Hence, the approximation is the tightest for non-negative neural networks. However, linear approximation is not a necessity for ReLU due to its piece-wise linearity. There are more precise (both sound and complete) verification approaches (by using, e.g., SMT \cite{katz2017reluplex} and Mixed Integer Linear Programming~\cite{BotoevaKKLM20}) which could compute larger certified lower bounds.

%  \vspace{1ex}
\noindent \textbf{%When applied to the 
	FNNs with %both negative and positive 
	Mixed Weights.}
For such networks, it is generally unpredictable which approach would compute the most precise verification result (despite a 10\% decrease on average in our approach). To the best of our knowledge, the only feasible way to examine a proposed approximation under non-trivial FNNs is by empirical analysis. Tackling this fundamentally and efficiently remains to be an open research problem.

\section{Related Work}\label{sec:work}
This work is a sequel to many pioneering efforts, which we classify into the following three categories.

% \vspace{1ex}
\noindent \textbf{Linear Approximations of Sigmoid-like activation functions.}
\textsc{NeVer} \cite{pulina2010abstraction} uses piece-wise linear constraints for  approximation and is therefore unscalable. Both CROWN \cite{zhang2018efficient} and CNN-Cert \cite{boopathy2019cnn} consider  the tangent line at the midpoint of $[l,u]$ as one of the linear bounds. \textsc{DeepCert} \cite{wu2021tightening}  defines a fine-grained approximation strategy by calculating the slopes of the two linear constraints according to $l$ and $u$.  \textsc{RobustVerifier} \cite{lin2019robustness} leverages Taylor expansion at the midpoint of $[l,u]$. These approximations are  intuitive but lack  rigorous justifications or proofs for their better performance.

Lyu \textit{et al.} \cite{lyu2020fastened} characterized the tightness of approximations in terms of the overestimation of output range of each hidden neuron. But they observed and admitted that by their definition tighter bounding lines do not ensure more precise results. By our definition, we show that in the case of one hidden layer, their definition also guarantees to be  network-wise tightest. They proposed a gradient-based searching algorithm for near-tightest approximations under their definition. However, the algorithm has been shown difficult to scale up to large-size networks because it needs to perform on every neuron, compared with other existing constant-time approaches~\cite{wu2021tightening,HenriksenL20}.  Our work is, to the best of our knowledge, the first provably tightest, constant-time  linear approximation.

% \vspace{0.7ex}
\noindent \textbf{Defining Tightness for Linear Approximations.}
There has been a shift of focus from individual neurons to multiple neurons w.r.t. defining tightness for linear approximations, but most of the work only concerns about the ReLU networks.
Tjandraatmadja \textit{et al.} \cite{DBLP:conf/nips/TjandraatmadjaA20} experimentally show that the success of approximations hinges on how closely they approximate the object that they are relaxing. Salman \textit{et al.} \cite{salman2019convex} reveal an inherent barrier of the approximation-based approaches for the ReLU networks and require for the tightest pre-activation upper and lower bounds of all the neurons in networks. Singh \textit{et al.} \cite{singh2019beyond} approximate multiple neurons simultaneously to obtain the tighter bounds. In contrast to this line of research, we have defined both neuron-wise and network-wise tightness to  characterize linear approximations of
Sigmoid-like activation functions.

% \vspace{0.7ex}
\noindent \textbf{Other Robustness Verification Approaches.} In addition to approximation, 
 other techniques have also been used for  the robustness verification of neural networks. 
Abstract interpretation \cite{cousot1977abstract}, a technique that was  originally proposed for program verification, has been proven both effective and efficient in neural network verification~\cite{gehr2018ai2,singh2018fast,singh2019abstract}. 
 These approaches also rely on over-approximation but to transform the original verification problem into dedicated abstract domains. We believe that our approximation approach is also applicable to produce more precise verification results for non-negative neural networks. Other verification methods leverage the Lipschitz continuity feature of neural networks to estimate the output ranges \cite{ruan2018reachability,combettes2020lipschitz,lee2020lipschitz}.
 Although the approximation to an activation function can be bypassed using the Lipschitz constant, it would still be helpful to compute tighter Lipschitz constants by estimating the input range of the activation function via the approximation. 
	
\vspace{-2mm}
\section{Concluding Remarks}\label{sec:con}
We have presented \textit{network-wise tightness}, a novel and unified characterization of the tightness  of linear approximations in robustness verification of Sigmoid-like neural networks. We have shown that (i) to achieve precise verification results, activation functions in a network should \emph{not} be approximated with the same existing neuron-wise tightness criterion; (ii) computing the network-wise tightest approximation is computationally expensive and impractical due to its non-convexity; and (iii) how to bypass the complexity barrier via a neuron-wise tightest approximation. The experimental results demonstrate that our approximation approach outperforms state-of-the-art approaches under three scenarios, i.e.,  non-negative networks, 1-hidden-layer networks, and convolutional networks. 

Our work sheds light on the pursuit of robust neural networks via tightening linear approximations. 
The ineffectiveness of neuron-wise tightness on general networks calls for new, potentially hybrid, approximation strategies. The intrinsic high complexity in computing the network-wise tightest approximation motivates us to rethink of both fundamental and the heuristic  trade-offs between precision and efficiency in neural network verification. For mixed-weight neural networks, there may be latent  factors that could influence the tightness of approximations. One promising  direction is to explore possible combinations of existing tightness  characterizations to achieve network-wise tightness while taking into account the features of weight distributions and network architectures.

\vspace{-2mm}
\section*{Acknowledgments}
The authors thank the reviewers for their constructive comments. 
This work was supported in part by the National Key Research and Development (2019YFA0706404), the National Nature Science Foundation of China (61972150), the NSFC-ISF Joint Program (62161146001, 3420/21), the Fundamental Research Funds for Central Universities, and the Opening Project of Shanghai Trusted Industrial Control Platform.
Jing Liu and Min Zhang are the corresponding authors.

% 	\newpage 
\onecolumn \begin{multicols}{2}

% \bibliographystyle{ACM-Reference-Format}
% \bibliography{reference}
\end{multicols}

\appendix
\clearpage 
\twocolumn
\begin{figure*}	
		\begin{subfigure}{0.48\linewidth}
		\includegraphics[width=0.9\textwidth]{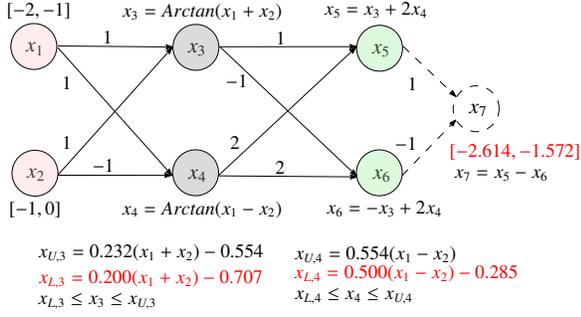}
		\caption{Two lower bounds (red) closer to the Arctan function.}
		\end{subfigure}
		\begin{subfigure}{0.48\linewidth}
		\includegraphics[width=.9\textwidth]{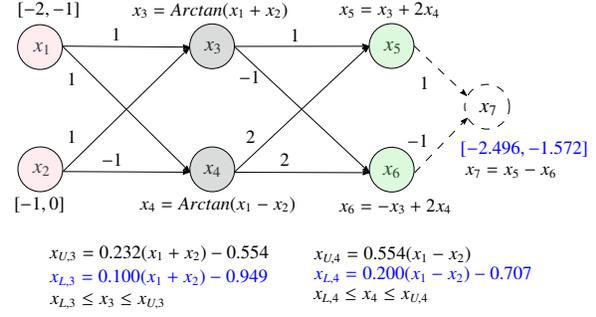}
		\caption{Two lower bounds (blue) farther from the Arctan function.}
	\end{subfigure}
	\caption{Output ranges calculated by different linear approximations on the neural networks with Arctan.} 
	\label{cal_interval_arctan}
\end{figure*}

\section{A Counterexample for Arctan}
This section shows a counterexample for Arctan, where a tighter linear bound line for the activation function actually produces a less precise verification result.

The counterexample is shown in Figure  \ref{cal_interval_arctan}. For simplicity, weights of edges are integers and biases are ignored. Let $x_{1}$ and $x_{2}$ represent the possible input values in $[-2, -1]$ and $[-1, 0]$, respectively. In the output layer, if the value of $x_{5}$ is always greater than the one of $x_{6}$, we can claim that all  $x_{1}$ in $[-2,-1]$ and $x_{2}$ in $[-1, 0]$ are classified to the label of $x_5$. We introduce an auxiliary node $x_{7}$ to represent $x_{5} - x_{6}$.

The two linear approximations in Figure \ref{cal_interval_arctan} (a) and (b) share the same upper bounds. 
The lower bounds of $x_3,x_4$ (the red in (a)) are closer to the Arctan function than the ones (the blue in (b)). However, the output range $[-2.496, -1.572]$ of $x_7$ computed by the right  approximation is more precise than  $[-2.614, -1.572]$ computed by the left approximation.  

\section{Proof of Theorem 2}
For $k$-layer neural network $f:\mathbb{R}^n \to \mathbb{R}^m$ satisfying the conditions in Theorem \ref{neuron->network}, we continue to use the symbol definition proposed. We assume that the upper and lower bounds of $\phi^{t}_{r}(x)$ are $u_r^{t}$ and $l_r^{t}$, \textit{i.e.}, $l_r^{t}\le \phi_r^{t}(x) \le u_r^{t}$. We consider the linear approximation of $\sigma(\phi_r^{t}(x))$ between  $[l_r^{t},u_r^{t}]$, where $\sigma(x)$ is a Sigmoid-like function. According to Definition \ref{def:neuronwise}, we denote the upper and lower linear bounds of $\sigma(\phi_r^{t}(x))$  obtained by our approach as $h_{U,r}^t(x)$ and $h_{L,r}^t(x)$, respectively.

First, we prove Lemma \ref{2con->mono}, i.e., the two conditions in the Theorem \ref{neuron->network} guarantee that the network is monotonous.

\begin{linenomath*}
\begin{proof}
    For any selected item of input $x_i$, items in the $i$-th column of $W^1$ are all positive or all negative. Suppose that they are all positive, which means $\forall r \in \mathbb{Z}, 1 \le r \le n^{1}, w_{r,i}^1>0$. First we consider $t=1$, for any value $x'_{i}>x''_{i}$ of $x_i$, $\phi_{r}^{1}(x'_{i}) = w_{r,i}^1 x'_{i} > w_{r,i}^1 x''_{i} = \phi_{r}^{1}(x''_{i})$.\\
    Suppose that for any $t\le T, 1\le T$, $\phi_{r}^{t}(x'_{i})> \phi_{r}^{t}(x''_{i})$ holds. Then we consider layer $T+1$, Since $\sigma$ is monotonously increasing, and all items in $W^{T+1}$ are non-negative, we have:
    \begin{align}
         \phi_{r}^{T+1}(x'_{i}) &= \textstyle\sum\limits_p w_{r,p}^{T+1} \sigma(\phi_p^T(x'_{i})) \\
         &>\textstyle\sum\limits_p w_{r,p}^{T+1} \sigma(\phi_p^T(x''_{i})) \\
         & = \phi_{r}^{T+1}(x''_{i})
    \end{align}
    In summary, by mathematical induction, $\forall r,t$, $\phi_{r}^{t}(x_i)$ is monotonously increasing. 
    
    Similarly, if the items in the $i$-th column of $W^1$ are all negative, we can deduce that $\forall r,t$, $\phi_{r}^{t}(x_i)$ is monotonously decreasing. The network satisfies Definition \ref{stren_condi}.
\end{proof}
\end{linenomath*}

\begin{linenomath*}
\begin{lemma}\label{lem:Exchangeable}
Given a $k$-layer neural network $f$ satisfying Definition \ref{stren_condi} and its input $x=[x_1,...,x_n]$, $\forall t \in \mathbb{Z}, 1 \le t \le k$, the $j$-th item in layer $t+1$ satisfies: 
    \begin{align*}
        \min \limits_{x \in \mathbb{B}_\infty(x_0, \epsilon)}\phi_j^{t+1}(x) 
        = \textstyle\sum\limits_r w_{j,r}^{t+1} \min\limits_{x \in \mathbb{B}_\infty(x_0, \epsilon)}\sigma(\phi_r^t(x)) 
    \end{align*}
\end{lemma}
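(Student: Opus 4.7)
The strategy is to exhibit a single point $x^\ast \in \mathbb{B}_\infty(x_0,\epsilon)$ that simultaneously minimizes $\sigma(\phi_r^t(\cdot))$ for \emph{every} neuron index $r$ in layer $t$. Once this common minimizer is in hand, the minimum of the non-negative linear combination $\phi_j^{t+1}(x) = \sum_r w_{j,r}^{t+1}\,\sigma(\phi_r^t(x))$ (plus any constant bias, which commutes trivially with the minimum) can be distributed across the sum, yielding the claimed identity.

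First I would invoke Definition~\ref{stren_condi} to pick, for each input coordinate $i$, a common monotonicity direction $s_i \in \{+1,-1\}$ such that every $\phi_r^{t'}(x_i)$, for every layer index $t'$ and every neuron index $r$, is monotonically increasing when $s_i = +1$ and monotonically decreasing when $s_i = -1$. Because each Sigmoid-like activation $\sigma$ is itself monotonically increasing, the composition $\sigma \circ \phi_r^t$ inherits the same coordinate-wise monotonicity direction $s_i$, uniformly in $r$. I then define $x^\ast \in \mathbb{B}_\infty(x_0,\epsilon)$ coordinate-wise by $x_i^\ast := x_{0,i} - s_i\epsilon$. A standard axis-parallel sweep argument shows that $x^\ast$ is a joint minimizer: starting from any $y \in \mathbb{B}_\infty(x_0,\epsilon)$ and updating its coordinates one at a time from $y_i$ to $x_i^\ast$, each single-coordinate update weakly decreases $\sigma(\phi_r^t(\cdot))$ for \emph{every} $r$ at once, because the direction $s_i$ is shared across $r$. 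Hence $\sigma(\phi_r^t(x^\ast)) = \min_{x\in\mathbb{B}_\infty(x_0,\epsilon)}\sigma(\phi_r^t(x))$ for all $r$.

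Next I would combine this with the second condition of Theorem~\ref{neuron->network}, which forces $w_{j,r}^{t+1} \ge 0$ for $t \ge 1$. The combination $\phi_j^{t+1}$ is then also coordinate-wise monotone in the same direction $s_i$, so its minimum on $\mathbb{B}_\infty(x_0,\epsilon)$ is attained at the same $x^\ast$, giving
\begin{align*}
    \min_{x \in \mathbb{B}_\infty(x_0,\epsilon)} \phi_j^{t+1}(x)
    &= \sum_r w_{j,r}^{t+1}\,\sigma(\phi_r^t(x^\ast)) \\
    &= \sum_r w_{j,r}^{t+1}\,\min_{x \in \mathbb{B}_\infty(x_0,\epsilon)}\sigma(\phi_r^t(x)),
\end{align*}
where the second equality is exactly where non-negativity of $w_{j,r}^{t+1}$ is used to keep the distributed minima as minima.

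The main obstacle is lifting Definition~\ref{stren_condi}'s coordinate-wise monotonicity to the existence of a \emph{single} joint minimizer $x^\ast$ that works for all $r$ simultaneously on a multi-dimensional box; coordinate-wise monotonicity of a single function does not in general force a box minimum at one vertex, and even when it does, different functions could a priori prefer different vertices. What rescues this is precisely the uniformity clause of Definition~\ref{stren_condi}: the direction $s_i$ is fixed across all neurons and all layers, so the axis-parallel sweep pushes every $\sigma \circ \phi_r^t$ downward at every step in lockstep. Non-negativity of the weights is then what prevents the final distribution of the minimum over the sum from flipping any signs.
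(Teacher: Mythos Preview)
Your proposal is correct and takes essentially the same approach as the paper: exhibit a single box-vertex $x^\ast$ (the paper's $x_{\min}$, written only for the all-increasing case, whereas your signs $s_i$ cover the general case) which, by the uniform coordinate-wise monotonicity of Definition~\ref{stren_condi}, simultaneously minimizes every $\sigma(\phi_r^t(\cdot))$ and $\phi_j^{t+1}(\cdot)$, and then evaluate both sides at that point. One minor remark: you need not invoke the non-negative-weight condition of Theorem~\ref{neuron->network} separately, since Definition~\ref{stren_condi} already asserts that $\phi_j^{t+1}$ shares the same monotonicity direction as every $\phi_r^t$, which is all you need for $x^\ast$ to minimize it.
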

\begin{proof}
    Suppose that $f$ is network-wise monotonously increasing. $\forall$ selected $i$, from Definition \ref{stren_condi}, $\forall t$, $\phi_r^t(x_i)$ and $\phi_r^{t+1}(x_i)$ are both monotonously increasing. Suppose that $x_i \in [l_i^0,u_i^0]$, $x_{min} := [l_1^0,...,l_n^0]$, $x_{max} := [u_1^0,...,u_n^0]$, then:
        
            \begin{align*}
                \min \limits_{x \in \mathbb{B}_\infty(x_0, \epsilon)}\phi_j^{t+1}(x)
                & = \phi_j^{t+1}(x_{min}) \\
                & = \textstyle\sum\limits_r w_{j,r}^{t+1} \sigma(\phi_r^t(x_{min}))\\
                & = \textstyle\sum\limits_r w_{j,r}^{t+1} \sigma(\min\limits_{x \in \mathbb{B}_\infty(x_0, \epsilon)}\phi_r^t(x))\\
                & = \textstyle\sum\limits_r w_{j,r}^{t+1} \min\limits_{x \in \mathbb{B}_\infty(x_0, \epsilon)}\sigma(\phi_r^t(x)) \qedhere
            \end{align*}
\end{proof}
\end{linenomath*}

%Before the illustration, we first introduce a lemma:
Next, we prove the optimality of our approximation approach from the perspective of a single neuron.

\begin{linenomath*}
\begin{lemma} \label{lem:compoundfunction}
Given function $f:\mathbb{R}^n \to \mathbb{R}$, let $f$'s definition domain be $D$ and its range be $[l,u]$, if the definition domain of function $g:\mathbb{R} \to \mathbb{R}$ is $[l,u]$, we can define compound function $g\circ f:\mathbb{R}^n \to \mathbb{R}$. Suppose that $g$ is a monotonically increasing function, then we have:
\begin{equation*}
    \min\limits_{x \in D} g\circ f (x)= g(l), \quad \max\limits_{x \in D} g\circ f (x)= g(u).
\end{equation*}
\end{lemma}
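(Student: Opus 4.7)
The plan is to exploit the definition of \emph{range} together with the order‑preserving property of the monotonically increasing map $g$. Since $f$ has range exactly $[l,u]$, there exist $x_l, x_u \in D$ with $f(x_l)=l$ and $f(x_u)=u$, and for every $x\in D$ we have $l \le f(x) \le u$. Applying $g$ to this chain and using monotonicity gives $g(l) \le g(f(x)) \le g(u)$ for all $x\in D$; the lower (resp.\ upper) inequality is attained at $x_l$ (resp.\ $x_u$), which yields the two claimed equalities.

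Concretely, I would structure the proof in three short steps. First, invoke the definition of the range $[l,u]$ of $f$ to produce the witnesses $x_l,x_u\in D$ and the two‑sided bound $l\le f(x)\le u$ for all $x\in D$. Second, apply the hypothesis that $g$ is monotonically increasing on its domain $[l,u]$ to conclude $g(l)\le g(f(x))\le g(u)$ for every $x\in D$, so that $g(l)$ is a lower bound and $g(u)$ an upper bound of $(g\circ f)(D)$. Third, observe that these bounds are attained: $(g\circ f)(x_l)=g(l)$ and $(g\circ f)(x_u)=g(u)$, which forces $\min_{x\in D}(g\circ f)(x)=g(l)$ and $\max_{x\in D}(g\circ f)(x)=g(u)$.

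There is no real obstacle here; the statement is essentially a tautology once one unpacks the definitions of \emph{range} and \emph{monotonically increasing}. The only subtlety worth flagging is whether ``range $[l,u]$'' is interpreted as the image being exactly the closed interval (so that the extrema $l$ and $u$ are attained) or merely being contained in $[l,u]$. Under the former reading the proof is as above; under the latter reading one should replace ``attained'' with taking a supremum/infimum and use continuity of $g$ (which holds for all the Sigmoid‑like activations considered in the paper) to conclude that the extrema are still $g(l)$ and $g(u)$. Either way the lemma follows in a few lines and can then be chained with Lemma~\ref{lem:Exchangeable} in the subsequent argument for Theorem~\ref{neuron->network}.
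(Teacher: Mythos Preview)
Your proposal is correct and mirrors the paper's own proof: both use the monotonicity of $g$ to get $g(l)\le g(f(x))\le g(u)$ for all $x\in D$ and then use that the extrema $l,u$ of $f$ are attained on $D$ to conclude equality. Your version is slightly more explicit in producing the witnesses $x_l,x_u$, whereas the paper compresses this into the line $\min_{x\in D} g\circ f(x)=g(\min_{x\in D} f(x))=g(l)$, but the underlying argument is identical.
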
 
\begin{proof}
Because $g$ is monotonically increasing, there is $\forall x \in [l,u], g(l) \le g(x) \le g(u)$. Then
\begin{align*}
    \min\limits_{x \in D} g\circ f(x) & = g(\min\limits_{x \in D} f(x)) = g(l), \text{and,}\\
    \max\limits_{x \in D} g\circ f (x) & = g(\max\limits_{x \in D} f(x)) = g(u) \qedhere
\end{align*}
\end{proof}
\end{linenomath*}

%We deduce following theorem for our linear approximation. 
\begin{linenomath*}
\begin{theorem}\label{theorem:compoundfunction}
Given a neural network $f$, let $\phi^{t}:\mathbb{R}^n \to \mathbb{R}^{n^t}$ be the compound function of the first $t$ layers of $f$. The $r$-th row of $\phi^{t}$ can be denoted as $\phi_r^{t}:\mathbb{R}^n \to \mathbb{R}$. When the input $x$ is bounded in a norm-ball $\mathbb{B}_\infty(x_0,\epsilon)$, $\phi_r^{t}$ ranges in $[l_r^{t},u_r^{t}]$, which is the definition domain of $h_{U,r}^t$ and $h_{L,r}^t$. We have: 
    \begin{align*}
        \min\limits_{x \in \mathbb{B}_\infty(x_0,\epsilon)} & h_{L,r}^t\circ \phi_r^{t} (x)= h_{L,r}^t(l_r^t), \text{and},  \\
        \max\limits_{x \in \mathbb{B}_\infty(x_0,\epsilon)} & h_{U,r}^t\circ \phi_r^{t} (x)= h_{U,r}^t(u_r^t).
    \end{align*}
\end{theorem}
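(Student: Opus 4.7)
The plan is to reduce the theorem to a direct application of Lemma \ref{lem:compoundfunction} after verifying that its monotonicity hypothesis holds for the linear bounds.

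First I would verify that both $h_{L,r}^t$ and $h_{U,r}^t$ are monotonically increasing functions of their scalar argument. Inspecting the three cases of Definition \ref{def:neuronwise} (as depicted in Figure \ref{optimal_approximation}), the slope of each linear bound is either the chord slope $k = (\sigma(u_r^t)-\sigma(l_r^t))/(u_r^t - l_r^t)$, the derivative $\sigma'(l_r^t)$, or the derivative $\sigma'(u_r^t)$. Since Sigmoid, Tanh, and Arctan are strictly monotonically increasing and differentiable with uniformly positive derivatives, all three candidate slopes are strictly positive. Hence $h_{L,r}^t$ and $h_{U,r}^t$ are both strictly increasing affine functions on $[l_r^t, u_r^t]$.

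Second I would observe that the setup of the theorem matches the hypotheses of Lemma \ref{lem:compoundfunction} verbatim: take $f := \phi_r^t$ with definition domain $D := \mathbb{B}_\infty(x_0,\epsilon)$ and range $[l_r^t, u_r^t]$ (by the assumption of the theorem), and take $g$ to be $h_{L,r}^t$ or $h_{U,r}^t$, whose definition domain is precisely $[l_r^t, u_r^t]$ and which is monotonically increasing by the previous step. Applying Lemma \ref{lem:compoundfunction} once with $g := h_{L,r}^t$ yields $\min_{x \in \mathbb{B}_\infty(x_0,\epsilon)} h_{L,r}^t \circ \phi_r^t(x) = h_{L,r}^t(l_r^t)$, and applying it once more with $g := h_{U,r}^t$ yields $\max_{x \in \mathbb{B}_\infty(x_0,\epsilon)} h_{U,r}^t \circ \phi_r^t(x) = h_{U,r}^t(u_r^t)$, which are exactly the two claimed equalities.

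The only nontrivial step is the monotonicity verification, and even this is essentially immediate from the fact that Sigmoid-like activation functions have strictly positive derivatives everywhere. Note that the theorem itself does not require the network to satisfy the monotonicity conditions of Definition \ref{stren_condi} or Lemma \ref{2con->mono}; the positivity of the slopes of the approximating lines is a property of the activation function alone, not of the weights. The weight-sign conditions only become relevant when this per-neuron identity is lifted through the network, which will be handled by combining Theorem \ref{theorem:compoundfunction} with Lemma \ref{lem:Exchangeable} in the subsequent inductive argument for Theorem \ref{neuron->network}.
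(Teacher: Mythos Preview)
Your proposal is correct and mirrors the paper's own proof: verify that $h_{L,r}^t$ and $h_{U,r}^t$ have positive slopes (hence are monotonically increasing) by inspecting the three cases of Definition~\ref{def:neuronwise}, then apply Lemma~\ref{lem:compoundfunction} with $f:=\phi_r^t$ and $g:=h_{L,r}^t$ or $h_{U,r}^t$. Your slope-positivity justification is in fact more explicit than what appears in the paper, which simply asserts that the linear bounds are monotonically increasing before invoking the lemma.
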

\end{linenomath*}
\begin{proof}
 According to Definition \ref{def:neuronwise}, $h_{U,r}^t(x) = \alpha_{U,r}^t x +\beta_{U,r}^t$, $h_{L,r}^t(x) = \alpha_{L,r}^t x +\beta_{L,r}^t$. In our approximation, 
 \iffalse 
 $\alpha_{U,r}^t$ and $\alpha_{L,r}^t$ only have three options: $\sigma'(u_r^t)$, $\sigma'(l_r^t)$ or $\frac{\sigma(u_r^t)-\sigma(l_r^t)}{u_r^t-l_r^t}$. For Sigmoid-like functions, $\forall x, \sigma'(x)=\sigma(x) (1-\sigma(x))>0$, so $\sigma'(u_r^t)>0$, $\sigma'(l_r^t)>0$. Because Sigmoid-like functions are monotonically increasing, $\sigma(u_r^t)-\sigma(l_r^t) > 0 $ when $u_r^t-l_r^t>0$. Thus, $\frac{\sigma(u_r^t)-\sigma(l_r^t)}{u_r^t-l_r^t}>0$. Thus, we have $\alpha_{U,r}^t>0$ and $\alpha_{L,r}^t>0$, which indicate that 
 \fi
 $h_{L,r}^t$ and $h_{L,r}^t$ are always monotonically increasing. By replacing $f$ and $g$ in Lemma \ref{lem:compoundfunction} with $\phi_r^t$ and $h_{L,r}^t$ (\textit{resp.} $h_{U,r}^t$), we can easily get the conclusion.
\end{proof}

Theorem \ref{theorem:compoundfunction} says that our approximation method does not  overestimate  the output range of current neuron. That is because  $h_{L,r}^t(l_r^t) = \sigma(l_r^t)$ and $h_{U,r}^t(u_r^t) = \sigma(u_r^t)$. That is, the output range of $\sigma(\phi_r^{t}(x))$ after the linear approximation is still  $[\sigma(l_r^t),\sigma(u_r^t)]$. 

\iffalse
\begin{theorem}\label{theorem:positive_weight}
If a $k$-layer neural network $f$ with its input $x=[x_1,...,x_n]$ satisfy Definition \ref{stren_condi}, then $\forall \in \mathbb{Z}, 2 \le t \le k$, every item in the matrix $W^{t}$ should be positive.
\end{theorem}
\begin{proof}
    Suppose that the $r$th item in $W_i^{t}$ is negative, $\phi_i^{t}(x) = W_i^t \sigma(\phi^{t-1}(x))= \textstyle\sum\limits_r w_{i,r}^{t} \sigma(\phi_r^{t-1}(x))$. Since $w_{i,r}^{t}<0$, $\forall j \in [n]$ satisfying $\phi_r^{t-1}(x_j) \neq 0$, $\phi_r^{t}(x_j)$ is monotonously increasing(\textit{resp.} decreasing) while $\phi_r^{t-1}(x_j)$ is monotonously decreasing(\textit{resp.} increasing), which is ineligible.\wuyiting{some problems...}
\end{proof}
\fi

When dealing with the lower bound of  $\phi_j^{t+1}(x)$ - $j$-th output of layer $t+1$ under the condition that $x\in \mathbb{B}_\infty(x_0, \epsilon)$, we have:

\begin{linenomath*}
\begin{align} 
        \min \limits_{x \in \mathbb{B}_\infty(x_0, \epsilon)}\phi_j^{t+1}(x) 
        & = \textstyle\sum\limits_r w_{j,r}^{t+1} \min\limits_{x \in \mathbb{B}_\infty(x_0, \epsilon)}\sigma(\phi_r^t(x))  \label{exchanged}\\ 
        & = \textstyle\sum\limits_r w_{j,r}^{t+1} \min\limits_{x \in \mathbb{B}_\infty(x_0, \epsilon)} h_{L,r}^t \circ \phi_r^t(x) \label{formula6}\\ 
        & = \textstyle\sum\limits_r w_{j,r}^{t+1} h_{L,r}^t(l_r^t)  \label{minimum}
\end{align}
\end{linenomath*}

where (\ref{exchanged}) is the conclusion of Lemma \ref{lem:Exchangeable}, and (\ref{minimum}) is derived from  (\ref{formula6}) according to Theorem \ref{theorem:compoundfunction}. 

Likewise, we have:

\begin{linenomath*}
\begin{equation}\label{maximum}
    \max\limits_{x \in \mathbb{B}_\infty(x_0, \epsilon)}\phi_j^{t+1}(x) = \textstyle\sum w_{j,r}^t h_{U,r}^t(u_r^t). 
\end{equation}
\end{linenomath*}

\iffalse
\begin{linenomath*}
\begin{equation}\label{maximum}
    \max\limits_{x \in \mathbb{B}_\infty(x_0, \epsilon)}\phi_i^{t+1}(x) \le \hspace{-2mm} \textstyle\sum\limits_{w_{r,i}^t\ge 0}\hspace{-2mm} w_{r,i}^t h_{r,U}^t(u_r^t) +  \hspace{-2mm} \textstyle\sum\limits_{w_{r,i}^t<0} \hspace{-2mm} w_{r,i}^t h_{r,L}^t(l_r^t). 
\end{equation}
\end{linenomath*}
\fi

\begin{theorem}\label{independent-optimal}
    For networks satisfying conditions in Theorem \ref{neuron->network}, our linear approximation guarantees that (\ref{minimum}) is the largest and (\ref{maximum}) is the smallest among all possible approximations.
\end{theorem}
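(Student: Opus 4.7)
The plan is to establish Theorem \ref{independent-optimal} by combining the endpoint-matching property of our neuron-wise tightest approximation with the non-negativity of the weights, and then handling the cross-layer dependence of intermediate intervals by induction on the layer index $t$. The key ingredients are already in place: Definition \ref{def:neuronwise} forces $h_{L,r}^t(l_r^t)=\sigma(l_r^t)$ and $h_{U,r}^t(u_r^t)=\sigma(u_r^t)$; Theorem \ref{theorem:compoundfunction} upgrades this to exact min/max values of $h_{L,r}^t\circ\phi_r^t$ and $h_{U,r}^t\circ\phi_r^t$; identities (\ref{minimum}) and (\ref{maximum}) express the bounds at layer $t+1$ computed under our scheme; and Theorem \ref{neuron->network} guarantees $w_{j,r}^{t+1}\ge 0$.

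The first, and conceptually simplest, step is a pointwise comparison at the critical endpoints. For any alternative valid linear approximation $(\hat h_{L,r}^t,\hat h_{U,r}^t)$, the bounding condition (\ref{linearboundcondition}) enforces $\hat h_{L,r}^t(y)\le \sigma(y)\le \hat h_{U,r}^t(y)$ throughout its admissible domain. Evaluating at $y=l_r^t$ and $y=u_r^t$ yields
\begin{align*}
\hat h_{L,r}^t(l_r^t) \le \sigma(l_r^t) = h_{L,r}^t(l_r^t), \qquad
\hat h_{U,r}^t(u_r^t) \ge \sigma(u_r^t) = h_{U,r}^t(u_r^t).
\end{align*}
Since every $w_{j,r}^{t+1}$ is non-negative, multiplying each inequality by $w_{j,r}^{t+1}$ preserves its direction, and summing over $r$ gives
\begin{align*}
\sum_r w_{j,r}^{t+1}\hat h_{L,r}^t(l_r^t) &\le \sum_r w_{j,r}^{t+1} h_{L,r}^t(l_r^t), \\
\sum_r w_{j,r}^{t+1}\hat h_{U,r}^t(u_r^t) &\ge \sum_r w_{j,r}^{t+1} h_{U,r}^t(u_r^t).
\end{align*}
Together with (\ref{minimum}) and (\ref{maximum}), this says precisely that our value for the lower (resp. upper) bound at neuron $j$ of layer $t+1$ is the largest (resp. smallest) achievable when the incoming intervals are held fixed.

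The main obstacle — and the reason induction is needed rather than a single-layer comparison — is that the intervals $[l_r^t,u_r^t]$ themselves depend on the approximation chosen at earlier layers: a competing scheme generally yields wider intermediate intervals $[\hat l_r^t,\hat u_r^t]\supseteq[l_r^t,u_r^t]$. I would dispatch this by induction on $t$, using Theorem \ref{theorem:compoundfunction} as the inductive payload: under our scheme $\sigma\circ\phi_r^t$ incurs no overestimation, so $[l_r^t,u_r^t]$ coincides with the true range of $\phi_r^t$. Any valid competitor must then dominate $\sigma$ on the superset $[\hat l_r^t,\hat u_r^t]\supseteq [l_r^t,u_r^t]$, so the endpoint inequalities above remain valid even with the competitor's larger domain. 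Re-invoking Lemma \ref{lem:Exchangeable} at each layer (permissible because monotonicity is preserved by non-negative weights, per Lemma \ref{2con->mono}) then propagates both the identities (\ref{minimum})--(\ref{maximum}) and the inequalities above from layer $t$ to layer $t+1$, completing the induction and hence the theorem.
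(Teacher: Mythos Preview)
Your first step---the endpoint comparison $h_{L,r}^t(l_r^t)=\sigma(l_r^t)\ge\hat h_{L,r}^t(l_r^t)$ and $h_{U,r}^t(u_r^t)=\sigma(u_r^t)\le\hat h_{U,r}^t(u_r^t)$, followed by the non-negative weighted sum---is exactly the paper's proof of Theorem~\ref{independent-optimal}, essentially verbatim. That alone discharges the theorem as the paper states it: the quantities (\ref{minimum}) and (\ref{maximum}) are compared against alternatives \emph{with the same intermediate intervals $[l_r^t,u_r^t]$ held fixed}, so no cross-layer reasoning is required here.

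Your second step, the induction on $t$ to handle the dependence of $[l_r^t,u_r^t]$ on earlier choices, is not part of Theorem~\ref{independent-optimal} at all; it belongs to the proof of Theorem~\ref{neuron->network}, which the paper carries out immediately afterward. There the paper takes a different route: rather than forward induction, it argues by backward contradiction---assuming a competitor beats our bound at the output layer, it deduces $\hat l_r^{k-1}>l_r^{k-1}$, recurses down layer by layer, and reaches a contradiction at the input layer where both schemes share the same intervals. Your forward induction is a legitimate alternative and arguably cleaner, since it directly tracks that our $[l_r^t,u_r^t]$ equal the true ranges (via Theorem~\ref{theorem:compoundfunction}) and hence are contained in any competitor's intervals; the paper's contradiction argument is more indirect but avoids having to spell out the containment $[l_r^t,u_r^t]\subseteq[\hat l_r^t,\hat u_r^t]$ explicitly. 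Either way, you have over-delivered for this particular theorem.
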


\begin{proof}

    Given an arbitrary other lower and upper linear bounds $\hat{h}_{L,r}^t$ and $\hat{h}_{U,r}^t$, there are $\hat{h}_{L,r}^t(x) \le \sigma(x)$ and $\hat{h}_{U,r}^t(x) \ge \sigma(x)$ for all $x \in [l_r^t,u_r^t]$, which implies that  $\hat{h}_{L,r}^t(l_r^t) \le \sigma(l_r^t)$ and $\hat{h}_{U,r}^t(u_r^t) \ge \sigma(u_r^t)$. According to our linear approximation:
    \begin{linenomath*}	
    \begin{align*}
    h_{L,r}^t(l_r^t) &= \sigma(l_r^t) \ge \hat{h}_{L,r}^t(l_r^t), \text{and,}\\
    h_{U,r}^t(u_r^t) &= \sigma(u_r^t) \le \hat{h}_{U,r}^t(u_r^t).
    \end{align*}
\end{linenomath*}   
    As a result, for $w_{j,r}^t\ge 0$, we have the following formula hold:
    \begin{linenomath*}	
    \begin{align*}
        \textstyle\sum w_{j,r}^t h_{L,r}^t(l_r^t) \ge & \textstyle\sum w_{j,r}^t \hat{h}_{L,r}^t(l_r^t), \text{and,} \\
        \textstyle\sum w_{j,r}^t h_{U,r}^t(u_r^t) \le & \textstyle\sum w_{j,r}^t \hat{h}_{U,r}^t(u_r^t).
    \end{align*}
\end{linenomath*}   
    So our linear approximation obtains the supremum of (\ref{minimum}) and the infimum of (\ref{maximum}). 
\end{proof}

According to Theorem \ref{independent-optimal}, our approximation obtains the upper bound of $\min \limits_{x \in \mathbb{B}_\infty(x_0, \epsilon)}\phi_j^{t+1}(x)$. Next, we prove the network-wise tightness of our approach through Definition \ref{global_opt}. Let $(f_L,f_U)$ be our linear approximation of $f$ with $f_U$ and $f_L$  the upper and lower bounds, respectively. Suppose that there exists another linear approximation $(\hat{f}_L,\hat{f}_U)$ and label $s$ s.t. 
\begin{equation}
\min\limits_{x\in \mathbb{B}_\infty(x_0, \epsilon)}f_{L,s}(x) <  \min\limits_{x\in \mathbb{B}_\infty(x_0, \epsilon)}\hat{f}_{L,s}(x).\label{neq1}
\end{equation}
If the two approaches share the same bounds of nodes of ($k$-1)-th layer, then:
\begin{linenomath*}
\begin{align*}
     \min\limits_{x\in \mathbb{B}_\infty(x_0, \epsilon)}f_{L,s}(x) 
     &=  \min\limits_{x\in \mathbb{B}_\infty(x_0, \epsilon)}\phi_{L,s}^k(x)\\
     &= \textstyle\sum\limits_r w_{s,r}^{k} h_{L,r}^{k-1}(l_r^{k-1}) \\
     & \ge  \textstyle\sum\limits_r w_{s,r}^{k} \hat{h}_{L,r}^{k-1}(l_r^{k-1})
\end{align*}
\end{linenomath*}
which contradicts with (\ref{neq1}). So the two approaches cannot share the same bounds, and there must be $\hat{l}_r^{k-1}>l_r^{k-1}$. Otherwise,  $\hat{h}_{L,r}^{k-1}(\hat{l}_r^{k-1})\le \sigma(\hat{l}_r^{k-1}) < \sigma(l_r^{k-1}) \le h_{L,r}^{k-1}(l_r^{k-1})$,  meaning that (\ref{neq1}) cannot be achieved. We have:
\begin{linenomath*}
\begin{align*}
    l_r^{k-1} &= \min\limits_{x\in \mathbb{B}_\infty(x_0, \epsilon)}\phi_{L,r}^{k-1}(x) \\
    &= \textstyle\sum\limits_j w_{r,j}^{k-1} h_{L,j}^{k-2}(l_j^{k-2}) \\
    & < \hat{l}_r^{k-1} \\
    & =\min\limits_{x\in \mathbb{B}_\infty(x_0, \epsilon)}\hat{\phi}_{L,r}^{k-1}(x) \\
    & = \textstyle\sum\limits_j w_{r,j}^{k-1} \hat{h}_{L,j}^{k-2}(\hat{l}_j^{k-2})
\end{align*}
\end{linenomath*}
Similarly, there must be $\hat{l}_j^{k-2}>l_j^{k-2}$ and so on. In the end, we can deduce that $\forall i\in [1,n], \hat{l}_i^0>l_i^0$, which are the lower bounds of $x_i$. However, the input range of $x_i$ is the same for the two approaches, which is a contradiction.

To conclude, there does not exist a $(\hat{f}_L,\hat{f}_U)$ satisfying (\ref{neq1}). Theorem \ref{neuron->network} is proven.

\section{Additional Experimental Results}
This section presents the precision and efficiency comparison of \textsc{NeWise} with \textsc{DeepCert}, \textsc{VeriNet} and \textsc{RobustVerifier} on 27 Sigmoid, 30 Tanh, and 32 Arctan models with non-negative weights. In addition, we also demonstrated the effectiveness of Algorithm 1 on 40 1-hidden-layer models with mixed weights.

% Sigmoid average + std
\begin{table*}[ht]
    \scriptsize 
    \def\arraystretch{1}
    \setlength{\tabcolsep}{4pt}
    \caption{Performance comparison on non-negative Sigmoid networks between \textsc{NeWise} (NW) and existing tools, \textsc{DeepCert} (DC), \textsc{VeriNet} (VN), and \textsc{RobustVerifier} (RV).
		$t\times n$ refers to an FNN with $t$ layers and $n$ neurons per layer. $\mbox{CNN}_{t-c}$ denotes a CNN with $t$ layers and $c$ filters of size 3$\times$3.}
    \vspace{-2mm}
    \centering
    \resizebox{\textwidth}{!}{
    \begin{tabular}{|c|c|r|r|r|r|r|r|r|r|r|r|r|r|r|r|r|r|}
        \hline 
        \multirow{3}{*}{\textbf{Dataset}} & \multirow{3}{*}{\textbf{Model}} & \multirow{3}{*}{\textbf{\#Neur.}} &   \multicolumn{14}{c|}{\textbf{Certified Lower Bound}} & \multirow{3}{*}{\textbf{Time (s)}}  \\
        \hhline{~~~--------------~}
        & & & \multicolumn{7}{c|}{\textbf{Average}} & \multicolumn{7}{c|}{\textbf{ Standard Deviation}} & \\
        \hhline{~~~--------------~}
        & & & \textsc{NW} & \textsc{DC} &  Impr. (\%) & \textsc{VN} &  Impr. (\%) & \textsc{RV} &  Impr. (\%) & \textsc{NW} & \textsc{DC} &  Impr. (\%) & \textsc{VN} &  Impr. (\%) & \textsc{RV} &  Impr. (\%) &  \\
        \hline
        \hline
        \multirow{11}{*}{MNIST} 
        & 5x100              & 510    & 0.0091 & 0.0071 & \cellcolor{atomictangerine} 28.15  $\uparrow$ & 0.0071 & \cellcolor{atomictangerine} 27.25 $\uparrow$ & 0.0064 & \cellcolor{atomictangerine} 40.90 $\uparrow$ & 0.0057 & 0.0042 & \cellcolor{atomictangerine} 37.11 $\uparrow$ & 0.0042 & \cellcolor{atomictangerine} 35.48 $\uparrow$ & 0.0034 & \cellcolor{atomictangerine} 69.35 $\uparrow$ &   4.39 $\pm$0.03    \\ 
        & 3x700              & 2,110  & 0.0037 & 0.0030 & \cellcolor{atomictangerine} 24.92  $\uparrow$ & 0.0030 & \cellcolor{atomictangerine} 22.85 $\uparrow$ & 0.0029 & \cellcolor{atomictangerine} 27.05 $\uparrow$ & 0.0018 & 0.0013 & \cellcolor{atomictangerine} 41.86 $\uparrow$ & 0.0014 & \cellcolor{atomictangerine} 34.56 $\uparrow$ & 0.0013 & \cellcolor{atomictangerine} 41.86 $\uparrow$ & 117.50 $\pm$0.22  \\ 
        & $\mbox{CNN}_{6-5}$ & 12,300 & 0.0968 & 0.0788 & \cellcolor{atomictangerine} 22.82  $\uparrow$ & 0.0778 & \cellcolor{atomictangerine} 24.37 $\uparrow$ & 0.0699 & \cellcolor{atomictangerine} 38.50 $\uparrow$ & 0.0372 & 0.0280 & \cellcolor{atomictangerine} 32.92 $\uparrow$ & 0.0276 & \cellcolor{atomictangerine} 35.09 $\uparrow$ & 0.0212 & \cellcolor{atomictangerine} 75.86 $\uparrow$ &   5.70 $\pm$0.42  \\ 
        & 3x50               & 160    & 0.0105 & 0.0088 & \cellcolor{apricot}         19.23  $\uparrow$ & 0.0088 & \cellcolor{apricot}         19.50 $\uparrow$ & 0.0080 & \cellcolor{atomictangerine} 31.42 $\uparrow$ & 0.0051 & 0.0038 & \cellcolor{atomictangerine} 32.72 $\uparrow$ & 0.0038 & \cellcolor{atomictangerine} 32.72 $\uparrow$ & 0.0029 & \cellcolor{atomictangerine} 71.86 $\uparrow$ &   0.14 $\pm$0.00  \\ 
        & 3x100              & 310    & 0.0139 & 0.0120 & \cellcolor{apricot}         15.46  $\uparrow$ & 0.0120 & \cellcolor{apricot}         15.56 $\uparrow$ & 0.0111 & \cellcolor{atomictangerine} 25.47 $\uparrow$ & 0.0071 & 0.0057 & \cellcolor{apricot}         24.82 $\uparrow$ & 0.0057 & \cellcolor{apricot}         23.30 $\uparrow$ & 0.0046 & \cellcolor{atomictangerine} 53.46 $\uparrow$ &   2.21 $\pm$0.02  \\ 
        & $\mbox{CNN}_{5-5}$ & 10,680 & 0.0801 & 0.0708 & \cellcolor{apricot}         13.14  $\uparrow$ & 0.0704 & \cellcolor{apricot}         13.75 $\uparrow$ & 0.0683 & \cellcolor{apricot}         17.30 $\uparrow$ & 0.0238 & 0.0200 & \cellcolor{apricot}         18.87 $\uparrow$ & 0.0198 & \cellcolor{apricot}         20.50 $\uparrow$ & 0.0180 & \cellcolor{apricot}         32.20 $\uparrow$ &   2.89 $\pm$0.31  \\ 
        & 3x200              & 610    & 0.0080 & 0.0071 & \cellcolor{apricot}         12.54  $\uparrow$ & 0.0071 & \cellcolor{apricot}         12.85 $\uparrow$ & 0.0068 & \cellcolor{apricot}         17.16 $\uparrow$ & 0.0046 & 0.0037 & \cellcolor{apricot}         26.43 $\uparrow$ & 0.0037 & \cellcolor{apricot}         25.41 $\uparrow$ & 0.0034 & \cellcolor{apricot}         37.28 $\uparrow$ &  10.81 $\pm$0.07 \\
        & 3x400              & 1,210  & 0.0061 & 0.0056 & \cellcolor{antiquewhite}     9.66  $\uparrow$ & 0.0056 & \cellcolor{antiquewhite}     9.86 $\uparrow$ & 0.0054 & \cellcolor{apricot}         12.89 $\uparrow$ & 0.0035 & 0.0030 & \cellcolor{apricot}         16.78 $\uparrow$ & 0.0030 & \cellcolor{apricot}         16.39 $\uparrow$ & 0.0028 & \cellcolor{apricot}         26.09 $\uparrow$ &  39.52 $\pm$0.09 \\
        & $\mbox{CNN}_{3-2}$ & 2,514  & 0.0521 & 0.0483 & \cellcolor{antiquewhite}     7.82  $\uparrow$ & 0.0483 & \cellcolor{antiquewhite}     7.94 $\uparrow$ & 0.0478 & \cellcolor{antiquewhite}     8.88 $\uparrow$ & 0.0180 & 0.0161 & \cellcolor{apricot}         12.13 $\uparrow$ & 0.0160 & \cellcolor{antiquewhite}    12.41 $\uparrow$ & 0.0156 & \cellcolor{apricot}         15.44 $\uparrow$ &   0.17 $\pm$0.04  \\ 
        & $\mbox{CNN}_{4-5}$ & 8,680  & 0.0505 & 0.0473 & \cellcolor{antiquewhite}     6.68  $\uparrow$ & 0.0471 & \cellcolor{antiquewhite}     7.26 $\uparrow$ & 0.0464 & \cellcolor{antiquewhite}     8.81 $\uparrow$ & 0.0207 & 0.0186 & \cellcolor{apricot}         11.26 $\uparrow$ & 0.0183 & \cellcolor{antiquewhite}    12.84 $\uparrow$ & 0.0175 & \cellcolor{apricot}         17.87 $\uparrow$ &   1.17 $\pm$0.20  \\ 
        & $\mbox{CNN}_{3-4}$ & 5,018  & 0.0448 & 0.0422 & \cellcolor{antiquewhite}     6.09  $\uparrow$ & 0.0421 & \cellcolor{antiquewhite}     6.24 $\uparrow$ & 0.0418 & \cellcolor{antiquewhite}     6.98 $\uparrow$ & 0.0156 & 0.0142 & \cellcolor{antiquewhite}     9.71 $\uparrow$ & 0.0141 & \cellcolor{antiquewhite}    10.18 $\uparrow$ & 0.0138 & \cellcolor{antiquewhite}    12.56 $\uparrow$ &   0.30 $\pm$0.08  \\ 
        % \hline
        \hline
        \multirow{9}{*}{\makecell{Fashion \\ MNIST} }
        & 4x100              & 410    & 0.0312 & 0.0188 & \cellcolor{atomictangerine} 65.48  $\uparrow$ & 0.0194 & \cellcolor{atomictangerine} 60.62 $\uparrow$ & 0.0159 & \cellcolor{atomictangerine} 96.22 $\uparrow$ & 0.0403 & 0.0210 & \cellcolor{atomictangerine} 92.28 $\uparrow$ & 0.0220 & \cellcolor{atomictangerine} 83.20 $\uparrow$ & 0.0176 & \cellcolor{atomictangerine} 129.47 $\uparrow$ & 3.34 $\pm$0.02   \\ 
        & 3x100              & 310    & 0.0326 & 0.0263 & \cellcolor{apricot}         24.02  $\uparrow$ & 0.0270 & \cellcolor{atomictangerine} 21.03 $\uparrow$ & 0.0238 & \cellcolor{atomictangerine} 36.81 $\uparrow$ & 0.0335 & 0.0262 & \cellcolor{atomictangerine} 27.67 $\uparrow$ & 0.0282 & \cellcolor{apricot}         18.92 $\uparrow$ & 0.0234 & \cellcolor{atomictangerine}  43.22 $\uparrow$ & 2.24 $\pm$0.02 \\ 
        & 2x100              & 210    & 0.0306 & 0.0250 & \cellcolor{apricot}         22.49  $\uparrow$ & 0.0254 & \cellcolor{atomictangerine} 20.51 $\uparrow$ & 0.0230 & \cellcolor{atomictangerine} 33.09 $\uparrow$ & 0.0286 & 0.0211 & \cellcolor{atomictangerine} 36.04 $\uparrow$ & 0.0228 & \cellcolor{apricot}         25.88 $\uparrow$ & 0.0194 & \cellcolor{atomictangerine}  47.76 $\uparrow$ & 1.13 $\pm$0.01 \\
        & 3x200              & 610    & 0.0223 & 0.0184 & \cellcolor{apricot}         21.80  $\uparrow$ & 0.0187 & \cellcolor{apricot}         19.45 $\uparrow$ & 0.0170 & \cellcolor{atomictangerine} 31.70 $\uparrow$ & 0.0220 & 0.0159 & \cellcolor{atomictangerine} 38.66 $\uparrow$ & 0.0170 & \cellcolor{apricot}         29.38 $\uparrow$ & 0.0143 & \cellcolor{atomictangerine}  54.42 $\uparrow$ & 9.20 $\pm$2.00 \\
        & 2x200              & 410    & 0.0263 & 0.0220 & \cellcolor{apricot}         19.52  $\uparrow$ & 0.0223 & \cellcolor{apricot}         17.86 $\uparrow$ & 0.0204 & \cellcolor{atomictangerine} 28.77 $\uparrow$ & 0.0279 & 0.0200 & \cellcolor{atomictangerine} 39.55 $\uparrow$ & 0.0211 & \cellcolor{apricot}         31.96 $\uparrow$ & 0.0176 & \cellcolor{atomictangerine}  58.76 $\uparrow$ & 3.60 $\pm$0.02 \\
        & $\mbox{CNN}_{5-5}$ & 10,680 & 0.1303 & 0.1155 & \cellcolor{apricot}         12.81  $\uparrow$ & 0.1151 & \cellcolor{apricot}         13.22 $\uparrow$ & 0.1088 & \cellcolor{apricot}         19.72 $\uparrow$ & 0.0830 & 0.0714 & \cellcolor{apricot}         16.23 $\uparrow$ & 0.0721 & \cellcolor{apricot}         15.10 $\uparrow$ & 0.0636 & \cellcolor{apricot}          30.51 $\uparrow$ & 2.90 $\pm$0.33 \\ 
        & $\mbox{CNN}_{3-2}$ & 2,514  & 0.0790 & 0.0713 & \cellcolor{apricot}         10.79  $\uparrow$ & 0.0713 & \cellcolor{apricot}         10.74 $\uparrow$ & 0.0695 & \cellcolor{apricot}         13.68 $\uparrow$ & 0.0497 & 0.0416 & \cellcolor{apricot}         19.55 $\uparrow$ & 0.0418 & \cellcolor{apricot}         19.06 $\uparrow$ & 0.0386 & \cellcolor{apricot}          28.98 $\uparrow$ & 0.17 $\pm$0.04 \\ 
        & $\mbox{CNN}_{4-5}$ & 8,680  & 0.0959 & 0.0868 & \cellcolor{apricot}         10.40  $\uparrow$ & 0.0864 & \cellcolor{apricot}         10.90 $\uparrow$ & 0.0839 & \cellcolor{apricot}         14.19 $\uparrow$ & 0.0561 & 0.0486 & \cellcolor{apricot}         15.51 $\uparrow$ & 0.0482 & \cellcolor{apricot}         16.52 $\uparrow$ & 0.0453 & \cellcolor{apricot}          24.03 $\uparrow$ & 1.18 $\pm$0.21 \\ 
        & $\mbox{CNN}_{3-4}$ & 5,018  & 0.0747 & 0.0694 & \cellcolor{antiquewhite}     7.52  $\uparrow$ & 0.0693 & \cellcolor{antiquewhite}     7.72 $\uparrow$ & 0.0681 & \cellcolor{antiquewhite}     9.70 $\uparrow$ & 0.0465 & 0.0410 & \cellcolor{antiquewhite}    13.32 $\uparrow$ & 0.0409 & \cellcolor{antiquewhite}    13.59 $\uparrow$ & 0.0391 & \cellcolor{antiquewhite}     18.85 $\uparrow$ & 0.30 $\pm$0.09 \\ 
        % \hline
        \hline
        \multirow{7}{*}{CIFAR10} 
        & 9x100               & 910   & 0.0315 & 0.0211 & \cellcolor{atomictangerine} 49.03  $\uparrow$ & 0.0214 & \cellcolor{atomictangerine} 46.94 $\uparrow$ & 0.0192 & \cellcolor{atomictangerine} 63.58 $\uparrow$ & 0.0280 & 0.0183 & \cellcolor{atomictangerine} 52.70 $\uparrow$ & 0.0186 & \cellcolor{atomictangerine} 50.32 $\uparrow$ & 0.0133 & \cellcolor{atomictangerine} 110.07 $\uparrow$ & 4.93 $\pm$0.02  \\ 
        & 6x100               & 610   & 0.0221 & 0.0174 & \cellcolor{atomictangerine} 27.08  $\uparrow$ & 0.0176 & \cellcolor{atomictangerine} 26.14 $\uparrow$ & 0.0170 & \cellcolor{atomictangerine} 30.22 $\uparrow$ & 0.0165 & 0.0118 & \cellcolor{atomictangerine} 40.05 $\uparrow$ & 0.0120 & \cellcolor{atomictangerine} 37.82 $\uparrow$ & 0.0111 & \cellcolor{atomictangerine}  48.24 $\uparrow$ & 3.04 $\pm$0.02  \\ 
        & 5x100               & 510   & 0.0200 & 0.0167 & \cellcolor{apricot}         19.76  $\uparrow$ & 0.0167 & \cellcolor{apricot}         19.47 $\uparrow$ & 0.0163 & \cellcolor{atomictangerine} 22.40 $\uparrow$ & 0.0137 & 0.0104 & \cellcolor{apricot}         31.80 $\uparrow$ & 0.0104 & \cellcolor{apricot}         31.42 $\uparrow$ & 0.0099 & \cellcolor{apricot}          38.45 $\uparrow$ & 2.44 $\pm$0.01  \\ 
        & 3x50                & 160   & 0.0206 & 0.0178 & \cellcolor{apricot}         15.43  $\uparrow$ & 0.0179 & \cellcolor{apricot}         14.66 $\uparrow$ & 0.0176 & \cellcolor{apricot}         16.88 $\uparrow$ & 0.0144 & 0.0113 & \cellcolor{apricot}         27.57 $\uparrow$ & 0.0115 & \cellcolor{apricot}         25.24 $\uparrow$ & 0.0110 & \cellcolor{apricot}          31.30 $\uparrow$ & 0.43 $\pm$0.00  \\ 
        & 4x100               & 410   & 0.0161 & 0.0140 & \cellcolor{apricot}         15.23  $\uparrow$ & 0.0140 & \cellcolor{apricot}         14.81 $\uparrow$ & 0.0138 & \cellcolor{apricot}         16.56 $\uparrow$ & 0.0111 & 0.0089 & \cellcolor{apricot}         24.61 $\uparrow$ & 0.0090 & \cellcolor{apricot}         23.63 $\uparrow$ & 0.0087 & \cellcolor{apricot}          27.62 $\uparrow$ & 1.87 $\pm$0.03  \\ 
        & $\mbox{CNN}_{3-4}$  & 6,746 & 0.0187 & 0.0181 & \cellcolor{antiquewhite}     3.38  $\uparrow$ & 0.0181 & \cellcolor{antiquewhite}     3.32 $\uparrow$ & 0.0181 & \cellcolor{antiquewhite}     3.43 $\uparrow$ & 0.0109 & 0.0103 & \cellcolor{antiquewhite}     5.93 $\uparrow$ & 0.0103 & \cellcolor{antiquewhite}     5.83 $\uparrow$ & 0.0103 & \cellcolor{antiquewhite}      6.13 $\uparrow$ & 0.56 $\pm$0.08 \\ 
        & $\mbox{CNN}_{3-2}$  & 3,378 & 0.0185 & 0.0180 & \cellcolor{antiquewhite}     2.49  $\uparrow$ & 0.0180 & \cellcolor{antiquewhite}     2.55 $\uparrow$ & 0.0180 & \cellcolor{antiquewhite}     2.67 $\uparrow$ & 0.0125 & 0.0120 & \cellcolor{antiquewhite}     4.34 $\uparrow$ & 0.0120 & \cellcolor{antiquewhite}     4.34 $\uparrow$ & 0.0120 & \cellcolor{antiquewhite}      4.60 $\uparrow$ & 0.30 $\pm$0.05 \\ 
        \hline
    \end{tabular}}
    \label{all resutls on Sigmoid models}
\end{table*}

% Tanh average + std
\begin{table*}[ht]
    \scriptsize 
    \def\arraystretch{1}
    \setlength{\tabcolsep}{4pt}
    \caption{Performance comparison of \textsc{NeWise} (NW) with \textsc{DeepCert} (DC), \textsc{VeriNet} (VN), and \textsc{RobustVerifier} (RV) on non-negative Tanh networks. 
		$t\times n$ refers to an FNN with $t$ layers and $n$ neurons per layer. $\mbox{CNN}_{t-c}$ denotes a CNN with $t$ layers and $c$ filters of size 3$\times$3.}
    \vspace{-2mm}
    \centering
    \begin{tabular}{|c|c|r|r|r|r|r|r|r|r|r|r|r|r|r|r|r|r|}
        \hline 
        \multirow{3}{*}{\textbf{Dataset}} & \multirow{3}{*}{\textbf{Model}} & \multirow{3}{*}{\textbf{\#Neur.}} &   \multicolumn{14}{c|}{\textbf{Certified Lower Bound}} & \multirow{3}{*}{\textbf{Time (s)}}  \\
        \hhline{~~~--------------~}
        & & & \multicolumn{7}{c|}{\textbf{Average}} & \multicolumn{7}{c|}{\textbf{ Standard Deviation}} & \\
        \hhline{~~~--------------~}
        & & & \textsc{NW} & \textsc{DC} &  Impr. (\%) & \textsc{VN} &  Impr. (\%) & \textsc{RV} &  Impr. (\%) & \textsc{NW} & \textsc{DC} &  Impr. (\%) & \textsc{VN} &  Impr. (\%) & \textsc{RV} &  Impr. (\%) &  \\
        \hline
        \hline
        \multirow{11}{*}{MNIST}
        & 5x100               & 510    & 0.0032 & 0.0012 & \cellcolor{atomictangerine} 173.28 $\uparrow$ & 0.0013 & \cellcolor{atomictangerine} 145.74 $\uparrow$ & 0.0010 & \cellcolor{atomictangerine} 207.77 $\uparrow$ & 0.0018 & 0.0005 & \cellcolor{atomictangerine} 233.96 $\uparrow$ & 0.0006 & \cellcolor{atomictangerine} 195.00 $\uparrow$ & 0.0006 & \cellcolor{atomictangerine} 216.07 $\uparrow$ &   4.36 $\pm$ 0.06  \\ 
        & 3x700               & 2,110  & 0.0044 & 0.0016 & \cellcolor{atomictangerine} 170.99 $\uparrow$ & 0.0018 & \cellcolor{atomictangerine} 141.21 $\uparrow$ & 0.0016 & \cellcolor{atomictangerine} 174.38 $\uparrow$ & 0.0027 & 0.0008 & \cellcolor{atomictangerine} 251.28 $\uparrow$ & 0.0009 & \cellcolor{atomictangerine} 191.49 $\uparrow$ & 0.0009 & \cellcolor{atomictangerine} 191.49 $\uparrow$ & 117.43 $\pm$ 0.14  \\ 
        & 3x400               & 1,210  & 0.0038 & 0.0017 & \cellcolor{atomictangerine} 119.65 $\uparrow$ & 0.0019 & \cellcolor{apricot}          95.88 $\uparrow$ & 0.0016 & \cellcolor{atomictangerine} 130.30 $\uparrow$ & 0.0018 & 0.0007 & \cellcolor{atomictangerine} 166.67 $\uparrow$ & 0.0008 & \cellcolor{atomictangerine} 128.57 $\uparrow$ & 0.0007 & \cellcolor{atomictangerine} 137.84 $\uparrow$ &  39.45 $\pm$ 0.14  \\ 
        & $\mbox{CNN}_{6-5}$  & 12,300 & 0.0601 & 0.0358 & \cellcolor{apricot}          67.98 $\uparrow$ & 0.0381 & \cellcolor{apricot}          57.67 $\uparrow$ & 0.0303 & \cellcolor{apricot}          98.61 $\uparrow$ & 0.0243 & 0.0115 & \cellcolor{atomictangerine} 111.84 $\uparrow$ & 0.0131 & \cellcolor{apricot}          85.94 $\uparrow$ & 0.0087 & \cellcolor{atomictangerine} 179.13 $\uparrow$ &   5.74 $\pm$ 0.46  \\ 
        & 3x50                & 160    & 0.0027 & 0.0023 & \cellcolor{antiquewhite}     18.78 $\uparrow$ & 0.0023 & \cellcolor{antiquewhite}     17.24 $\uparrow$ & 0.0022 & \cellcolor{apricot}          25.35 $\uparrow$ & 0.0012 & 0.0009 & \cellcolor{apricot}          29.79 $\uparrow$ & 0.0010 & \cellcolor{apricot}          27.08 $\uparrow$ & 0.0008 & \cellcolor{apricot}          45.24 $\uparrow$ &   0.14 $\pm$ 0.00  \\ 
        & 3x100               & 310    & 0.0033 & 0.0029 & \cellcolor{antiquewhite}     15.97 $\uparrow$ & 0.0029 & \cellcolor{antiquewhite}     14.78 $\uparrow$ & 0.0027 & \cellcolor{apricot}          21.45 $\uparrow$ & 0.0014 & 0.0011 & \cellcolor{apricot}          23.85 $\uparrow$ & 0.0011 & \cellcolor{antiquewhite}     19.47 $\uparrow$ & 0.0010 & \cellcolor{apricot}          33.66 $\uparrow$ &   2.17 $\pm$ 0.03  \\ 
        & 3x200               & 610    & 0.0031 & 0.0027 & \cellcolor{antiquewhite}     15.41 $\uparrow$ & 0.0027 & \cellcolor{antiquewhite}     14.13 $\uparrow$ & 0.0026 & \cellcolor{antiquewhite}     18.99 $\uparrow$ & 0.0013 & 0.0011 & \cellcolor{apricot}          26.42 $\uparrow$ & 0.0011 & \cellcolor{antiquewhite}     19.64 $\uparrow$ & 0.0010 & \cellcolor{apricot}          31.37 $\uparrow$ &   7.54 $\pm$ 1.00  \\ 
        & $\mbox{CNN}_{3-4}$  & 5,018  & 0.0234 & 0.0208 & \cellcolor{antiquewhite}     12.48 $\uparrow$ & 0.0213 & \cellcolor{antiquewhite}      9.94 $\uparrow$ & 0.0209 & \cellcolor{antiquewhite}     12.21 $\uparrow$ & 0.0067 & 0.0055 & \cellcolor{apricot}          21.05 $\uparrow$ & 0.0058 & \cellcolor{antiquewhite}     14.80 $\uparrow$ & 0.0055 & \cellcolor{antiquewhite}     20.83 $\uparrow$ &   0.33 $\pm$ 0.09  \\ 
        & $\mbox{CNN}_{5-5}$  & 10,680 & 0.0329 & 0.0294 & \cellcolor{antiquewhite}     11.92 $\uparrow$ & 0.0296 & \cellcolor{antiquewhite}     10.93 $\uparrow$ & 0.0290 & \cellcolor{antiquewhite}     13.42 $\uparrow$ & 0.0108 & 0.0090 & \cellcolor{apricot}          20.24 $\uparrow$ & 0.0092 & \cellcolor{antiquewhite}     17.50 $\uparrow$ & 0.0087 & \cellcolor{antiquewhite}     24.54 $\uparrow$ &   2.91 $\pm$ 0.31  \\ 
        & $\mbox{CNN}_{4-5}$  & 8,680  & 0.0272 & 0.0241 & \cellcolor{antiquewhite}     13.08 $\uparrow$ & 0.0244 & \cellcolor{antiquewhite}     11.51 $\uparrow$ & 0.0239 & \cellcolor{antiquewhite}     14.03 $\uparrow$ & 0.0074 & 0.0061 & \cellcolor{apricot}          21.21 $\uparrow$ & 0.0063 & \cellcolor{antiquewhite}     17.19 $\uparrow$ & 0.0060 & \cellcolor{antiquewhite}     23.63 $\uparrow$ &   1.19 $\pm$ 0.19  \\ 
        & $\mbox{CNN}_{3-2}$  & 2,514  & 0.0327 & 0.0289 & \cellcolor{antiquewhite}     13.08 $\uparrow$ & 0.0294 & \cellcolor{antiquewhite}     11.35 $\uparrow$ & 0.0288 & \cellcolor{antiquewhite}     13.63 $\uparrow$ & 0.0100 & 0.0082 & \cellcolor{apricot}          21.90 $\uparrow$ & 0.0085 & \cellcolor{antiquewhite}     17.33 $\uparrow$ & 0.0081 & \cellcolor{antiquewhite}     23.10 $\uparrow$ &   0.18 $\pm$ 0.04  \\ 
        \hline
      %  \hline
        \multirow{9}{*}{\makecell{Fashion\\MNIST}}
        & 3x100              & 310    & 0.0153 & 0.0114 & \cellcolor{atomictangerine} 34.39 $\uparrow$ & 0.0117 & \cellcolor{atomictangerine} 30.72 $\uparrow$ & 0.0103 & \cellcolor{atomictangerine} 48.02 $\uparrow$ & 0.0156 & 0.0105 & \cellcolor{atomictangerine} 48.81 $\uparrow$ & 0.0110 & \cellcolor{atomictangerine} 42.04 $\uparrow$ & 0.0091 & \cellcolor{atomictangerine} 70.79 $\uparrow$ &  2.20 $\pm$ 0.01 \\
        & 2x200              & 410    & 0.0099 & 0.0082 & \cellcolor{atomictangerine} 21.42 $\uparrow$ & 0.0083 & \cellcolor{atomictangerine} 19.23 $\uparrow$ & 0.0079 & \cellcolor{apricot}         25.57 $\uparrow$ & 0.0102 & 0.0076 & \cellcolor{atomictangerine} 33.38 $\uparrow$ & 0.0080 & \cellcolor{atomictangerine} 27.06 $\uparrow$ & 0.0076 & \cellcolor{antiquewhite}    33.55 $\uparrow$ &  5.42 $\pm$ 0.03 \\
        & $\mbox{CNN}_{6-5}$ & 12,300 & 0.0408 & 0.0334 & \cellcolor{atomictangerine} 22.36 $\uparrow$ & 0.0343 & \cellcolor{atomictangerine} 19.18 $\uparrow$ & 0.0296 & \cellcolor{atomictangerine} 37.67 $\uparrow$ & 0.0329 & 0.0237 & \cellcolor{atomictangerine} 38.83 $\uparrow$ & 0.0242 & \cellcolor{atomictangerine} 35.67 $\uparrow$ & 0.0180 & \cellcolor{atomictangerine} 82.66 $\uparrow$ &  5.79 $\pm$ 0.50 \\
        & 2x100              & 210    & 0.0108 & 0.0090 & \cellcolor{apricot}         19.69 $\uparrow$ & 0.0092 & \cellcolor{atomictangerine} 17.60 $\uparrow$ & 0.0086 & \cellcolor{apricot}         25.85 $\uparrow$ & 0.0109 & 0.0081 & \cellcolor{atomictangerine} 35.27 $\uparrow$ & 0.0085 & \cellcolor{atomictangerine} 28.59 $\uparrow$ & 0.0077 & \cellcolor{apricot}         41.95 $\uparrow$ &  1.12 $\pm$ 0.01 \\
        & $\mbox{CNN}_{5-5}$ & 10,680 & 0.0305 & 0.0256 & \cellcolor{apricot}         18.88 $\uparrow$ & 0.0264 & \cellcolor{apricot}         15.55 $\uparrow$ & 0.0239 & \cellcolor{atomictangerine} 27.28 $\uparrow$ & 0.0201 & 0.0153 & \cellcolor{atomictangerine} 31.52 $\uparrow$ & 0.0158 & \cellcolor{atomictangerine} 27.03 $\uparrow$ & 0.0125 & \cellcolor{atomictangerine} 60.69 $\uparrow$ &  2.92 $\pm$ 0.34 \\
        & $\mbox{CNN}_{4-5}$ & 8,680  & 0.0276 & 0.0230 & \cellcolor{atomictangerine} 20.08 $\uparrow$ & 0.0240 & \cellcolor{apricot}         15.22 $\uparrow$ & 0.0223 & \cellcolor{apricot}         24.07 $\uparrow$ & 0.0188 & 0.0134 & \cellcolor{atomictangerine} 41.12 $\uparrow$ & 0.0139 & \cellcolor{atomictangerine} 35.83 $\uparrow$ & 0.0129 & \cellcolor{apricot}         45.82 $\uparrow$ &  1.22 $\pm$ 0.22 \\
        & $\mbox{CNN}_{3-2}$ & 2,514  & 0.0271 & 0.0233 & \cellcolor{antiquewhite}    15.85 $\uparrow$ & 0.0242 & \cellcolor{antiquewhite}    11.68 $\uparrow$ & 0.0234 & \cellcolor{antiquewhite}    15.45 $\uparrow$ & 0.0111 & 0.0087 & \cellcolor{apricot}         28.49 $\uparrow$ & 0.0091 & \cellcolor{apricot}         22.42 $\uparrow$ & 0.0085 & \cellcolor{antiquewhite}    31.37 $\uparrow$ &  0.18 $\pm$ 0.04 \\
        & $\mbox{CNN}_{3-4}$ & 5,018  & 0.0302 & 0.0264 & \cellcolor{antiquewhite}    14.19 $\uparrow$ & 0.0271 & \cellcolor{antiquewhite}    11.32 $\uparrow$ & 0.0263 & \cellcolor{antiquewhite}    14.88 $\uparrow$ & 0.0160 & 0.0126 & \cellcolor{apricot}         27.29 $\uparrow$ & 0.0130 & \cellcolor{apricot}         23.36 $\uparrow$ & 0.0121 & \cellcolor{antiquewhite}    32.45 $\uparrow$ &  0.31 $\pm$ 0.09 \\
        & 3x200              & 610    & 0.0529 & 0.0430 & \cellcolor{atomictangerine} 23.17 $\uparrow$ & 0.0440 & \cellcolor{atomictangerine} 20.23 $\uparrow$ & 0.0354 & \cellcolor{atomictangerine} 49.70 $\uparrow$ & 0.0662 & 0.0550 & \cellcolor{antiquewhite}    20.29 $\uparrow$ & 0.0563 & \cellcolor{antiquewhite}    17.43 $\uparrow$ & 0.0454 & \cellcolor{apricot}         45.67 $\uparrow$ & 10.77 $\pm$ 0.05 \\
        \hline
      %  \hline
        \multirow{10}{*}{CIFAR10}
        & 3x200               & 610   & 0.0226 & 0.0127 & \cellcolor{atomictangerine}  78.69 $\uparrow$ & 0.0137 & \cellcolor{atomictangerine} 65.86 $\uparrow$ & 0.0120 & \cellcolor{atomictangerine}  88.82 $\uparrow$ & 0.0176 & 0.0083 & \cellcolor{atomictangerine} 113.30 $\uparrow$ & 0.0092 & \cellcolor{atomictangerine} 90.91 $\uparrow$ & 0.0080 & \cellcolor{atomictangerine} 119.40 $\uparrow$ &   6.04 $\pm$ 0.85 \\
        & 4x100               & 410   & 0.0198 & 0.0094 & \cellcolor{atomictangerine} 111.29 $\uparrow$ & 0.0106 & \cellcolor{atomictangerine} 88.06 $\uparrow$ & 0.0091 & \cellcolor{atomictangerine} 118.98 $\uparrow$ & 0.0146 & 0.0081 & \cellcolor{atomictangerine}  79.75 $\uparrow$ & 0.0101 & \cellcolor{atomictangerine} 45.63 $\uparrow$ & 0.0097 & \cellcolor{atomictangerine}  51.19 $\uparrow$ &   1.84 $\pm$ 0.00 \\
        & 3x700               & 2,110 & 0.0688 & 0.0364 & \cellcolor{atomictangerine}  89.11 $\uparrow$ & 0.0391 & \cellcolor{atomictangerine} 76.09 $\uparrow$ & 0.0384 & \cellcolor{atomictangerine}  78.93 $\uparrow$ & 0.0512 & 0.0377 & \cellcolor{atomictangerine}  35.99 $\uparrow$ & 0.0386 & \cellcolor{atomictangerine} 32.64 $\uparrow$ & 0.0406 & \cellcolor{atomictangerine}  26.33 $\uparrow$ & 103.08 $\pm$ 0.06 \\
        & 3x400               & 1,210 & 0.0484 & 0.0256 & \cellcolor{atomictangerine}  89.00 $\uparrow$ & 0.0295 & \cellcolor{atomictangerine} 64.15 $\uparrow$ & 0.0256 & \cellcolor{atomictangerine}  88.85 $\uparrow$ & 0.0441 & 0.0245 & \cellcolor{atomictangerine}  79.67 $\uparrow$ & 0.0291 & \cellcolor{atomictangerine} 51.58 $\uparrow$ & 0.0253 & \cellcolor{atomictangerine}  74.69 $\uparrow$ &  35.79 $\pm$ 0.05 \\
        & 3x100               & 310   & 0.0429 & 0.0273 & \cellcolor{apricot}          57.37 $\uparrow$ & 0.0295 & \cellcolor{apricot}         45.33 $\uparrow$ & 0.0269 & \cellcolor{apricot}          59.42 $\uparrow$ & 0.0435 & 0.0243 & \cellcolor{atomictangerine}  78.79 $\uparrow$ & 0.0270 & \cellcolor{atomictangerine} 61.35 $\uparrow$ & 0.0256 & \cellcolor{atomictangerine}  69.66 $\uparrow$ &   1.26 $\pm$ 0.00 \\
        & 5x100               & 510   & 0.0462 & 0.0299 & \cellcolor{apricot}          54.64 $\uparrow$ & 0.0315 & \cellcolor{apricot}         46.45 $\uparrow$ & 0.0300 & \cellcolor{apricot}          54.12 $\uparrow$ & 0.0400 & 0.0372 & \cellcolor{apricot}           7.58 $\uparrow$ & 0.0378 & \cellcolor{apricot}          5.96 $\uparrow$ & 0.0378 & \cellcolor{apricot}           5.87 $\uparrow$ &   2.43 $\pm$ 0.01 \\
        & 3x50                & 160   & 0.0170 & 0.0135 & \cellcolor{apricot}          25.94 $\uparrow$ & 0.0138 & \cellcolor{apricot}         23.30 $\uparrow$ & 0.0134 & \cellcolor{apricot}          27.26 $\uparrow$ & 0.0111 & 0.0073 & \cellcolor{atomictangerine}  53.52 $\uparrow$ & 0.0077 & \cellcolor{atomictangerine} 44.73 $\uparrow$ & 0.0071 & \cellcolor{atomictangerine}  56.32 $\uparrow$ &   0.43 $\pm$ 0.00 \\
        & $\mbox{CNN}_{3-2}$  & 3,378 & 0.0136 & 0.0134 & \cellcolor{antiquewhite}      1.49 $\uparrow$ & 0.0134 & \cellcolor{antiquewhite}     1.49 $\uparrow$ & 0.0134 & \cellcolor{antiquewhite}      1.56 $\uparrow$ & 0.0106 & 0.0102 & \cellcolor{apricot}           4.01 $\uparrow$ & 0.0102 & \cellcolor{apricot}          4.01 $\uparrow$ & 0.0102 & \cellcolor{apricot}           4.21 $\uparrow$ &   0.31 $\pm$ 0.06 \\
        & $\mbox{CNN}_{3-4}$  & 6,746 & 0.0102 & 0.0101 & \cellcolor{antiquewhite}      0.89 $\uparrow$ & 0.0101 & \cellcolor{antiquewhite}     0.99 $\uparrow$ & 0.0101 & \cellcolor{antiquewhite}      0.99 $\uparrow$ & 0.0062 & 0.0061 & \cellcolor{antiquewhite}      1.80 $\uparrow$ & 0.0061 & \cellcolor{antiquewhite}     1.80 $\uparrow$ & 0.0061 & \cellcolor{antiquewhite}      1.96 $\uparrow$ &   0.59 $\pm$ 0.12 \\
        & $\mbox{CNN}_{3-5}$  & 8,430 & 0.0094 & 0.0093 & \cellcolor{antiquewhite}      0.97 $\uparrow$ & 0.0093 & \cellcolor{antiquewhite}     0.97 $\uparrow$ & 0.0093 & \cellcolor{antiquewhite}      0.97 $\uparrow$ & 0.0066 & 0.0065 & \cellcolor{antiquewhite}      1.86 $\uparrow$ & 0.0065 & \cellcolor{antiquewhite}     1.86 $\uparrow$ & 0.0065 & \cellcolor{antiquewhite}      2.02 $\uparrow$ &   0.75 $\pm$ 0.15 \\
        \hline
    \end{tabular}
    \label{all resutls on Tanh models}
\end{table*}

% Arctan average + std
\begin{table*}[ht]
    \scriptsize 
    \def\arraystretch{1}
    \setlength{\tabcolsep}{4pt}
    \caption{Comparison on non-negative Arctan networks.}
    \vspace{-2mm}
    \centering
    \begin{tabular}{|c|c|r|r|r|r|r|r|r|r|r|r|r|r|r|r|r|r|}
        \hline 
        \multirow{3}{*}{\textbf{Dataset}} & \multirow{3}{*}{\textbf{Model}} & \multirow{3}{*}{\textbf{\#Neur.}} &   \multicolumn{14}{c|}{\textbf{Certified Lower Bound}} & \multirow{3}{*}{\textbf{Time (s)}}  \\
        \hhline{~~~--------------~}
        & & & \multicolumn{7}{c|}{\textbf{Average}} & \multicolumn{7}{c|}{\textbf{ Standard Deviation}} & \\
        \hhline{~~~--------------~}
        & & & \textsc{NW} & \textsc{DC} &  Impr. (\%) & \textsc{VN} &  Impr. (\%) & \textsc{RV} &  Impr. (\%) & \textsc{NW} & \textsc{DC} &  Impr. (\%) & \textsc{VN} &  Impr. (\%) & \textsc{RV} &  Impr. (\%) &  \\
        \hline
        \hline
        \multirow{11}{*}{MNIST} 
        & $\mbox{CNN}_{6-5}$ & 12,300 & 0.0484 & 0.0262 & \cellcolor{atomictangerine} 84.78 $\uparrow$ & 0.0367 & \cellcolor{atomictangerine} 32.11 $\uparrow$ & 0.0325 & \cellcolor{atomictangerine} 49.15 $\uparrow$ & 0.0189 & 0.0072 & \cellcolor{atomictangerine} 162.78 $\uparrow$ & 0.0124 & \cellcolor{atomictangerine} 52.21 $\uparrow$ & 0.0098 & \cellcolor{atomictangerine} 92.67 $\uparrow$ &   5.57 $\pm$0.05    \\ 
        & 5x100              & 510    & 0.0020 & 0.0012 & \cellcolor{atomictangerine} 73.28 $\uparrow$ & 0.0015 & \cellcolor{atomictangerine} 29.68 $\uparrow$ & 0.0014 & \cellcolor{atomictangerine} 41.55 $\uparrow$ & 0.0012 & 0.0005 & \cellcolor{atomictangerine} 144.00 $\uparrow$ & 0.0008 & \cellcolor{atomictangerine} 48.78 $\uparrow$ & 0.0007 & \cellcolor{atomictangerine} 67.12 $\uparrow$ &   4.36 $\pm$0.02  \\ 
        & 3x700              & 2,110  & 0.0022 & 0.0015 & \cellcolor{atomictangerine} 47.06 $\uparrow$ & 0.0019 & \cellcolor{atomictangerine} 20.32 $\uparrow$ & 0.0017 & \cellcolor{atomictangerine} 29.31 $\uparrow$ & 0.0012 & 0.0007 & \cellcolor{apricot}          58.11 $\uparrow$ & 0.0009 & \cellcolor{atomictangerine} 28.57 $\uparrow$ & 0.0008 & \cellcolor{atomictangerine} 42.68 $\uparrow$ & 117.69 $\pm$0.24  \\ 
        & 3x50               & 160    & 0.0051 & 0.0037 & \cellcolor{apricot}         36.83 $\uparrow$ & 0.0043 & \cellcolor{apricot}         18.10 $\uparrow$ & 0.0040 & \cellcolor{atomictangerine} 26.62 $\uparrow$ & 0.0026 & 0.0017 & \cellcolor{apricot}          51.15 $\uparrow$ & 0.0020 & \cellcolor{atomictangerine} 32.83 $\uparrow$ & 0.0017 & \cellcolor{atomictangerine} 52.91 $\uparrow$ &   0.14 $\pm$0.00  \\ 
        & 3x400              & 1,210  & 0.0024 & 0.0017 & \cellcolor{apricot}         39.64 $\uparrow$ & 0.0020 & \cellcolor{apricot}         17.41 $\uparrow$ & 0.0019 & \cellcolor{atomictangerine} 24.21 $\uparrow$ & 0.0011 & 0.0007 & \cellcolor{apricot}          61.76 $\uparrow$ & 0.0009 & \cellcolor{atomictangerine} 26.44 $\uparrow$ & 0.0008 & \cellcolor{atomictangerine} 37.50 $\uparrow$ &  39.59 $\pm$0.13  \\ 
        & 3x200              & 610    & 0.0028 & 0.0021 & \cellcolor{apricot}         36.23 $\uparrow$ & 0.0024 & \cellcolor{apricot}         16.53 $\uparrow$ & 0.0023 & \cellcolor{atomictangerine} 22.08 $\uparrow$ & 0.0012 & 0.0008 & \cellcolor{apricot}          50.00 $\uparrow$ & 0.0010 & \cellcolor{atomictangerine} 28.12 $\uparrow$ & 0.0009 & \cellcolor{atomictangerine} 41.38 $\uparrow$ &  10.68 $\pm$0.04  \\ 
        & 3x100              & 310    & 0.0031 & 0.0023 & \cellcolor{apricot}         32.76 $\uparrow$ & 0.0027 & \cellcolor{apricot}         14.93 $\uparrow$ & 0.0026 & \cellcolor{apricot}         19.84 $\uparrow$ & 0.0013 & 0.0009 & \cellcolor{antiquewhite}     42.05 $\uparrow$ & 0.0010 & \cellcolor{atomictangerine} 22.55 $\uparrow$ & 0.0009 & \cellcolor{apricot}         31.58 $\uparrow$ &   2.21 $\pm$0.03  \\ 
        & $\mbox{CNN}_{5-5}$ & 10,680 & 0.0218 & 0.0162 & \cellcolor{apricot}         34.24 $\uparrow$ & 0.0196 & \cellcolor{apricot}         11.22 $\uparrow$ & 0.0191 & \cellcolor{apricot}         14.02 $\uparrow$ & 0.0069 & 0.0043 & \cellcolor{apricot}          62.38 $\uparrow$ & 0.0059 & \cellcolor{apricot}         18.80 $\uparrow$ & 0.0055 & \cellcolor{apricot}         26.59 $\uparrow$ &   2.79 $\pm$0.05 \\
        & $\mbox{CNN}_{3-2}$ & 2,514  & 0.0138 & 0.0103 & \cellcolor{apricot}         34.66 $\uparrow$ & 0.0127 & \cellcolor{antiquewhite}     8.90 $\uparrow$ & 0.0124 & \cellcolor{apricot}         11.08 $\uparrow$ & 0.0042 & 0.0027 & \cellcolor{apricot}          56.88 $\uparrow$ & 0.0037 & \cellcolor{apricot}         12.83 $\uparrow$ & 0.0036 & \cellcolor{antiquewhite}    17.88 $\uparrow$ &   0.16 $\pm$0.01 \\
        & $\mbox{CNN}_{4-5}$ & 8,680  & 0.0203 & 0.0158 & \cellcolor{antiquewhite}    28.08 $\uparrow$ & 0.0186 & \cellcolor{antiquewhite}     9.05 $\uparrow$ & 0.0182 & \cellcolor{apricot}         11.20 $\uparrow$ & 0.0068 & 0.0045 & \cellcolor{apricot}          52.00 $\uparrow$ & 0.0059 & \cellcolor{apricot}         15.15 $\uparrow$ & 0.0057 & \cellcolor{antiquewhite}    20.85 $\uparrow$ &   1.12 $\pm$0.03  \\ 
        & $\mbox{CNN}_{3-4}$ & 5,018  & 0.0162 & 0.0130 & \cellcolor{antiquewhite}    24.71 $\uparrow$ & 0.0152 & \cellcolor{antiquewhite}     6.51 $\uparrow$ & 0.0151 & \cellcolor{antiquewhite}     7.57 $\uparrow$ & 0.0053 & 0.0037 & \cellcolor{antiquewhite}     42.28 $\uparrow$ & 0.0048 & \cellcolor{antiquewhite}     9.83 $\uparrow$ & 0.0047 & \cellcolor{antiquewhite}    12.66 $\uparrow$ &   0.28 $\pm$0.02  \\ 
        \hline
        \multirow{10}{*}{\makecell{Fashion \\ MNIST} }
        & 4x100              & 410    & 0.0179 & 0.0084 & \cellcolor{atomictangerine} 112.98 $\uparrow$ & 0.0108 & \cellcolor{atomictangerine} 66.26 $\uparrow$ & 0.0092 & \cellcolor{atomictangerine} 94.88 $\uparrow$ & 0.0279 & 0.0118 & \cellcolor{atomictangerine} 136.19 $\uparrow$ & 0.0150 & \cellcolor{atomictangerine} 85.31 $\uparrow$ & 0.0134 & \cellcolor{atomictangerine} 107.68 $\uparrow$ & 3.20 $\pm$0.02   \\ 
        & 3x100              & 310    & 0.0120 & 0.0078 & \cellcolor{atomictangerine}  53.98 $\uparrow$ & 0.0091 & \cellcolor{atomictangerine} 31.07 $\uparrow$ & 0.0081 & \cellcolor{apricot}         47.17 $\uparrow$ & 0.0127 & 0.0076 & \cellcolor{apricot}          66.80 $\uparrow$ & 0.0089 & \cellcolor{apricot}         42.41 $\uparrow$ & 0.0078 & \cellcolor{apricot}          63.14 $\uparrow$ & 2.13 $\pm$0.01 \\ 
        & 3x200              & 610    & 0.0097 & 0.0067 & \cellcolor{atomictangerine}  44.68 $\uparrow$ & 0.0077 & \cellcolor{atomictangerine} 25.49 $\uparrow$ & 0.0071 & \cellcolor{apricot}         36.49 $\uparrow$ & 0.0107 & 0.0073 & \cellcolor{apricot}          45.37 $\uparrow$ & 0.0081 & \cellcolor{apricot}         31.89 $\uparrow$ & 0.0070 & \cellcolor{apricot}          52.43 $\uparrow$ & 8.90 $\pm$1.88 \\
        & 2x100              & 210    & 0.0105 & 0.0082 & \cellcolor{antiquewhite}     27.95 $\uparrow$ & 0.0091 & \cellcolor{apricot}         16.35 $\uparrow$ & 0.0086 & \cellcolor{apricot}         23.01 $\uparrow$ & 0.0095 & 0.0069 & \cellcolor{antiquewhite}     37.19 $\uparrow$ & 0.0074 & \cellcolor{apricot}         28.11 $\uparrow$ & 0.0069 & \cellcolor{antiquewhite}     37.99 $\uparrow$ & 1.08 $\pm$0.01 \\ 
        & $\mbox{CNN}_{6-5}$ & 12,300 & 0.0324 & 0.0228 & \cellcolor{atomictangerine}  42.16 $\uparrow$ & 0.0282 & \cellcolor{apricot}         14.95 $\uparrow$ & 0.0258 & \cellcolor{apricot}         25.88 $\uparrow$ & 0.0228 & 0.0157 & \cellcolor{apricot}          45.14 $\uparrow$ & 0.0180 & \cellcolor{apricot}         27.19 $\uparrow$ & 0.0134 & \cellcolor{apricot}          69.87 $\uparrow$ & 5.63 $\pm$0.16 \\
        & 2x200              & 410    & 0.0086 & 0.0069 & \cellcolor{antiquewhite}     24.82 $\uparrow$ & 0.0076 & \cellcolor{apricot}         13.97 $\uparrow$ & 0.0073 & \cellcolor{antiquewhite}    18.82 $\uparrow$ & 0.0079 & 0.0062 & \cellcolor{antiquewhite}     28.04 $\uparrow$ & 0.0065 & \cellcolor{apricot}         20.98 $\uparrow$ & 0.0061 & \cellcolor{antiquewhite}     29.08 $\uparrow$ & 4.50 $\pm$0.92 \\ 
        & $\mbox{CNN}_{3-2}$ & 2,514  & 0.0266 & 0.0190 & \cellcolor{atomictangerine}  40.32 $\uparrow$ & 0.0235 & \cellcolor{apricot}         13.34 $\uparrow$ & 0.0225 & \cellcolor{antiquewhite}    17.97 $\uparrow$ & 0.0179 & 0.0105 & \cellcolor{apricot}          71.41 $\uparrow$ & 0.0127 & \cellcolor{apricot}         41.29 $\uparrow$ & 0.0115 & \cellcolor{apricot}          55.37 $\uparrow$ & 0.16 $\pm$0.01 \\
        & $\mbox{CNN}_{4-5}$ & 8,680  & 0.0241 & 0.0176 & \cellcolor{apricot}          37.07 $\uparrow$ & 0.0217 & \cellcolor{apricot}         10.82 $\uparrow$ & 0.0207 & \cellcolor{antiquewhite}    16.06 $\uparrow$ & 0.0117 & 0.0086 & \cellcolor{antiquewhite}     36.96 $\uparrow$ & 0.0099 & \cellcolor{antiquewhite}    17.93 $\uparrow$ & 0.0090 & \cellcolor{antiquewhite}     30.40 $\uparrow$ & 1.14 $\pm$0.06 \\ 
        & $\mbox{CNN}_{5-5}$ & 10,680 & 0.0266 & 0.0201 & \cellcolor{apricot}          32.22 $\uparrow$ & 0.0241 & \cellcolor{apricot}         10.50 $\uparrow$ & 0.0228 & \cellcolor{antiquewhite}    16.95 $\uparrow$ & 0.0121 & 0.0097 & \cellcolor{antiquewhite}     24.77 $\uparrow$ & 0.0107 & \cellcolor{antiquewhite}    13.88 $\uparrow$ & 0.0099 & \cellcolor{antiquewhite}     22.75 $\uparrow$ & 2.81 $\pm$0.08 \\ 
        & $\mbox{CNN}_{3-4}$ & 5,018  & 0.0258 & 0.0205 & \cellcolor{antiquewhite}     26.05 $\uparrow$ & 0.0237 & \cellcolor{antiquewhite}     8.64 $\uparrow$ & 0.0231 & \cellcolor{antiquewhite}    11.60 $\uparrow$ & 0.0158 & 0.0114 & \cellcolor{antiquewhite}     38.50 $\uparrow$ & 0.0135 & \cellcolor{antiquewhite}    17.61 $\uparrow$ & 0.0124 & \cellcolor{antiquewhite}     27.97 $\uparrow$ & 0.29 $\pm$0.02 \\ 
        \hline
        \multirow{11}{*}{CIFAR10} 
        & 4x100              & 410   & 0.0173 & 0.0145 & \cellcolor{atomictangerine} 19.30 $\uparrow$ & 0.0158 & \cellcolor{atomictangerine} 9.70 $\uparrow$ & 0.0156 & \cellcolor{atomictangerine} 11.25 $\uparrow$ & 0.0128 & 0.0100 & \cellcolor{atomictangerine} 27.86 $\uparrow$ & 0.0110 & \cellcolor{atomictangerine} 16.11 $\uparrow$ & 0.0107 & \cellcolor{atomictangerine} 19.59 $\uparrow$ &   1.83 $\pm$0.01   \\ 
        & 5x100              & 510   & 0.0248 & 0.0213 & \cellcolor{atomictangerine} 16.67 $\uparrow$ & 0.0229 & \cellcolor{atomictangerine} 8.28 $\uparrow$ & 0.0227 & \cellcolor{atomictangerine}  9.38 $\uparrow$ & 0.0195 & 0.0167 & \cellcolor{apricot}         17.16 $\uparrow$ & 0.0176 & \cellcolor{atomictangerine} 11.16 $\uparrow$ & 0.0173 & \cellcolor{atomictangerine} 13.09 $\uparrow$ &   2.42 $\pm$0.00   \\ 
        & 3x50               & 160   & 0.0186 & 0.0158 & \cellcolor{atomictangerine} 17.17 $\uparrow$ & 0.0171 & \cellcolor{atomictangerine} 8.41 $\uparrow$ & 0.0169 & \cellcolor{atomictangerine}  9.82 $\uparrow$ & 0.0112 & 0.0092 & \cellcolor{atomictangerine} 22.04 $\uparrow$ & 0.0099 & \cellcolor{atomictangerine} 13.42 $\uparrow$ & 0.0096 & \cellcolor{atomictangerine} 17.45 $\uparrow$ &   0.43 $\pm$0.00   \\ 
        & 6x100              & 610   & 0.0171 & 0.0148 & \cellcolor{atomictangerine} 15.64 $\uparrow$ & 0.0159 & \cellcolor{atomictangerine} 7.69 $\uparrow$ & 0.0158 & \cellcolor{atomictangerine}  8.44 $\uparrow$ & 0.0157 & 0.0139 & \cellcolor{apricot}         12.72 $\uparrow$ & 0.0144 & \cellcolor{apricot}          9.19 $\uparrow$ & 0.0142 & \cellcolor{apricot}         10.42 $\uparrow$ &   3.03 $\pm$0.01   \\ 
        & 3x100              & 310   & 0.0138 & 0.0119 & \cellcolor{atomictangerine} 15.71 $\uparrow$ & 0.0127 & \cellcolor{atomictangerine} 8.17 $\uparrow$ & 0.0126 & \cellcolor{atomictangerine}  9.29 $\uparrow$ & 0.0113 & 0.0094 & \cellcolor{atomictangerine} 20.53 $\uparrow$ & 0.0100 & \cellcolor{atomictangerine} 13.30 $\uparrow$ & 0.0098 & \cellcolor{atomictangerine} 15.85 $\uparrow$ &   1.26 $\pm$0.01   \\ 
        & 3x200              & 610   & 0.0107 & 0.0095 & \cellcolor{apricot}         12.92 $\uparrow$ & 0.0101 & \cellcolor{apricot}         6.23 $\uparrow$ & 0.0101 & \cellcolor{apricot}          6.86 $\uparrow$ & 0.0073 & 0.0061 & \cellcolor{apricot}         19.12 $\uparrow$ & 0.0066 & \cellcolor{atomictangerine} 10.12 $\uparrow$ & 0.0065 & \cellcolor{apricot}         11.47 $\uparrow$ &   6.33 $\pm$0.01   \\ 
        & 3x700              & 2,110 & 0.0079 & 0.0070 & \cellcolor{apricot}         13.12 $\uparrow$ & 0.0075 & \cellcolor{apricot}         5.73 $\uparrow$ & 0.0074 & \cellcolor{apricot}          6.59 $\uparrow$ & 0.0053 & 0.0044 & \cellcolor{apricot}         19.33 $\uparrow$ & 0.0049 & \cellcolor{apricot}          9.03 $\uparrow$ & 0.0048 & \cellcolor{apricot}         10.63 $\uparrow$ & 102.47 $\pm$0.14   \\ 
        & 3x400              & 1,210 & 0.0105 & 0.0094 & \cellcolor{apricot}         12.06 $\uparrow$ & 0.0099 & \cellcolor{apricot}         5.74 $\uparrow$ & 0.0099 & \cellcolor{apricot}          6.38 $\uparrow$ & 0.0068 & 0.0058 & \cellcolor{apricot}         17.76 $\uparrow$ & 0.0062 & \cellcolor{apricot}          9.46 $\uparrow$ & 0.0062 & \cellcolor{apricot}         10.88 $\uparrow$ &  37.85 $\pm$0.02   \\ 
        & $\mbox{CNN}_{3-2}$ & 3,378 & 0.0115 & 0.0113 & \cellcolor{antiquewhite}     1.77 $\uparrow$ & 0.0114 & \cellcolor{antiquewhite}    0.88 $\uparrow$ & 0.0114 & \cellcolor{antiquewhite}     0.88 $\uparrow$ & 0.0078 & 0.0076 & \cellcolor{antiquewhite}     3.02 $\uparrow$ & 0.0077 & \cellcolor{antiquewhite}     1.55 $\uparrow$ & 0.0077 & \cellcolor{antiquewhite}     1.69 $\uparrow$ &   0.29 $\pm$0.02  \\ 
        & $\mbox{CNN}_{3-4}$ & 6,746 & 0.0080 & 0.0079 & \cellcolor{antiquewhite}     1.53 $\uparrow$ & 0.0079 & \cellcolor{antiquewhite}    0.88 $\uparrow$ & 0.0079 & \cellcolor{antiquewhite}     0.88 $\uparrow$ & 0.0051 & 0.0049 & \cellcolor{antiquewhite}     2.84 $\uparrow$ & 0.0050 & \cellcolor{antiquewhite}     1.40 $\uparrow$ & 0.0050 & \cellcolor{antiquewhite}     1.60 $\uparrow$ &   0.55 $\pm$0.03  \\ 
        & $\mbox{CNN}_{3-5}$ & 8,430 & 0.0095 & 0.0094 & \cellcolor{antiquewhite}     1.39 $\uparrow$ & 0.0094 & \cellcolor{antiquewhite}    0.64 $\uparrow$ & 0.0094 & \cellcolor{antiquewhite}     0.64 $\uparrow$ & 0.0060 & 0.0059 & \cellcolor{antiquewhite}     2.72 $\uparrow$ & 0.0060 & \cellcolor{antiquewhite}     1.17 $\uparrow$ & 0.0060 & \cellcolor{antiquewhite}     1.34 $\uparrow$ &   0.70 $\pm$0.03  \\ 
        \hline
    \end{tabular}
    \label{all resutls on Arctan models}
\end{table*}

% alg sigmoid + tanh
\begin{table*}[ht]
    \footnotesize  
    \def\arraystretch{1}
    \setlength{\tabcolsep}{4pt}
    \caption{Performance comparison of Alg. 1 with \textsc{DeepCert} (DC), \textsc{VeriNet} (VN), and \textsc{RobustVerifier} (RV) on 1-hidden-layer Sigmoid and Tanh networks with mixed weights.}
    \centering
    \resizebox{\textwidth}{!}{
    \begin{tabular}{|c|c|r|r|r|r|r|r|r|r|r|r|r|r|r|r|r|r|r|}
        \hline 
        \multirow{3}{*}{\textbf{Arch.}} & \multirow{3}{*}{\textbf{Model}} & \multirow{3}{*}{\textbf{$\sigma$}} &   \multicolumn{14}{c|}{\textbf{Certified Lower Bound}} & \multicolumn{2}{c|}{\textbf{Time (s)}}  \\
        \hhline{~~~----------------}
        & & &  \multicolumn{7}{c|}{\textbf{Average}} & \multicolumn{7}{c|}{\textbf{Standard Deviation}} &     \multirow{2}{*}{\textsc{Alg.1}} & \multirow{2}{*}{\textsc{Others}} \\
        \hhline{~~~--------------~~}
        & & & \textsc{Alg.1} & \textsc{DC} &  Impr. (\%) & \textsc{VN} &  Impr. (\%) & \textsc{RV} & Impr. (\%) &  \textsc{Alg.1} & \textsc{DC} &  Impr. (\%) & \textsc{VN} &  Impr. (\%) & \textsc{RV} & Impr. (\%) &  & \\
        \hline
       \hline
        \multirow{20}{*}{CNN} 
        & \multirow{2}{*}{$\mbox{CNN}_{2-1-5}$$^\ast$} & Sig. & 0.0706 & 0.0437 & 61.39 $\uparrow$ & 0.0435 &  62.32 $\uparrow$ & 0.0428 & 64.90 $\uparrow$ & 0.0306 & 0.0145 & 110.32 $\uparrow$  & 0.0143 & 114.30 $\uparrow$  & 0.0137 & 122.57 $\uparrow$  & 0.56  & 0.06 $\pm$0.01    \\ 
        &                                              & Tanh & 0.0489 & 0.0286 & 71.06 $\uparrow$ & 0.0285 &  71.54 $\uparrow$ & 0.0277 & 76.74 $\uparrow$ & 0.0227 & 0.0132 &  71.32 $\uparrow$  & 0.0132 &  72.10 $\uparrow$  & 0.0121 &  87.00 $\uparrow$  & 0.59  & 0.06 $\pm$0.01    \\ 
        & \multirow{2}{*}{$\mbox{CNN}_{2-2-5}$$^\ast$} & Sig. & 0.0749 & 0.0541 & 38.29 $\uparrow$ & 0.0539 &  38.98 $\uparrow$ & 0.0521 & 43.79 $\uparrow$ & 0.0312 & 0.0208 &  49.97 $\uparrow$  & 0.0207 &  51.14 $\uparrow$  & 0.0187 &  66.63 $\uparrow$  & 1.19  & 0.08 $\pm$0.01    \\ 
        &                                              & Tanh & 0.0449 & 0.0332 & 35.03 $\uparrow$ & 0.0331 &  35.48 $\uparrow$ & 0.0321 & 39.91 $\uparrow$ & 0.0204 & 0.0123 &  66.19 $\uparrow$  & 0.0122 &  67.15 $\uparrow$  & 0.0113 &  80.64 $\uparrow$  & 1.22  & 0.08 $\pm$0.01    \\ 
        & \multirow{2}{*}{$\mbox{CNN}_{2-3-5}$$^\ast$} & Sig. & 0.0689 & 0.0526 & 31.03 $\uparrow$ & 0.0524 &  31.48 $\uparrow$ & 0.0503 & 37.03 $\uparrow$ & 0.0279 & 0.0197 &  41.45 $\uparrow$  & 0.0196 &  41.88 $\uparrow$  & 0.0176 &  58.60 $\uparrow$  & 1.75  & 0.09 $\pm$0.02    \\ 
        &                                              & Tanh & 0.0444 & 0.0327 & 35.84 $\uparrow$ & 0.0326 &  36.05 $\uparrow$ & 0.0317 & 40.26 $\uparrow$ & 0.0182 & 0.0121 &  50.90 $\uparrow$  & 0.0120 &  51.15 $\uparrow$  & 0.0111 &  63.94 $\uparrow$  & 1.99  & 0.10 $\pm$0.02    \\ 
        & \multirow{2}{*}{$\mbox{CNN}_{2-4-5}$$^\ast$} & Sig. & 0.0915 & 0.0647 & 41.39 $\uparrow$ & 0.0645 &  41.81 $\uparrow$ & 0.0621 & 47.43 $\uparrow$ & 0.0405 & 0.0233 &  73.73 $\uparrow$  & 0.0232 &  74.55 $\uparrow$  & 0.0210 &  92.76 $\uparrow$  & 2.54  & 0.11 $\pm$0.02    \\ 
        &                                              & Tanh & 0.0446 & 0.0272 & 63.87 $\uparrow$ & 0.0272 &  64.06 $\uparrow$ & 0.0266 & 67.95 $\uparrow$ & 0.0184 & 0.0103 &  78.91 $\uparrow$  & 0.0103 &  78.91 $\uparrow$  & 0.0096 &  91.81 $\uparrow$  & 2.38  & 0.11 $\pm$0.03    \\ 
        & \multirow{2}{*}{$\mbox{CNN}_{2-5-3}$$^\ast$} & Sig. & 0.0682 & 0.0477 & 42.81 $\uparrow$ & 0.0476 &  43.11 $\uparrow$ & 0.0471 & 44.81 $\uparrow$ & 0.0349 & 0.0182 &  91.53 $\uparrow$  & 0.0182 &  92.27 $\uparrow$  & 0.0176 &  98.50 $\uparrow$  & 2.57  & 0.09 $\pm$0.03    \\ 
        &                                              & Tanh & 0.0289 & 0.0192 & 50.28 $\uparrow$ & 0.0192 &  50.52 $\uparrow$ & 0.0191 & 51.62 $\uparrow$ & 0.0134 & 0.0073 &  85.10 $\uparrow$  & 0.0072 &  85.86 $\uparrow$  & 0.0071 &  89.81 $\uparrow$  & 2.93  & 0.10 $\pm$0.03    \\ 
        & \multirow{2}{*}{$\mbox{CNN}_{2-1-5}$$^+$}    & Sig. & 0.1013 & 0.0877 & 15.47 $\uparrow$ & 0.0876 &  15.56 $\uparrow$ & 0.0835 & 21.28 $\uparrow$ & 0.0506 & 0.0385 &  31.58 $\uparrow$  & 0.0386 &  31.24 $\uparrow$  & 0.0348 &  45.29 $\uparrow$  & 0.43  & 0.06 $\pm$0.01    \\ 
        &                                              & Tanh & 0.0658 & 0.0555 & 18.58 $\uparrow$ & 0.0554 &  18.82 $\uparrow$ & 0.0525 & 25.36 $\uparrow$ & 0.0349 & 0.0253 &  37.89 $\uparrow$  & 0.0254 &  37.62 $\uparrow$  & 0.0227 &  54.06 $\uparrow$  & 0.57  & 0.06 $\pm$0.01    \\ 
        & \multirow{2}{*}{$\mbox{CNN}_{2-2-5}$$^+$}    & Sig. & 0.1045 & 0.0823 & 26.99 $\uparrow$ & 0.0822 &  27.05 $\uparrow$ & 0.0777 & 34.47 $\uparrow$ & 0.0574 & 0.0353 &  62.46 $\uparrow$  & 0.0355 &  61.59 $\uparrow$  & 0.0311 &  84.39 $\uparrow$  & 0.80  & 0.08 $\pm$0.01    \\ 
        &                                              & Tanh & 0.0629 & 0.0536 & 17.39 $\uparrow$ & 0.0535 &  17.52 $\uparrow$ & 0.0509 & 23.62 $\uparrow$ & 0.0354 & 0.0260 &  36.39 $\uparrow$  & 0.0260 &  36.13 $\uparrow$  & 0.0233 &  51.69 $\uparrow$  & 0.71  & 0.08 $\pm$0.02    \\ 
        & \multirow{2}{*}{$\mbox{CNN}_{2-3-5}$$^+$}    & Sig. & 0.1036 & 0.0843 & 22.77 $\uparrow$ & 0.0843 &  22.79 $\uparrow$ & 0.0811 & 27.72 $\uparrow$ & 0.0509 & 0.0371 &  37.06 $\uparrow$  & 0.0374 &  36.07 $\uparrow$  & 0.0344 &  47.89 $\uparrow$  & 1.56  & 0.09 $\pm$0.02    \\ 
        &                                              & Tanh & 0.0699 & 0.0507 & 38.00 $\uparrow$ & 0.0506 &  38.22 $\uparrow$ & 0.0478 & 46.07 $\uparrow$ & 0.0404 & 0.0245 &  65.28 $\uparrow$  & 0.0245 &  65.35 $\uparrow$  & 0.0215 &  88.02 $\uparrow$  & 1.38  & 0.10 $\pm$0.02    \\ 
        & \multirow{2}{*}{$\mbox{CNN}_{2-4-5}$$^+$}    & Sig. & 0.1040 & 0.0814 & 27.76 $\uparrow$ & 0.0814 &  27.81 $\uparrow$ & 0.0784 & 32.73 $\uparrow$ & 0.0587 & 0.0366 &  60.43 $\uparrow$  & 0.0367 &  59.91 $\uparrow$  & 0.0336 &  74.50 $\uparrow$  & 1.80  & 0.11 $\pm$0.02    \\ 
        &                                              & Tanh & 0.0713 & 0.0519 & 37.43 $\uparrow$ & 0.0518 &  37.59 $\uparrow$ & 0.0494 & 44.19 $\uparrow$ & 0.0447 & 0.0246 &  81.51 $\uparrow$  & 0.0246 &  81.81 $\uparrow$  & 0.0219 & 104.05 $\uparrow$  & 1.84  & 0.11 $\pm$0.03    \\ 
        & \multirow{2}{*}{$\mbox{CNN}_{2-5-3}$$^+$}    & Sig. & 0.0922 & 0.0700 & 31.65 $\uparrow$ & 0.0699 &  31.84 $\uparrow$ & 0.0678 & 36.00 $\uparrow$ & 0.0485 & 0.0340 &  42.87 $\uparrow$  & 0.0340 &  42.66 $\uparrow$  & 0.0312 &  55.36 $\uparrow$  & 2.33  & 0.09 $\pm$0.04    \\ 
        &                                              & Tanh & 0.0520 & 0.0336 & 54.99 $\uparrow$ & 0.0335 &  55.18 $\uparrow$ & 0.0325 & 60.25 $\uparrow$ & 0.0389 & 0.0183 & 112.18 $\uparrow$  & 0.0184 & 112.06 $\uparrow$  & 0.0167 & 133.72 $\uparrow$  & 2.78  & 0.10 $\pm$0.04    \\ 
        \hline
        \multirow{20}{*}{FNN} 
        & 1x50$^\ast$  & Sig. & 0.0357 & 0.0275 & 29.65 $\uparrow$ & 0.0279 & 27.98 $\uparrow$ & 0.0235 & 51.64 $\uparrow$ & 0.0121 & 0.0082 &  47.17 $\uparrow$ & 0.0085 &  41.97 $\uparrow$ & 0.0062 &  93.70 $\uparrow$ & 0.33 & 0.03 $\pm$0.00    \\ 
        &              & Tanh & 0.0241 & 0.0197 & 22.49 $\uparrow$ & 0.0199 & 21.25 $\uparrow$ & 0.0167 & 44.60 $\uparrow$ & 0.0077 & 0.0056 &  36.69 $\uparrow$ & 0.0058 &  32.44 $\uparrow$ & 0.0042 &  82.26 $\uparrow$ & 0.29 & 0.03 $\pm$0.00    \\ 
        & 1x100$^\ast$ & Sig. & 0.0324 & 0.0247 & 31.15 $\uparrow$ & 0.0250 & 29.73 $\uparrow$ & 0.0215 & 50.72 $\uparrow$ & 0.0104 & 0.0064 &  63.11 $\uparrow$ & 0.0066 &  56.72 $\uparrow$ & 0.0050 & 108.97 $\uparrow$ & 0.40 & 0.04 $\pm$0.00    \\ 
        &              & Tanh & 0.0219 & 0.0177 & 23.47 $\uparrow$ & 0.0179 & 22.23 $\uparrow$ & 0.0152 & 43.94 $\uparrow$ & 0.0077 & 0.0049 &  57.37 $\uparrow$ & 0.0050 &  52.69 $\uparrow$ & 0.0037 & 107.43 $\uparrow$ & 0.33 & 0.05 $\pm$0.01    \\ 
        & 1x150$^\ast$ & Sig. & 0.0373 & 0.0290 & 28.59 $\uparrow$ & 0.0292 & 27.71 $\uparrow$ & 0.0254 & 46.55 $\uparrow$ & 0.0133 & 0.0083 &  60.57 $\uparrow$ & 0.0085 &  55.68 $\uparrow$ & 0.0064 & 107.16 $\uparrow$ & 0.36 & 0.06 $\pm$0.00    \\ 
        &              & Tanh & 0.0238 & 0.0197 & 20.81 $\uparrow$ & 0.0198 & 20.02 $\uparrow$ & 0.0171 & 39.50 $\uparrow$ & 0.0076 & 0.0055 &  39.20 $\uparrow$ & 0.0056 &  35.96 $\uparrow$ & 0.0042 &  80.43 $\uparrow$ & 0.37 & 0.06 $\pm$0.00    \\ 
        & 1x200$^\ast$ & Sig. & 0.0343 & 0.0260 & 31.83 $\uparrow$ & 0.0261 & 31.17 $\uparrow$ & 0.0230 & 48.94 $\uparrow$ & 0.0117 & 0.0074 &  57.83 $\uparrow$ & 0.0076 &  54.49 $\uparrow$ & 0.0058 & 102.41 $\uparrow$ & 0.38 & 0.07 $\pm$0.00    \\ 
        &              & Tanh & 0.0228 & 0.0183 & 24.64 $\uparrow$ & 0.0184 & 24.03 $\uparrow$ & 0.0162 & 41.08 $\uparrow$ & 0.0077 & 0.0044 &  74.74 $\uparrow$ & 0.0045 &  70.51 $\uparrow$ & 0.0035 & 121.17 $\uparrow$ & 0.38 & 0.07 $\pm$0.01    \\ 
        & 1x250$^\ast$ & Sig. & 0.0364 & 0.0256 & 42.34 $\uparrow$ & 0.0257 & 41.84 $\uparrow$ & 0.0229 & 58.86 $\uparrow$ & 0.0117 & 0.0075 &  56.87 $\uparrow$ & 0.0076 &  54.38 $\uparrow$ & 0.0060 &  95.76 $\uparrow$ & 0.44 & 0.09 $\pm$0.00    \\ 
        &              & Tanh & 0.0240 & 0.0166 & 45.03 $\uparrow$ & 0.0166 & 44.51 $\uparrow$ & 0.0147 & 63.37 $\uparrow$ & 0.0083 & 0.0040 & 107.56 $\uparrow$ & 0.0040 & 103.98 $\uparrow$ & 0.0031 & 163.09 $\uparrow$ & 0.44 & 0.09 $\pm$0.00    \\ 
        & 1x50$^+$     & Sig. & 0.0451 & 0.0351 & 28.50 $\uparrow$ & 0.0356 & 26.73 $\uparrow$ & 0.0292 & 54.40 $\uparrow$ & 0.0193 & 0.0117 &  64.36 $\uparrow$ & 0.0122 &  58.29 $\uparrow$ & 0.0089 & 117.29 $\uparrow$ & 0.27 & 0.03 $\pm$0.00    \\ 
        &              & Tanh & 0.0300 & 0.0235 & 27.89 $\uparrow$ & 0.0237 & 26.54 $\uparrow$ & 0.0193 & 55.70 $\uparrow$ & 0.0109 & 0.0078 &  39.12 $\uparrow$ & 0.0081 &  33.97 $\uparrow$ & 0.0057 &  91.05 $\uparrow$ & 0.27 & 0.03 $\pm$0.00    \\ 
        & 1x100$^+$    & Sig. & 0.0421 & 0.0331 & 27.37 $\uparrow$ & 0.0334 & 26.07 $\uparrow$ & 0.0276 & 52.89 $\uparrow$ & 0.0169 & 0.0119 &  41.35 $\uparrow$ & 0.0123 &  37.77 $\uparrow$ & 0.0088 &  92.66 $\uparrow$ & 0.31 & 0.04 $\pm$0.00    \\ 
        &              & Tanh & 0.0260 & 0.0212 & 22.61 $\uparrow$ & 0.0213 & 21.75 $\uparrow$ & 0.0177 & 46.97 $\uparrow$ & 0.0109 & 0.0066 &  64.37 $\uparrow$ & 0.0068 &  60.72 $\uparrow$ & 0.0048 & 125.28 $\uparrow$ & 0.36 & 0.05 $\pm$0.01    \\ 
        & 1x150$^+$    & Sig. & 0.0460 & 0.0375 & 22.63 $\uparrow$ & 0.0377 & 21.81 $\uparrow$ & 0.0317 & 44.99 $\uparrow$ & 0.0188 & 0.0120 &  56.68 $\uparrow$ & 0.0123 &  53.11 $\uparrow$ & 0.0090 & 108.74 $\uparrow$ & 0.35 & 0.06 $\pm$0.00    \\ 
        &              & Tanh & 0.0303 & 0.0238 & 27.49 $\uparrow$ & 0.0239 & 26.80 $\uparrow$ & 0.0198 & 53.29 $\uparrow$ & 0.0107 & 0.0077 &  38.42 $\uparrow$ & 0.0078 &  35.95 $\uparrow$ & 0.0056 &  90.58 $\uparrow$ & 0.41 & 0.06 $\pm$0.00    \\ 
        & 1x200$^+$    & Sig. & 0.0484 & 0.0378 & 28.11 $\uparrow$ & 0.0380 & 27.24 $\uparrow$ & 0.0318 & 52.02 $\uparrow$ & 0.0181 & 0.0129 &  40.47 $\uparrow$ & 0.0132 &  37.38 $\uparrow$ & 0.0096 &  89.04 $\uparrow$ & 0.37 & 0.07 $\pm$0.00    \\ 
        &              & Tanh & 0.0295 & 0.0228 & 29.45 $\uparrow$ & 0.0229 & 28.77 $\uparrow$ & 0.0192 & 53.64 $\uparrow$ & 0.0107 & 0.0074 &  44.50 $\uparrow$ & 0.0075 &  42.20 $\uparrow$ & 0.0056 &  93.18 $\uparrow$ & 0.40 & 0.07 $\pm$0.00    \\ 
        & 1x250$^+$    & Sig. & 0.0474 & 0.0385 & 23.06 $\uparrow$ & 0.0387 & 22.52 $\uparrow$ & 0.0326 & 45.16 $\uparrow$ & 0.0185 & 0.0126 &  46.20 $\uparrow$ & 0.0128 &  44.26 $\uparrow$ & 0.0095 &  93.20 $\uparrow$ & 0.41 & 0.09 $\pm$0.00    \\ 
        &              & Tanh & 0.0263 & 0.0225 & 16.84 $\uparrow$ & 0.0226 & 16.37 $\uparrow$ & 0.0191 & 37.67 $\uparrow$ & 0.0103 & 0.0066 &  55.62 $\uparrow$ & 0.0067 &  53.30 $\uparrow$ & 0.0049 & 111.66 $\uparrow$ & 0.44 & 0.09 $\pm$0.00    \\ 
        \hline
        
    \end{tabular}}
    \label{alg on models}
\end{table*}
	
\end{document}